\documentclass{article}
\usepackage[T1]{fontenc}
\usepackage{iclr2027_conference,times}
\usepackage{amsmath,amssymb,amsthm,booktabs,graphicx,microtype,adjustbox,array,tabularx}
\usepackage{tikz}
\usetikzlibrary{positioning,arrows.meta}
\theoremstyle{definition}
\newtheorem{proposition}{Proposition}

\usepackage[utf8]{inputenc}
\usepackage{hyperref}
\hypersetup{hidelinks}
\usepackage{url}

\iclrfinalcopy
\title{Better Policies, Worse Verifier Audits}

\author{%
Daniel Cie\'slak\thanks{Corresponding author: \texttt{daniel.cieslak@pg.edu.pl}}\\
{\normalfont\small Gda\'nsk University of Technology}\\
{\normalfont\small Gda\'nsk, Poland}
\And
Jacek \L ubi\'nski\\
{\normalfont\small Gda\'nsk University of Technology}\\
{\normalfont\small Gda\'nsk, Poland}
}

\begin{document}
\maketitle

\begin{abstract}
Verifier stacks increasingly sit inside post-training and test-time compute loops. We study a
specific control loop: a component verifier is evaluated on fresh policy outputs against the
\emph{full} success label and refreshed when that aggregate score falls. For a conjunctive target
$y=F\wedge M$, its on-policy AUC is $(1-q)A_{\rm own}+qA_{\rm cross}$, where
$q=\Pr[F=1\mid y=0]$. As the policy masters $F$, $q$ can rise toward one, removing weight from
$F$-failures and rewarding information about $M$. We call this \emph{on-policy audit collapse}.
We derive a sharp identified set, a complete-erasure threshold, and the exact point at which the
audit can prefer a worse component verifier; its population optimum is the full-outcome predictor,
not the component oracle. An exact oracle exhibits the collapse, a natural multi-label text control
shows inversion can select a head with lower canonical $F$-AUC at the observed failure mixture,
and a prediction-registered ECG study shows that acting on the audit can reduce off-policy $F$-AUC,
transfer the damage to an unseen generator, and increase downstream constraint violations. Preserving
own-failure support prevents the damage. Matching label-complexity bounds and a controlled-mixture
audit give the corresponding repair: audit the component on its own failures, and keep those failures
in the refresh distribution.
\end{abstract}

\section{Introduction}
\label{sec:intro}

Verifier stacks increasingly sit inside the optimization loop. Post-training systems combine
outcome, process, rubric and constraint signals, while test-time compute uses verifier scores to
search, rank and select candidates
\citep{cobbe2021verifiers,lightman2023verify,brown2024monkeys,snell2024scaling,mahmoud2026rubric,
wu2023finegrained,zhou2023modpo,moskovitz2023constrained,liu2026gdpo,bai2026rdpo}. A natural but unsafe control loop is to monitor a component verifier on fresh policy samples
against the full task outcome; if that aggregate score drops, collect new on-policy data and refresh
the verifier. Policy-interaction data are in fact used to refine reward models and iterative RLHF
pipelines repeatedly update preference models on fresh policy data \citep{bai2022training,sun2025ugda}.
This paper shows that this loop can become systematically wrong \emph{because the policy improved}.

Let success require $y=F\wedge M$ and let $s_F$ be a verifier intended to score $F$. Among
failures, some violate $F$ and others already satisfy $F$ but fail only $M$. Pairwise additivity
of AUC gives
\begin{equation}
\operatorname{AUC}(s_F;y)=(1-q)A_{\rm own}+qA_{\rm cross},\qquad
q=\Pr[F=1\mid y=0].
\label{eq:intro_mix}
\end{equation}
Here $A_{\rm own}$ is the audit's \emph{own-failure separability}: target successes versus
$F$-failures. It is not, in general, identical to canonical $F$-AUC; Section~\ref{sec:setup}
separates the two estimands explicitly. $A_{\rm cross}$ compares successes with failures that already satisfy $F$;
an exact $F$-oracle has $A_{\rm cross}=1/2$ by construction. When optimization increases satisfaction of $F$ faster than the remaining conjunct---in particular as $F$ saturates while $M$ remains imperfect---$q$ rises. Consequently, the standard aggregate monitor progressively removes weight from
the verifier's own failure mode and places it on information about the \emph{other} conjunct.
The paradox is therefore directional: \textbf{the more completely the policy masters a component,
the less the usual on-policy audit measures that component.} We call this \emph{on-policy audit
collapse}: optimization removes the very failure mode needed to evaluate the module that learned it.

This creates a concrete failure chain. First, the monitor loses identification: a falling AUC need
not imply a worse component verifier. Second, it can become actively anti-aligned with the audit's intended subproblem: at a calculable
$q^\ast$ it prefers a head with worse $A_{\rm own}$ because that head leaks information about
$M$; in our text and engine controls this can also select a head with worse canonical $F$-AUC. Third, the usual response to a bad monitor---on-policy refresh---can remove component
competence. Once optimization has depleted $F$-failures, refresh data alone no longer identify the
boundary the component verifier must retain; outcome labels can actively entangle it, while even
correct $F$ labels cannot reconstruct failures absent from support. The result can look benign on the
monitored distribution while becoming worse exactly on the off-support failures a constraint checker
must catch after a generator, prompt distribution, or policy shift.

\begin{center}
\begin{tikzpicture}[node distance=2.4mm,>=Latex,every node/.style={font=\scriptsize,align=center}]
\node[draw,rounded corners,inner sep=3pt] (a) {policy learns $F$};
\node[draw,rounded corners,inner sep=3pt,right=of a] (b) {$F$-failures vanish\\$q\!\uparrow$};
\node[draw,rounded corners,inner sep=3pt,right=of b] (c) {audit changes meaning\\cross-signal wins};
\node[draw,rounded corners,inner sep=3pt,right=of c] (d) {refresh can\\forget $F$};
\draw[->] (a)--(b); \draw[->] (b)--(c); \draw[->] (c)--(d);
\end{tikzpicture}
\end{center}
Viewed as a learning system, this is a control-plane failure: optimizing the policy changes the data
used to decide whether its verifier should be replaced or refreshed.

\paragraph{When does the failure apply?}
The central result requires three ingredients: a head intended for component $F$, an \emph{on-policy}
audit against the full outcome $y$ rather than a component-specific label, and use of that audit to
compare, early-stop or trigger refreshes. If trusted $F$ labels are available and the component is
audited directly (preferably on a reference distribution that preserves $F$-failures), the collapse in
Eq.~\ref{eq:intro_mix} is avoided. Our claim is therefore about this specific but natural interface,
not about every verifier evaluation protocol.

\paragraph{Why this interface is current.}
Its ingredients already coexist in modern post-training systems. Rubric RL independently scores
prompt-specific criteria and aggregates them into a scalar on-policy reward
\citep{gunjal2026rubrics,mahmoud2026rubric}; policy-aware rubric work reports that substantial
criterion mass is already saturated or dead under current policies \citep{tyagi2026policyaware}; and
fresh policy-interaction data are used to refine reward models or preference models
\citep{sun2025ugda,bai2022training}. We do \emph{not} claim that these systems implement our exact
audit-trigger-refresh controller end to end. They establish that criterion-level verifiers, policy-induced
saturation, scalar aggregation and on-policy refinement are contemporary ingredients; Appendix~\ref{app:workflow_relevance}
maps the interfaces explicitly.

Our experiments are organized to test this chain rather than to accumulate unrelated benchmarks.
The engine domain gives an exact component oracle, allowing us to separate real verifier failure
from a case-mix artifact. GSM8K supplies a familiar language-domain negative control for ranking
inversion. A natural multi-label GoEmotions stress test uses separately annotated semantic criteria to test
audit collapse, inversion and refresh under support depletion; a constructed AG News control is
reported in the appendix. ECG then stresses the operational
consequence on a frozen external benchmark with three generator families, subject-held-out splits,
two refresh model classes, downstream best-of-$16$ selection, and a precommitted set of
predictions. Across these settings the observations line up with the mechanism: exact-oracle AUC
collapses, the analytic inversion threshold predicts where inversion does and does not occur,
optimization removes own-failure support, outcome refresh damages off-policy canonical $F$-AUC and
downstream constraints in every paired split, and deliberately restoring failure support prevents
the damage.

\paragraph{Contributions.}
\textbf{(1) On-policy audit collapse.} We derive the mixture law, a sharp identified set, and the
complete-erasure threshold $q\ge\max(A,1-A)$ (Section~\ref{sec:law}).
\textbf{(2) Audit inversion.} We give the threshold $q^\ast$ at which cross-component signal
overtakes own-failure separability and show that the aggregate audit's population maximizer is the
full-outcome predictor, not the component oracle (Proposition~\ref{prop:inversion}).
\textbf{(3) The refresh trap.} A controlled semantic-language stress test and a prediction-registered
ECG study separate label mismatch from support mismatch and connect refresh to downstream harm
(Section~\ref{sec:refresh}).
\textbf{(4) Support-preserving repair.} We give matching passive label-complexity bounds and a
designed two-mixture audit whose error does not depend on the deployed policy's saturation
(Section~\ref{sec:fix}). Downstream selection/training-interface results are deliberately kept as a
boundary analysis in the appendix.

\section{Mechanism-first evaluation}
\label{sec:setup}

We test the chain implied by Eq.~\ref{eq:intro_mix}, not a collection of interchangeable
benchmarks. Engine supplies an exact oracle and matched pools for the inversion threshold; GSM8K is
a language-domain negative control where inversion should not occur; GoEmotions supplies two
separately annotated semantic criteria; and ECG is the frozen operational stress test with three
generator families, subject-held-out evaluation, two refresh model classes and a downstream gate.
Protocols and domain-specific details are in the appendices.

In every domain $y=F\wedge M$. We keep two component estimands distinct. The theory isolates
\emph{own-failure separability}
$A_{\rm own}=\operatorname{AUC}(s_F;F{=}1,M{=}1\text{ vs }F{=}0)$, because these are exactly the
outcome-audit pairs whose negative is an $F$-failure. Canonical component quality is instead
$A_F=\operatorname{AUC}(s_F;F{=}1\text{ vs }F{=}0)$. They need not agree: if
$\rho=\Pr[M=1\mid F=1]$ and
$A_{\rm aux}=\operatorname{AUC}(s_F;F{=}1,M{=}0\text{ vs }F{=}0)$, then
\begin{equation}
A_F=\rho A_{\rm own}+(1-\rho)A_{\rm aux}.
\label{eq:canonical_component_auc}
\end{equation}
Equation~\ref{eq:canonical_component_auc} is just the law of total probability over $M$ within the
canonical positive class $F=1$; no independence assumption is used.
Propositions~\ref{prop:identified_set}--\ref{prop:inversion} characterize what the \emph{outcome}
audit identifies and rewards about $A_{\rm own}$; whenever we claim actual component degradation we
report $A_F$ separately. Best-of-$K$ supply, selection efficiency and Answer@$K$ are defined and
audited in Appendix~\ref{app:moved}.

\section{The paradox: policy improves, audit collapses}
\label{sec:signature}

\begin{table}[b]
\centering
\caption{The policy improves under the oracle target while the component head's full-target AUC
falls toward chance. The component is feasibility for engine and boxed format for GSM8K. Engine:
$K=16$, three seeds, $1\,600$ pairs (the original round; the matched study of
Appendix~\ref{sec:train} uses $4\,000$ pairs). GSM8K: $500$ prompts, $K=16$ prefix of $64$, three
seeds.}
\label{tab:signature}
\small
\setlength{\tabcolsep}{2.8pt}
\renewcommand{\arraystretch}{1.05}
\begin{adjustbox}{max width=\linewidth}
\begin{tabular}{llcccccc}
\toprule
 & policy & target/draw & component/draw & supply@16 & full-target AUC & native selector@16 & $\eta$ \\
\midrule
engine & base & $0.026$ & $0.26$ & $0.285$ & $0.802$ & $0.093$ & $0.26$ \\
 & after DPO & $\mathbf{0.095 \pm 0.006}$ & $0.87$ & $0.460 \pm 0.015$ & $\mathbf{0.512 \pm 0.010}$ & $0.112 \pm 0.008$ & $0.05 \pm 0.04$ \\
\midrule
GSM8K & base & $0.302$ & $0.42$ & $0.872$ & $0.919$ & $0.734$ & $0.76$ \\
 & after DPO & $\mathbf{0.647 \pm 0.006}$ & $0.98$ & $0.933 \pm 0.005$ & $\mathbf{0.506 \pm 0.004}$ & $0.673 \pm 0.014$ & $0.09 \pm 0.03$ \\
\bottomrule
\end{tabular}
\end{adjustbox}
\end{table}

The first empirical fact is intentionally paradoxical. In engine, oracle target success per draw
rises $3.6\times$ while the feasibility head's aggregate AUC falls from $0.80$ to $0.51$. In
GSM8K, target success rises $2.1\times$ while the format head falls from $0.92$ to $0.51$. At the
same time the native selectors fail to convert the newly available supply into selected success:
selection efficiency falls to $0.05$ and $0.09$. If aggregate on-policy AUC were a direct measure
of component quality, these numbers would suggest that both components deteriorated sharply just
as the policies learned to satisfy them.

\paragraph{The exact-oracle control rules out that interpretation.}
The component itself remains available. Two unused engine critics with different seeds show the
same on-policy collapse ($0.45$--$0.58$) while their mass heads stay at $0.96$--$0.98$; the
feasibility head retains own-property AUC $0.940$--$0.991$ on matched DPO pools, and the GSM8K
format head retains $0.995$ accuracy against unboxed failures. More strongly, the next section
shows that an \emph{exact} component oracle exhibits the same aggregate symptom. Thus the collapse
cannot, by itself, identify component degradation. It is the empirical signature predicted by a
changing failure mixture.

\section{On-policy audit collapse}
\label{sec:law}

For one candidate pool $r$, every negative example belongs to one of two semantic
failure bins: it either fails the scored condition $F$, or it satisfies $F$ and fails
$M$. Pairwise additivity of AUC therefore gives
\begin{equation}
A_r=(1-q_r)A_{{\rm own},r}+q_rA_{{\rm cross},r},\qquad
q_r=\Pr[F=1\mid y=0,r].
\label{eq:mix}
\end{equation}
Here $A_{{\rm own},r}$ is the audit's own-failure separability: target successes versus failures
of $F$. Equation~\ref{eq:canonical_component_auc} distinguishes this audit estimand from canonical
$F$-AUC. The second term,
$A_{{\rm cross},r}$, asks the same head to order target successes against failures
that already satisfy $F$. As $q_r$ grows, the aggregate audit places less weight on
the head's own subproblem.

All statements below are per pool; because
$\bar A\neq(1-\bar q)\bar A_{\rm own}+\bar q\bar A_{\rm cross}$ in general, every
reconstruction averages the per-pool identity (Section~\ref{sec:setup}).

\begin{proposition}[Identified set for own-failure separability]
\label{prop:identified_set}
Partition negatives into semantic failure bins $B_1,\ldots,B_J$, with
$\pi_j=\Pr[B_j\mid y=0]$ and
$a_j=\operatorname{AUC}(s;y=1\text{ vs }B_j)$. Given only aggregate AUC $A$ and the
observed bin weights $\pi$, the sharp identified set is
\[
\mathcal I(A,\pi)=\{a\in[0,1]^J:\pi^\top a=A\}.
\]
For the two-bin split in Eq.~\ref{eq:mix}, if $q<1$,
\begin{equation}
A_{\rm own}\in\left[
\max\!\left(0,\frac{A-q}{1-q}\right),
\min\!\left(1,\frac{A}{1-q}\right)\right].
\label{eq:identified_interval}
\end{equation}
This interval is all of $[0,1]$ exactly when
\begin{equation}
q\ge\max(A,1-A)=\tfrac12+\left|A-\tfrac12\right|.
\label{eq:erasure_threshold}
\end{equation}
The bounds are attainable. If the bin-conditional score distributions are held fixed
and aggregate AUC is measured under known mixture rows of $\Pi$, then the vector
$a$ is point-identified for every compatible value iff
$\operatorname{rank}(\Pi)=J$; with two bins, two distinct mixture weights suffice. If the
scores, but not the bins, of all negatives are also observed, the sharp interval for
$A_{\rm own}$ becomes a trimming bound \citep{horowitz1995identification}: the means of
the lowest and of the highest $(1-q)$-fraction of the values $h(x)$, the share of
positives ranked above negative $x$ (Appendix~\ref{app:v24}).
\end{proposition}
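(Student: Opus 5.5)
The plan is to reduce everything to pairwise additivity of AUC plus an explicit construction for sharpness. First I will write $A=\Pr[s(X^+)>s(X^-)]+\tfrac12\Pr[\text{tie}]$ with $X^-$ drawn from the negative law. Conditioning on the bin of $X^-$ gives $A=\sum_j\pi_ja_j=\pi^\top a$, so every compatible $a$ lies in $\mathcal I(A,\pi)$.

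For sharpness I must show the converse: every $a\in[0,1]^J$ with $\pi^\top a=A$ is realized by some score distribution. I will realize it with a single positive atom at score $0$. Within bin $B_j$, I will place mass $a_j$ at score $-1$ and mass $1-a_j$ at score $+1$, so that $\operatorname{AUC}(s;y=1\text{ vs }B_j)=a_j$ exactly. The aggregate is then $\pi^\top a=A$. The observed $A$ and $\pi$ are unchanged, so nothing in the data rules this $a$ out.

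The two-bin interval follows by projecting the segment $\{(1-q)a_{\rm own}+q\,a_{\rm cross}=A\}\cap[0,1]^2$ onto the first coordinate.
\begin{itemize}
\item Setting $a_{\rm cross}\in\{0,1\}$ gives $a_{\rm own}=A/(1-q)$ or $(A-q)/(1-q)$.
\item Clipping to $[0,1]$ gives the stated bounds; the endpoints are attained by the sharpness construction.
\end{itemize}
The interval equals $[0,1]$ iff $A-q\le0$ and $A\ge 1-q$, that is, iff $q\ge\max(A,1-A)$. The identity $\max(A,1-A)=\tfrac12+|A-\tfrac12|$ is elementary.

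For the rank statement, fixed bin-conditional laws mean $a$ is a fixed vector and each mixture row gives the linear equation $\Pi a=\mathbf A$. This system determines $a$ uniquely iff $\Pi$ has full column rank $J$. For the converse, a nonzero kernel vector can be scaled small and added to an interior compatible $a$; the scaling keeps it inside $[0,1]^J$ and keeps it realizable via the construction above. There is one subtlety: if $a$ lies on the boundary of the cube, nonidentification requires a kernel direction that stays feasible. I will phrase the claim as ``for every compatible value'', so exhibiting an interior point suffices. With two bins, rows $(1-q_1,q_1)$ and $(1-q_2,q_2)$ have determinant $q_2-q_1\ne0$.

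The main obstacle is the trimming bound when negative scores are observed. Define $h(x)=\Pr[s(X^+)>s(x)]+\tfrac12\Pr[s(X^+)=s(x)]$. Then $A=\mathbb E[h(X^-)]$, and $A_{\rm own}=\mathbb E[h\mid B_{\rm own}]$. The unknown own bin is an arbitrary sub-population of mass $1-q$ of the negatives; with atoms, a fractional membership function $w\in[0,1]$ with $\mathbb E w=1-q$ is needed. Maximizing or minimizing $\mathbb E[wh]/(1-q)$ over such $w$ is a fractional knapsack problem. By the bathtub principle, the optimum puts $w=1$ on the highest (resp.\ lowest) $h$-values, splitting at a quantile. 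This yields the Horowitz--Manski trimmed means.

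Sharpness of this bound needs the following step. Bin labels are unobserved and the positive law is fixed, so any $w$ can be realized by randomly assigning each negative to $B_{\rm own}$ with probability $w(x)$. This is consistent with the observed scores and with $q$, and I expect this is where care about ties and atoms is required.
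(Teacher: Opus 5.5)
Your proposal is correct and follows the paper's proof step for step. The steps are: total probability over failure bins; an explicit realizing construction for sharpness, where your point-mass positive at $0$ with two-point negative bins is interchangeable with the paper's $\mathrm{Unif}(0,1)$ positive and constant negatives at $1-a_j$; elimination of $A_{\rm cross}$ for the two-bin interval; a null-space argument for the rank condition; and subset selection for the trimming bound. Your population bathtub version with fractional membership mildly generalizes the paper's finite-pool argument, which assumes $n_{\rm own}=(1-q)n$ is an integer and optimizes over subsets, and your interior-point handling of the rank converse is slightly more careful than the paper's one-line appeal to a null-space direction inside $[0,1]^J$.
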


\paragraph{More than two conjuncts.}
For $J>2$ semantic failure modes the same object is simply $A=\sum_j\pi_j a_j$. Mastering one
component shrinks the weight of its own-failure bin and reallocates audit mass to whichever other
failure modes remain. There is no privileged scalar $q$ in the multi-bin case, but the identification
question is unchanged: the auditor needs either direct bin labels or enough deliberately distinct
mixture rows for the resulting mixture matrix to have full column rank.

\paragraph{What complete erasure means.}
Equation~\ref{eq:erasure_threshold} is stronger than saying that AUC is sensitive to
case mix. Even after the auditor observes the mixture weight $q$ exactly, the
aggregate statistic can contain \emph{zero} information about $A_{\rm own}$. Every
matched engine row in Table~\ref{tab:law} is in this regime: $q\ge0.916$ and
aggregate AUC $0.49$--$0.58$ are compatible with every $A_{\rm own}\in[0,1]$, while
direct bin audits measure $0.94$--$0.99$.

\paragraph{What does a realistic auditor know?}
An auditor usually has outcome labels and scores for every candidate but failure
causes only on a labelled subsample, from which $q$ is estimated. This extra information
barely helps: on the ECG base test pools, observing every negative's score lowers the
share of erased pools only from $37\%$ to $34\%$ (Llama) and from $48\%$ to $48\%$
(Phi), and estimating $q$ from $50$ labels leaves $36\%$ and $48\%$, because in small
pools most $h$-values sit at $0$ or $1$ (Appendix~\ref{app:v24}).

The same point holds without a learned verifier. The exact oracle $s_F=F$ has
$\operatorname{AUC}(F;y)=1-q/2$: on stored pools this perfect head falls from $0.833$ to
$0.515$ (engine) and from $0.922$ to $0.519$ (GSM8K), realizing a composition-only path
($A_{\rm own}=1$, $A_{\rm cross}=1/2$). Section~\ref{sec:scope} gives the opposite
control, genuine critic exploitation, with the same aggregate symptom.

\begin{table}[b]
\centering
\caption{Chance-level aggregate AUC can coexist with near-perfect ordering of the
component's own failures. $q$ is the share of off-target candidates that already
satisfy the scored conjunct. ECG rows are frozen pre-DPO test pools; Gemma's has no
informative target-positive requirement and is omitted rather than imputed.}
\label{tab:law}
\small
\setlength{\tabcolsep}{4pt}
\begin{tabular}{lcccc}
\toprule
policy & $q$ & aggregate AUC & $A_{\rm own}$ & $A_{\rm cross}$ \\
\midrule
1.5B sum & $0.970 \pm 0.007$ & $0.532 \pm 0.004$ & $0.966 \pm 0.020$ & $0.519 \pm 0.007$ \\
1.5B hard gate & $0.916 \pm 0.014$ & $0.535 \pm 0.006$ & $0.944 \pm 0.016$ & $0.494 \pm 0.001$ \\
7B sum & $0.984 \pm 0.005$ & $0.494 \pm 0.024$ & $0.987 \pm 0.017$ & $0.487 \pm 0.025$ \\
7B hard gate & $0.953 \pm 0.011$ & $0.583 \pm 0.007$ & $0.962 \pm 0.037$ & $0.561 \pm 0.004$ \\
\midrule
GSM8K base & $0.16$ & $0.919$ & $0.997$ & $0.507$ \\
GSM8K sum DPO & $0.96$ & $0.506\pm0.004$ & $0.995$ & $0.489\pm0.006$ \\
\midrule
ECG Llama base & $0.611$ & $0.751$ & $1.000$ & $0.607$ \\
ECG Phi base & $0.666$ & $0.791$ & $1.000$ & $0.690$ \\
\bottomrule
\end{tabular}
\end{table}

\begin{figure}[t]
\centering
\includegraphics[width=.88\linewidth]{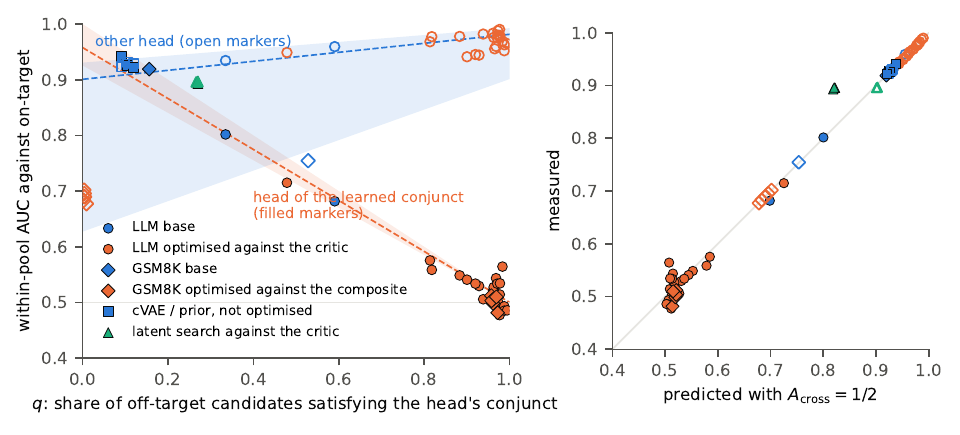}
\caption{\textbf{Audit collapse.} Left: aggregate outcome AUC changes systematically with the
failure mixture $q$ even when the component's own-bin ordering remains strong. Right: the
zero-parameter $A_{\rm cross}=1/2$ reconstruction tracks measured AUC. The complete-erasure region
of Eq.~\ref{eq:erasure_threshold} is shown separately in Appendix~\ref{app:identifiability_proof}.}
\label{fig:law}
\end{figure}

\paragraph{External check in a non-extreme regime.}
On ECG the deployability head is effectively exact ($A_{\rm own}=1.000$), yet the
aggregate reads $0.751$ and $0.791$ at $q=0.61$--$0.67$ (Table~\ref{tab:law}); the mean
per-pool interval from $(A_r,q_r)$ is $[0.47,1]$ and $[0.45,1]$, with $37\%$ and $48\%$ of pools erased.

\subsection{Audit inversion: cross-component signal can win}
Non-identification says the aggregate cannot certify a component. Equation~\ref{eq:mix} says
more: it states what the aggregate \emph{rewards} when used to compare, select or refresh
verifiers.

\begin{proposition}[Audit inversion and its maximizer]
\label{prop:inversion}
Fix a pool with mixture $q$. \textbf{(i)} For two heads $f,g$,
$A(g)-A(f)=(1-q)(a^{\rm own}_g-a^{\rm own}_f)+q(a^{\rm cross}_g-a^{\rm cross}_f)$. If
$f$ has better own-failure separability ($\Delta_{\rm own}=a^{\rm own}_f-a^{\rm own}_g>0$) but $g$
separates $F$-satisfying candidates by $M$ better ($\Delta_{\rm cross}=a^{\rm cross}_g-a^{\rm cross}_f>0$),
the audit prefers $g$ iff
\[
q>q^\ast=\frac{\Delta_{\rm own}}{\Delta_{\rm own}+\Delta_{\rm cross}},
\]
and for $\Delta_{\rm own}=0$ it prefers $g$ at every $q>0$. \textbf{(ii)} Over all scores
$s(x)$ of the candidate, the pool's aggregate AUC is maximized by any strictly increasing
transform of $\eta_y(x)=\Pr[y=1\mid x]$. \textbf{(iii)} If $F=\phi(x)$ is computable from the
candidate, then $a^{\rm own}(\eta_y)=1$, and the exact oracle $s=\phi$ satisfies
$A(\eta_y)-A(\phi)=q\,\bigl(a^{\rm cross}(\eta_y)-\tfrac12\bigr)\ge0$: the outcome predictor weakly
dominates the perfect component verifier at every $q$, strictly unless $\eta_y$ is constant on
$F$-satisfying candidates.
\end{proposition}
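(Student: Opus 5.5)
Part (i) follows from the per-pool mixture identity Eq.~\ref{eq:mix}, applied to each head. The mixture weight $q$ depends only on the pool's joint law of $(F,M)$, not on the score, so the same $q$ multiplies both heads' terms. Subtracting the two identities gives
\[
A(g)-A(f)=(1-q)(a^{\rm own}_g-a^{\rm own}_f)+q(a^{\rm cross}_g-a^{\rm cross}_f),
\]
which can be rewritten as $-(1-q)\Delta_{\rm own}+q\Delta_{\rm cross}$. This is positive iff $q(\Delta_{\rm own}+\Delta_{\rm cross})>\Delta_{\rm own}$. Dividing by the positive sum yields $q>q^\ast$. When $\Delta_{\rm own}=0$, the difference reduces to $q\Delta_{\rm cross}$, which is positive for every $q>0$. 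One point to state explicitly is the tie convention for AUC (ties counted as $1/2$). Pairwise additivity then holds exactly, since AUC is the average of the pair kernel $\mathbf 1[s(x^+)>s(x^-)]+\tfrac12\mathbf 1[s(x^+)=s(x^-)]$ over independent positive and negative draws.

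For (ii), I would write the aggregate AUC as a functional of $s$ using the pair kernel $K_s(x,x')$ above:
\[
A(s)=\frac{\mathbb E_{x,x'}\bigl[\eta_y(x)(1-\eta_y(x'))K_s(x,x')\bigr]}{p(1-p)},\qquad p=\Pr[y=1],
\]
with $x,x'$ drawn i.i.d.\ from the pool's candidate distribution. I would then symmetrise over each unordered pair $\{x,x'\}$. The two orderings contribute $\eta(x)(1-\eta(x'))K_s(x,x')+\eta(x')(1-\eta(x))K_s(x',x)$, with $K_s(x,x')+K_s(x',x)=1$. This sum is maximised pointwise by putting all the weight on the larger of the two coefficients. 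Their difference is $\eta(x)-\eta(x')$, so the maximiser puts weight on $x$ exactly when $\eta(x)>\eta(x')$; when $\eta(x)=\eta(x')$ every split of the weight gives the same value. A score that is a strictly increasing transform of $\eta_y$ attains this pointwise maximum for every pair simultaneously, so it maximises $A$. This is the standard Neyman--Pearson or Bayes-optimality argument for AUC. The main obstacle in this step is the measure-theoretic bookkeeping: ties in $\eta$, and the i.i.d.\ product measure as opposed to separate class-conditional draws. The denominator $p(1-p)$ is constant in $s$, so it does not affect the maximisation.

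For (iii), assume $F=\phi(x)$ is deterministic. Every $F$-failure $x$ then has $\eta_y(x)=0$, because $y=F\wedge M$ forces $y=0$. Every target success has $\eta_y>0$ almost surely under the positive class, since positives come from the region where $\eta_y>0$. Hence $a^{\rm own}(\eta_y)=1$. The exact oracle $\phi$ also has $a^{\rm own}(\phi)=1$, and $a^{\rm cross}(\phi)=1/2$, because $\phi$ is constant (equal to $1$) on both successes and $F$-satisfying failures. Applying part (i) with $f=\phi$ and $g=\eta_y$ gives $\Delta_{\rm own}=0$, so
\[
A(\eta_y)-A(\phi)=q\bigl(a^{\rm cross}(\eta_y)-\tfrac12\bigr).
\]
Nonnegativity follows from (ii), since $A(\eta_y)\ge A(\phi)$. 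It can also be seen directly: applying the argument of (ii) restricted to $\{F=1\}$, $\eta_y$ is the conditional outcome predictor there, so its cross AUC is at least that of the constant score, namely $1/2$.

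For strictness, I would show that $a^{\rm cross}(\eta_y)>1/2$ unless $\eta_y$ is constant on $\{F=1\}$. Within $\{F=1\}$, write $\eta$ for $\eta_y$. The pairwise gain of $\eta$ over the constant score equals
\[
\tfrac12\,\mathbb E\bigl[\,|\eta(x)-\eta(x')|\,\bigr]\big/\bigl(\text{normaliser}\bigr),
\]
which is positive iff $\eta$ is non-degenerate on that set. I expect this strictness clause, together with the caveat $q>0$, to need the most care: the $q=0$ case has to be excluded or stated as equality.
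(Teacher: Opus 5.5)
Your proof is correct and follows the same route as the paper in all three parts. Part (i) differences the per-pool identity; part (ii) uses Bayes-optimality of $\eta_y$ for AUC, which you prove directly by pairwise symmetrisation where the paper cites the likelihood-ratio result of Cl\'emen\c{c}on et al.; part (iii) uses $\eta_y=0$ on own failures together with (ii) applied to $\{F=1\}$. Your $q>0$ caveat resolves itself: $q=0$ forces $\eta_y=1$ almost surely on $\{F=1\}$, so that case already falls under the stated ``constant'' exception.
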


Part (i) is Eq.~\ref{eq:mix} differenced; (ii) is the Neyman--Pearson property of AUC
\citep{clemencon2008ranking}; (iii) follows because $y=1$ implies $F=1$, so $\eta_y=0$ on every
$F$-failure while the oracle is constant on $\{F=1\}$ (Appendix~\ref{app:refresh}). The content
is not the arithmetic but its use. If aggregate on-policy AUC is used to \emph{compare or early-stop}
component heads, its selection pressure favors scores increasingly aligned with $\eta_y$, hence with
information about the rest of the target. A subsequent outcome-labelled refresh is a distinct
training intervention, studied in Section~\ref{sec:refresh}; it need not literally optimize AUC to
inherit the same entangling signal. The within/cross split mirrors the xAUC decomposition used for
group fairness \citep{kallus2019xauc}; here the groups are semantic failure bins and the question is
which head the audit statistic prefers.

\paragraph{The threshold predicts where inversion appears.}
On matched 1.5B sum-DPO engine pools the learned feasibility head has
$(a^{\rm own},a^{\rm cross})=(0.966,0.519)$ and the exact oracle $(1,\tfrac12)$, so
$q^\ast=0.64$; at the observed $q=0.970$ the audit ranks the oracle \emph{below} the imperfect
head ($0.515$ vs $0.532$). The oracle also has canonical $F$-AUC $1$ by definition, while direct
feasibility AUC of the learned heads is $0.940$--$0.991$ on the matched DPO pools
(Appendix~\ref{app:moved}), so this is not merely a relabelling of $A_{\rm own}$. On GSM8K the format head's
$a^{\rm cross}=0.489<\tfrac12$, so no $q^\ast$ exists, and indeed the oracle stays above it
($0.518$ vs $0.506$; Appendix~\ref{app:gsm}). A holistic on-target engine classifier, the
extreme of entanglement, avoids the component-AUC cliff yet captures only $\eta=0.49$--$0.60$ of
the best-of-$K$ lift versus $0.90$--$0.94$ for the gated two-head critic (Appendix~\ref{app:moved}):
the head favored by the aggregate audit is not the best component representation for the selector.

\paragraph{Optimization drives audits into this regime.}
On the post-DPO ECG test pools, scored by the same frozen critic, the within-pool
own-failure bin of the deployability head is empty in every evaluable pool of the four
evaluable Llama gate/inference runs and in $64$--$72\%$ of evaluable pools of the six
Phi gate/inference runs, up from $13\%$ and $36\%$ at base. There no passive audit recovers $A_{\rm own}$ from the deployed mixture (Section~\ref{sec:fix}), and the on-policy refresh data no longer identify how the head should rank the missing own-failure mode. The next section tests what happens when a refresh procedure acts on that depleted support.

\section{The refresh trap: acting on audit collapse}
\label{sec:refresh}

Proposition~\ref{prop:inversion} describes what the aggregate audit prefers. Deployment makes the
problem operational: a low monitor value can trigger collection of fresh policy samples and a
verifier refresh. The mixture law predicts two distinct hazards. \textbf{Label mismatch:} if a
component verifier is refreshed against the full outcome $y$, optimization explicitly moves it
toward the audit's preferred entangled predictor. \textbf{Support mismatch:} even with the correct
component labels $F$, a saturated policy may provide almost no $F$-failures, so the refresh data
contain little or no information about the decision boundary the verifier must retain. The second
hazard means the problem is not fixed simply by choosing the ``right'' label.

\paragraph{Natural multi-label semantic stress test.}
On GoEmotions \citep{demszky2020goemotions}, keep $F$ as absence of \emph{annoyance} and use a
post-hoc, transparently reported high-$q$ positive control with $M$ = absence of \emph{gratitude}.
At the naturally observed $q=.522$, the component specialist has canonical $F$-AUC $.679$ but an
aggregate audit of only $.442$. The deliberately entangled score $z(s_F)+z(s_M)$ has substantially
\emph{worse} canonical $F$-AUC ($.570$) yet the aggregate audit prefers it ($.681$; bootstrap
95\% difference $[.210,.270]$), with predicted crossing $q^*=.126$. Sweeping leakage strength
$s_\lambda=z(s_F)+\lambda z(s_M)$ produces the same quantitative tradeoff: as canonical $F$-AUC
falls from $.668$ to $.483$, the observed aggregate audit rises from $.470$ to $.725$.
Support depletion is equally directional. With correct $F$ labels, canonical $F$-AUC is $.545$ when
only eight own failures remain; adding 80 random development examples barely changes it ($.550$),
whereas adding 80 targeted own failures raises it to $.606$ (full support $.679$). The pair choice,
leakage sweep, random-vs-targeted replay and the original near-boundary disapproval control are all
post hoc and reported in full in Appendix~\ref{app:goemotions}; AG News supplies an independent
constructed replication (Appendix~\ref{app:agnews}).

Immediately before this study, the post-DPO ECG pools already exhibit the predicted support shift:
the own-failure bin of the deployability head is empty in every evaluable pool of the four Llama
gate/inference runs and in $64$--$72\%$ of evaluable pools of the six Phi runs, versus $13\%$ and
$36\%$ at base. We therefore test not merely whether the monitor changes, but whether acting on it
causes measurable off-policy and downstream harm.

\paragraph{Protocol: held-out subjects, two refresh classes, and an unseen generator.}
We use the frozen ECG pools, where $F$ is a hard deployment constraint: a valid pipeline within
latency, cycle and RAM budgets. For Llama and Phi, a specialist $S$ is trained on the base-policy
train pool with $F$ labels, in two head classes: an MLP with the frozen critic's architecture and
gradient-boosted trees. Refresh data are post-DPO test pools from one half of test subjects;
every reported refresh metric is evaluated on the complementary subject half over $10$ random
subject splits. $R_y$ refreshes with outcome labels, $R_F$ with component labels, and
$R_{\rm mix}$ with component labels plus an equal number of re-injected base-policy candidates.
We evaluate four levels of evidence: the on-policy monitor, off-policy $F$-AUC on the source
generator, transfer $F$-AUC on unseen Gemma outputs, and a best-of-$16$ gated consumer. Six
predictions were hashed before the run (Appendix~\ref{app:refresh}); an exploratory pilot using a
different head procedure and split seed had informed P1--P3, while P4--P6 were untested. The four
family$\times$head-class configurations are the primary experimental settings; the 10 overlapping
subject splits within each setting are robustness resamples, so counts such as $40/40$ below report
sign consistency rather than 40 independent replications.

\begin{table}[t]
\centering
\caption{Refreshing a component verifier on on-policy ECG data (subject-held-out half; mean of
$10$ paired splits; each cell MLP~/~GBM). The monitor is full-target on-policy AUC. $F$-AUC is
measured off-policy on base pools from the source generator and unseen Gemma. The gate column tests
whether verifier damage changes a downstream best-of-16 decision; violations up to about $.15$ are
unavoidable because some pools contain no feasible candidate.}
\label{tab:refresh}
\scriptsize
\setlength{\tabcolsep}{2.6pt}
\begin{adjustbox}{max width=\linewidth}
\begin{tabular}{llccccc}
\toprule
 & head & monitor AUC & $F$-AUC, base & $F$-AUC, unseen gen. & gate $F$-violation & gate Answer@16 \\
\midrule
Llama & $S$ (deployed) & .545 / .614 & 1.000 / 1.000 & .998 / 1.000 & .152 / .151 & .495 / .495 \\
 & $R_y$: outcome refresh & .649 / .564 & .883 / .567 & .708 / .493 & .163 / .206 & .475 / .447 \\
 & $R_F$: component refresh & .552 / .494 & .999 / .765 & .996 / .775 & .153 / .177 & .491 / .478 \\
 & $R_{\rm mix}$: $R_F$ + base failures & .551 / .617 & 1.000 / 1.000 & .998 / 1.000 & .152 / .151 & .495 / .495 \\
\midrule
Phi & $S$ (deployed) & .641 / .507 & 1.000 / 1.000 & .999 / .999 & .145 / .144 & .565 / .565 \\
 & $R_y$: outcome refresh & .682 / .495 & .908 / .519 & .635 / .519 & .163 / .184 & .535 / .508 \\
 & $R_F$: component refresh & .674 / .618 & .994 / 1.000 & .999 / .999 & .144 / .144 & .564 / .565 \\
 & $R_{\rm mix}$: $R_F$ + base failures & .671 / .762 & 1.000 / 1.000 & .999 / .998 & .144 / .144 & .565 / .565 \\
\bottomrule
\end{tabular}
\end{adjustbox}
\end{table}

\begin{figure}[t]
\centering
\includegraphics[width=\linewidth]{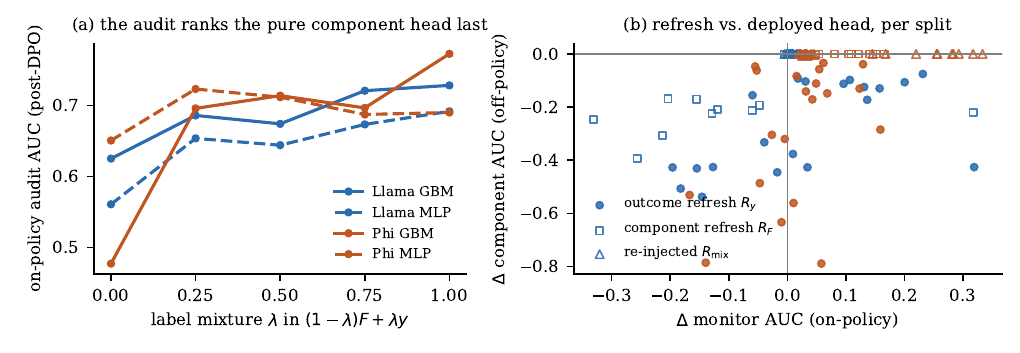}
\caption{\textbf{(a) The audit prefers entanglement.} Heads trained on base data with labels
$(1-\lambda)F+\lambda y$: the pure component head ($\lambda=0$), which has canonical $F$-AUC $1.000$, receives the lowest on-policy audit AUC in all four family$\times$class settings (P1).
\textbf{(b) Acting on the audit damages the component.} Refresh minus deployed specialist in each
of $10$ held-out-subject splits and two head classes. Outcome refresh lowers off-policy canonical $F$-AUC in every paired split while its effect on the monitor changes sign across settings; restoring
base failures leaves canonical $F$-AUC essentially unchanged.}
\label{fig:refresh}
\end{figure}

\paragraph{Result 1: the monitor selects the wrong representation.}
P1 held in all four family$\times$class settings. Among heads trained from the same base data with
label mixture $(1-\lambda)F+\lambda y$, the pure component head ($\lambda=0$) had the \emph{lowest}
on-policy aggregate AUC, despite $F$-AUC $1.000$. This is the empirical analogue of
Proposition~\ref{prop:inversion}: adding outcome information improves the audit score while moving
the representation away from a pure component verifier.

\paragraph{Result 2: outcome refresh causes systematic off-policy forgetting.}
P3 and P5 held. Relative to the deployed specialist, $R_y$ lowered source-generator off-policy
$F$-AUC with the same sign in every held-out-subject split across both families and both head
classes ($40/40$ paired evaluations; sign robustness, not independent replications): by $0.09$--$0.12$ for MLPs and $0.43$--$0.48$ for GBMs. The effect is not confined to the refresh
generator: on unseen Gemma it drops $0.29$--$0.51$. By contrast, the monitor itself is not a
consistent indicator of this damage: its mean change is positive in two settings and negative in
two, and the preregistered P2 prediction that Phi's monitor would improve by at least $0.05$ in
both classes failed ($+0.042$, $-0.012$). The intervention can therefore destroy the component
without producing a reliable improvement even in the statistic that motivated it.

\paragraph{Result 3: the forgotten component changes downstream decisions.}
P6 held. Using $R_y$ as the feasibility gate of a best-of-$16$ selector increases constraint
violations with the same sign in every held-out-subject split across both families and both head
classes ($40/40$ paired evaluations; sign robustness, not independent replications), by $+0.011$ to $+0.055$ above the unavoidable floor, and
lowers Answer@16 by $0.02$--$0.06$. This closes the loop from an apparently
bad audit to an operationally worse consumer: the effect is not only a metric disagreement.

\paragraph{Result 4: correct labels are insufficient; restored support is sufficient here.}
$R_F$ isolates support mismatch from label mismatch. It still loses $0.235$ source-generator
$F$-AUC in the Llama-GBM setting and admits more violations, despite being refreshed with the
correct component labels. In contrast, P4 held for $R_{\rm mix}$: re-injecting base-policy
candidates while keeping component labels preserves $F$-AUC at $1.000$ in every source setting,
keeps unseen-generator AUC at $0.998$--$1.000$, and returns the gate metrics to the deployed
specialist. The difference between $R_F$ and $R_{\rm mix}$ is support, not supervision.

\paragraph{Closed-loop counterfactual.}
Composing the frozen trigger signal with the already-trained refresh heads closes the loop without
new training. Refreshing with $R_y$ whenever aggregate audit AUC falls by at least $.05$ from base
triggers all four primary settings even though each deployed specialist retains $F$-AUC $1.000$;
the resulting mean $F$-AUC/violation/Answer@16 changes from $1.000/.148/.530$ to
$.719/.179/.491$. Thresholds $.10$--$.25$ trigger fewer settings but remain worse than no refresh
on all three outcomes; a reference-distribution $F$-specific trigger fires in none. The full trigger--harm
curve is shown in Appendix~\ref{app:controller}.

\paragraph{Empirical conclusion.}
Across the study, five of six registered predictions held. More importantly, the evidence follows
the mechanism at four levels: the audit rewards entanglement, optimization removes own-failure
support, refresh causes held-out and cross-generator component forgetting, and that forgetting
raises downstream violations. The failed prediction is informative rather than cosmetic: the
monitored AUC is not reliably repaired by refresh, strengthening the narrower conclusion that it
is an unsuitable control signal. The successful rescue identifies the actionable fix used in the
next section: preserve or deliberately reconstruct the component's own failure distribution.

\section{Repair: restore failure support}
\label{sec:fix}

\paragraph{When can the audit recover component behavior?}
Three routes exist: condition directly on semantic failure bins when they have
support; impose an externally justified restriction $A_{\rm cross}\in[\ell,u]$, which
for $q<1$ gives
\begin{equation}
A_{\rm own}\in\left[
\max\!\left(0,\tfrac{A-qu}{1-q}\right),
\min\!\left(1,\tfrac{A-q\ell}{1-q}\right)\right];
\label{eq:sensitivity_interval}
\end{equation}
or vary the failure mixture with bin-conditional distributions fixed, which by
Proposition~\ref{prop:identified_set} reduces recovery to the rank of the mixture
matrix and also restores support lost to saturation
(Appendix~\ref{app:identifiability_proof}). \textbf{This is a designed reweighting/resampling
operation, not a license to treat arbitrary policy checkpoints as mixture rows:} if either
bin-conditional score distribution changes, naive two-row inversion is biased. Appendix~\ref{app:transfer}
gives the explicit drift term and an empirical boundary case where $A_{\rm own}$ shifts from $.965$
to $.821$ out of distribution and the transported forecast misses by $.101$.
Across two policies, the AUC change splits exactly into a mixture-shift term and
conditional-ordering drift, so both mechanisms can coexist
(Eq.~\ref{eq:change}, Appendix~\ref{app:moved}).

\paragraph{An audit card.}
When aggregate outcome AUC changes for a component head: \textbf{(1)} define semantic failure bins and report
their weights and per-pool identified intervals (Eq.~\ref{eq:identified_interval}) with
a cluster-bootstrap outer interval, flagging pools with $q\ge\max(A,1-A)$;
\textbf{(2)} if the interval is uninformative, label $m$ random off-target candidates
and estimate the pair-weighted within-pool $A_{\rm own}$ directly; \textbf{(3)} if the
own-failure bin lacks support, construct two controlled negative mixtures by reweighting or
resampling fixed bin-conditional candidates, then invert the $2\times2$ system of
Eq.~\ref{eq:mix}; \textbf{(4)} do not use aggregate on-policy outcome AUC as the sole criterion to select, early-stop or refresh a
component verifier (Proposition~\ref{prop:inversion}); refresh only on
data that contain its own failures and report its quality off-policy (Section~\ref{sec:refresh});
\textbf{(5)} audit separately the interface that consumes the score (Appendices~\ref{sec:select}--\ref{sec:train}).
The next result prices steps 2 and 3.

\begin{proposition}[Cost of restoring identification]
\label{prop:audit_cost}
Fix a pool's positives and let $h(x)\in[0,1]$ be the fraction of them the head ranks
above a negative $x$ (ties $\tfrac12$), so $A_{\rm own}=\mathbb E[h(X)\mid X\in
B_{\rm own}]$. \textbf{(i) Passive audit.} Label $m$ negatives drawn i.i.d.\ from the
pool's failure mixture and average $h$ over the revealed own failures. If
$m(1-q)\ge\max\{8\log(2/\delta),\,\log(4/\delta)/\varepsilon^2\}$, the estimate is within
$\varepsilon$ of $A_{\rm own}$ with probability at least $1-\delta$. Conversely, for
$\varepsilon\le\tfrac14$ and $\delta<\tfrac14$, every estimator built from such a sample
errs by at least $\varepsilon$ with probability above $\delta$ on some instance unless
$m(1-q)\ge\max\{q\log(1/(2\delta)),\,\log(1/(4\delta))/(24\varepsilon^2)\}$, so the two
budgets match up to constants. \textbf{(ii) Designed audit.} If aggregate AUCs are
estimated within $\varepsilon_1,\varepsilon_2$ at known weights $q_1\ne q_2$, inverting
Eq.~\ref{eq:mix} gives $|\hat A_{\rm own}-A_{\rm own}|\le(q_2\varepsilon_1+q_1
\varepsilon_2)/|q_2-q_1|$, independent of the deployed policy's $q$.
\end{proposition}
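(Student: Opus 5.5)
Part (i) reduces to a random number of bounded samples. Write $N\sim\mathrm{Bin}(m,1-q)$ for the number of revealed own failures among the $m$ labelled negatives. Conditional on $N=n\ge1$, those failures are i.i.d.\ draws from $B_{\rm own}$, and their $h$-values lie in $[0,1]$ with mean $A_{\rm own}$.

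\textbf{Upper bound.} I would split the failure probability $\delta$ into two halves.
\begin{itemize}
\item A multiplicative Chernoff bound gives $\Pr[N<m(1-q)/2]\le\exp(-m(1-q)/8)\le\delta/2$ once $m(1-q)\ge8\log(2/\delta)$.
\item On the event $N\ge m(1-q)/2$, Hoeffding conditional on $N$ gives error above $\varepsilon$ with probability at most $2\exp(-2N\varepsilon^2)\le2\exp(-m(1-q)\varepsilon^2)$.
\item This is at most $\delta/2$ when $m(1-q)\ge\log(4/\delta)/\varepsilon^2$.
\end{itemize}
A union bound finishes the upper bound. The only care needed is to condition on $N$ so that the revealed sample is exactly i.i.d.\ from the own bin; stopping-time issues do not arise.

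\textbf{Lower bound.} I would give two separate instance families and take the maximum.
\begin{itemize}
\item \emph{Support term.} Fix two instances that differ only in the own-bin $h$-distribution, one with $A_{\rm own}=0$ and one with $A_{\rm own}=1$ (point masses), and with identical cross bins. On the event $N=0$ the samples have the same law, so any estimator errs by at least $\tfrac12>\varepsilon$ on one of them with probability at least $\Pr[N=0]/2=q^m/2$. Requiring $q^m/2\le\delta$ forces $m\log(1/q)\ge\log(1/(2\delta))$. Since $\log(1/q)\le(1-q)/q$, this gives $m(1-q)\ge q\log(1/(2\delta))$.
\item \emph{Estimation term.} Use Le Cam's two-point method with own-bin $h\sim\mathrm{Bernoulli}(\tfrac12\pm2\varepsilon)$, valid for $\varepsilon\le\tfrac14$. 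The $m$-sample laws differ only through the revealed own failures. Their KL divergence is $\mathbb E[N]\cdot\mathrm{KL}(\mathrm{Ber}(\tfrac12+2\varepsilon)\|\mathrm{Ber}(\tfrac12-2\varepsilon))$, and this per-sample divergence is at most $c\varepsilon^2$ for $\varepsilon\le\tfrac14$.
\item The Bretagnolle--Huber inequality then gives error probability at least $\tfrac14\exp(-m(1-q)c\varepsilon^2)$. Forcing this below $\delta$ yields $m(1-q)\ge\log(1/(4\delta))/(c\varepsilon^2)$.
\end{itemize}

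\textbf{Main obstacle.} The hard step is bookkeeping the constant to get exactly $24$. The natural route bounds the KL by a $\chi^2$ divergence: $\chi^2=16\varepsilon^2/(\tfrac14-4\varepsilon^2)$, which is at most $\tfrac{256}{3}\varepsilon^2$ for $\varepsilon\le\tfrac14$. That constant is too loose for $24$. So I would instead take a smaller separation, $\tfrac12\pm\varepsilon$, with the two means $2\varepsilon$ apart so that any estimator within $\varepsilon$ of the truth distinguishes them. I would then adjust the $\tfrac14$ in the Bretagnolle--Huber bound until $c\le24$. If this fails, I would accept a different absolute constant, since the claim is only matching up to constants.

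\textbf{Part (ii).} This is linear algebra. Equation~\ref{eq:mix} at two weights gives
\[
A_k=(1-q_k)A_{\rm own}+q_kA_{\rm cross},\qquad k=1,2.
\]
Eliminating $A_{\rm cross}$ gives $A_{\rm own}=(q_2A_1-q_1A_2)/(q_2-q_1)$. Substituting the estimates and applying the triangle inequality yields $|\hat A_{\rm own}-A_{\rm own}|\le(q_2\varepsilon_1+q_1\varepsilon_2)/|q_2-q_1|$. The deployed $q$ never enters, because the bin-conditional distributions are held fixed, as in Proposition~\ref{prop:identified_set}.
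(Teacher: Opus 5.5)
Your plan follows the paper's proof step for step. It uses the same Chernoff/Hoeffding split of $\delta$ with a union bound for the upper bound. It uses the same $N=0$ indistinguishability argument with $\log(1/q)\le(1-q)/q$ for the $q\log(1/(2\delta))$ term. It uses a Bernoulli two-point construction with $m$-sample divergence $m(1-q)\,\mathrm{KL}$ and Bretagnolle--Huber for the $\varepsilon$-term, and the same linear inversion for (ii).

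The one step that does not go through as written is the constant in the $\varepsilon$-dependent lower bound. Your first choice $\tfrac12\pm2\varepsilon$ cannot give $\mathrm{KL}\le c\varepsilon^2$ on all of $\varepsilon\le\tfrac14$ for any finite $c$. At $\varepsilon=\tfrac14$ the two laws are $\mathrm{Ber}(1)$ and $\mathrm{Ber}(0)$, so the divergence is infinite. Your $\chi^2=16\varepsilon^2/(\tfrac14-4\varepsilon^2)$ blows up there, and the claimed bound $\tfrac{256}{3}\varepsilon^2$ holds only for $\varepsilon\le\tfrac18$. Also, the $\tfrac14$ in Bretagnolle--Huber is not a free parameter to tune. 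It is exactly what produces the $\log(1/(4\delta))$ in the statement.

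Your fallback does close this, but you should carry out the computation rather than leave the constant open.
\begin{itemize}
\item With $\tfrac12\pm\varepsilon$, $\mathrm{KL}=x\log\frac{1+x}{1-x}$ with $x=2\varepsilon\le\tfrac12$.
\item Since $\log\frac{1+x}{1-x}/x$ is increasing, this gives $\mathrm{KL}\le 2\log 3\,x^2=8\log 3\,\varepsilon^2<24\varepsilon^2$.
\item Separation of exactly $2\varepsilon$ suffices because the claim is that the estimator errs by \emph{at least} $\varepsilon$: no estimate can be strictly within $\varepsilon$ of both targets. Your phrasing that an estimator within $\varepsilon$ ``distinguishes'' them should be stated that way.
\end{itemize}
The paper instead takes $\tfrac12\pm\tfrac32\varepsilon$, so the targets are $3\varepsilon>2\varepsilon$ apart and the error is strictly greater than $\varepsilon$. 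It then uses $x\log\frac{1+x}{1-x}\le\tfrac83x^2$ on $x=3\varepsilon\le\tfrac34$ to land on exactly $24$. Your completed version yields a slightly better constant for the non-strict statement as written.
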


The passive audit thus becomes expensive exactly where Proposition~\ref{prop:identified_set}
makes the aggregate uninformative, and impossible once the own bin is empty; a designed
mixture removes the dependence (proofs in Appendix~\ref{app:v23}). On the ECG utility
head, $\sqrt{m(1-q)}\cdot\mathrm{RMSE}$ stays at $0.20$--$0.24$ for every $q\le0.95$,
$m=100$ labels give $96$--$98\%$ subject-bootstrap coverage, and two-mixture inversion
reaches RMSE $0.01$--$0.02$ at $|q_2-q_1|=0.6$ (Appendices~\ref{app:theory},~\ref{app:v23}).
The same designed mixture is what protects a refresh: $R_{\rm mix}$ in Table~\ref{tab:refresh}
differs from $R_F$ only by re-injected base-policy failures and keeps canonical $F$-AUC at $1.000$ in
every setting, and re-injection likewise recovers $A_{\rm own}$ with RMSE $\le0.02$ on post-DPO ECG
pools whose own bin is empty (Appendix~\ref{app:v23}).

\section{Scope: what the results do and do not establish}
\label{sec:scope}

The AUC symptom does not identify its cause: an exact oracle collapses and an unbounded
autoencoder genuinely exploits the critic (feasibility $0.9999$ predicted, $1.8\%$ actual;
Appendix~\ref{app:escape}); this is not a claim that in-bounds exploitation is absent.
Proposition~\ref{prop:inversion} is exact, but its engine and GSM8K checks reuse published
quantities and are post-hoc. The GoEmotions study is a post-hoc natural multi-label text stress test
with a small bag-of-words verifier; the AG News experiment is a more constructed control with
24 deterministic robustness splits rather than independent replications. Together they establish that
the mechanism is not specific to engineered numeric features, not that our refresh effect has been
measured with an LLM-scale semantic verifier. Contemporary rubric-RL work independently shows
criterion-level saturation and scalar aggregation at LLM scale \citep{gunjal2026rubrics,tyagi2026policyaware};
we use that literature only to establish interface relevance, not as validation of our inversion or refresh
claims. The ECG refresh study is CPU-only and uses one benchmark whose component is a deterministic
resource constraint, two head classes and ten subject splits; we did not refresh an LLM-scale verifier,
did not study online RL with a refreshed verifier, and our prediction that refresh repairs the monitor failed. Engine and GSM8K policies are Qwen2.5, the
strongest training result is offline DPO with three seeds, and ECG training is informative in one
family. The audit requires trusted component labels (proxy bins change the estimand); the engine
oracle is simplified, and ECG is a signal-processing benchmark, not a clinical validation.

\section{Related work}
\label{sec:related}
Modern post-training makes the component/aggregate interface concrete. Rubric RL scores
prompt-specific criteria separately and then aggregates them into scalar on-policy rewards
\citep{gunjal2026rubrics,mahmoud2026rubric}; policy-aware rubric work further shows that criterion
informativeness changes with the policy because many criteria become saturated or dead
\citep{tyagi2026policyaware}. Fresh policy interactions are also used to refine reward or preference
models \citep{sun2025ugda,bai2022training}. Two adjacent loop failures are especially relevant:
on-policy optimization against one verifier can remove behaviors needed for later objectives
\citep{wei2026support}, and a verifier useful on one task can regress a learner after task shift
\citep{goodverifiers2026}. We instead study the \emph{audit interface}: full-outcome AUC can stop
identifying one component, prefer an entangled verifier, and induce a harmful refresh even when the
component verifier was initially correct.

The statistical ingredients are classical. ROC behavior depends on case mix
\citep{sadatsafavi2022mroc}; partial identification and trimming bounds describe incomplete
observables \citep{manski2003partial,tamer2010partial,horowitz1995identification}; deliberately
varying mixture weights is a linear special case of finite-mixture identification
\citep{henry2014mixtures}; the within/cross split parallels xAUC \citep{kallus2019xauc}; and the
population AUC maximizer is the classical optimal ranker \citep{clemencon2008ranking}. We do not
claim these ingredients as new. The new consequence is their combination under policy-induced
\emph{semantic} failure mixtures: complete loss of component identification, a calculable
verifier-ranking inversion, and an audit-driven maintenance failure.

\label{main:body_end}

\section*{AI use statement}
Generative AI was used to improve the clarity and presentation of manuscript claims, assist with software implementation and debugging, and provide critical feedback on claims and arguments during manuscript development. AI-generated suggestions were treated as editorial and technical assistance rather than as evidence or independent validation. The authors are fully responsible for the final claims, proofs, code, citations, analyses, and submission.

\section*{Ethics statement}
The engine evaluations concern a simplified geometry, mass and printability model,
not safety certification or fabrication validation; propulsion design can have
dual-use implications. The ECG study uses public, de-identified PhysioNet recordings
(MIT-BIH Arrhythmia and Noise Stress Test databases) under their published terms; it is
a signal-processing deployment benchmark, not a diagnostic or clinical validation, and
its outputs should not be used for patient care.

\section*{Reproducibility statement}
The supplement contains the exact anonymous source snapshot, the CPU-only GoEmotions and
AG News semantic stress-test scripts with frozen JSON summaries, and the frozen controller
threshold analysis used in the final revision. The original large engine/ECG/GSM8K candidate
pools and model checkpoints are not bundled in this lightweight artifact; their protocols,
registered predictions, assumptions and reported frozen summaries are documented in the
appendix. Thus the supplement supports an exact paper rebuild and independent reruns of the
public-data semantic controls, but not an end-to-end regeneration of every large system
experiment from raw candidates. See Appendices~\ref{app:train}, \ref{app:goemotions},
\ref{app:agnews}, \ref{app:prereg} and~\ref{app:compute}.

\bibliography{refs}
\bibliographystyle{iclr2027_conference}

\appendix
\section*{Appendix roadmap}
The appendix is intentionally evidence-first. Appendices~A--AD contain the oracle definitions,
core audit-collapse controls, identification proofs, transfer checks, preregistration material and the
refresh/semantic experiments that support the main claim. Appendices~AE--AF are optional boundary
analyses of downstream selection and DPO interfaces; they are not needed for Propositions~1--3 or
the refresh-trap conclusion. Bundled scripts and frozen summaries are indexed in the supplement README; appendix-only workspace paths are provenance labels and do not imply that the corresponding large artifacts are packaged.

\makeatletter
\renewcommand\thesection{\ifnum\c@section>26 A\@Alph{\numexpr\c@section-26\relax}\else\@Alph\c@section\fi}
\makeatother
\section{The engine oracle}
\label{app:oracle}

The solver decodes a design vector to a multi-zone body, computes its mass by
integration, evaluates printability (overhang, minimum ligament and wall, DfAM
flags) and envelope fit, and returns \texttt{feasible} as their conjunction, in
$7.9$\,ms; the $200\,000$-row reference sample took $72$\,s. Its thermochemistry is
checked against NASA CEA through \texttt{rocketcea}, independent of the project's
own Cantera path: across $28$ operating points the maximum error is $0.25\%$ on
chamber temperature and $1.7\%$ on characteristic velocity. The flow solver is
grid-convergent on the Sod problem ($L_1$ error $0.0079 \to 0.0042 \to 0.0024$ at
$100/200/400$ cells) and reproduces the oblique-shock relation to $1.2^\circ$. Those
checks cover the flow path, which is sized from the requirement and not the
generator's to choose. The on-target criterion is two-sided because under a
one-sided one $86.4\%$ of on-target draws in the reference sample satisfy it by
being small rather than by meeting the budget.

\section{The identity for the mass head}
\label{app:mass}

With the roles swapped, the mass head's score against $y = F \wedge M$ decomposes as
$(1-q)\,\mathrm{AUC}(s_M; y{=}1 \text{ vs } F{=}0) + q\,\mathrm{AUC}(s_M; y{=}1 \text{ vs }
F{=}1, M{=}0)$. The second term is the head's own sub-problem and is $0.96$--$0.99$ on
every arm; the first is $0.84$--$0.95$, since infeasible designs are also, on
average, further from the budget. As $q \to 1$ the mass head's AUC against $y$ rises
to its own accuracy ($0.937$ on the base pool, $0.98$ on the trained pools).

\section{GSM8K with a conjunctive target}
\label{app:gsm}

\begin{table}[h]
\centering
\caption{GSM8K, $500$ test prompts, three training seeds per trained arm. Upper:
policy rates, median within-pool score SD at $K=64$, and correctness-head AUC
against the conjunctive target. Lower: supply and answer at $K=16$ (first $16$
of $64$ samples) under sum, soft log-probability and correctness-only selectors.}
\label{tab:gsm}
\small
\setlength{\tabcolsep}{3pt}
\begin{adjustbox}{max width=\linewidth}
\begin{tabular}{lcccccc}
\toprule
policy & correct & boxed & on target & SD format & SD correct & correct AUC \\
\midrule
base & $0.601$ & $0.42$ & $0.302$ & $8.6$ & $1.45$ & $0.755$ \\
sum DPO & $0.648\pm0.006$ & $0.98$ & $0.647\pm0.006$ & $0.02$ & $0.85\pm0.05$ & $0.690\pm0.006$ \\
gated DPO & $0.644\pm0.009$ & $0.98$ & $0.642\pm0.009$ & $0.02$ & $0.83\pm0.08$ & $0.689\pm0.012$ \\
\bottomrule
\end{tabular}
\end{adjustbox}
\medskip
\begin{tabular}{lcccc}
\toprule
policy & supply@16 & sum & log-prob & correct only \\
\midrule
base & $0.872$ & $0.734$ & $0.736$ & $0.576$ \\
sum DPO & $0.933\pm0.005$ & $0.673\pm0.014$ & $0.669\pm0.014$ & $0.670\pm0.015$ \\
gated DPO & $0.927\pm0.006$ & $0.685\pm0.011$ & $0.686\pm0.014$ & $0.685\pm0.015$ \\
\bottomrule
\end{tabular}
\end{table}

At $K{=}64$ supply is $0.948$ (base) and $0.967$--$0.969$ (trained); best-of-$64$ by the
sum is $0.762$ (base) against $0.645 \pm 0.012$ and $0.657 \pm 0.011$. Pairs built from
the base train slice by the sum and by the gate coincide (chosen on target $0.521$
against $0.520$, rejected $0.079$ against $0.080$); pairs built from a trained
policy's own samples separate on target from off at $0.70$ against $0.59$ under
each tested rule, which closely follows the correctness head there.

\section{Engine arms not in the main text}
\label{app:engine}

\paragraph{Sampling budget.} From one $64$-sample run per arm (first $K$ taken,
nested), answer rate under the two-stage gate, one seed each ($1\,600$-pair policies):
1.5B base $0.028/0.078/0.204/0.291/0.391$ and 1.5B DPO
$0.104/0.265/0.452/0.512/0.561$ at $K = 1/4/16/32/64$; 7B base
$0.044/0.142/0.318/0.441/0.544$ and 7B DPO $0.096/0.204/0.365/0.428/0.504$. The
single-seed 7B reversal at $K{=}64$ is $1.8$ standard errors and uses the weakest of
the three 7B seeds; we do not claim it.

\paragraph{Training-side mitigations.} DPO at $\beta \in \{0.1, 0.5, 1.0\}$ and IPO at
target margins $0.005$ and $0.020$ on the 1.5B designer ($1\,600$ pairs, one seed
each) reach feasible rates $0.39$--$0.87$; each sits on Eq.~\ref{eq:mix} at its own
$q$ (Table~\ref{tab:law}); under the two-stage gate they capture $0.75$--$0.93$ of
their supply and the best composed system is the one with the most supply ($\beta =
1.0$, $0.514$). An IPO target $40\times$ out of reach ($5.0$ nats/token) broke the
output format (parse rate $0.40$) and is excluded from every table.

\paragraph{Scale.} Under the gate the 7B designer after DPO reaches $0.414 \pm 0.029$
at $K{=}16$ against the 1.5B's $0.439 \pm 0.015$; its feasibility head saturates to
the same endpoint with five times the seed spread. The 7B base policy's parse rate is
$0.76$; on parsed candidates its feasible rate is $0.43$.

\paragraph{Latent search.} Two hundred steps of gradient ascent on the unconstrained
autoencoder's latent code against the critic's two terms give supply $0.937 \pm
0.017$ at $K{=}64$ against $0.336 \pm 0.003$ for direct sampling (three decoder
seeds), not compute-matched. Its pools are the ones with $A_{\rm cross} = 0.79$:
joint ascent couples the heads.

\begin{figure}[t]
\centering
\includegraphics[width=0.92\linewidth]{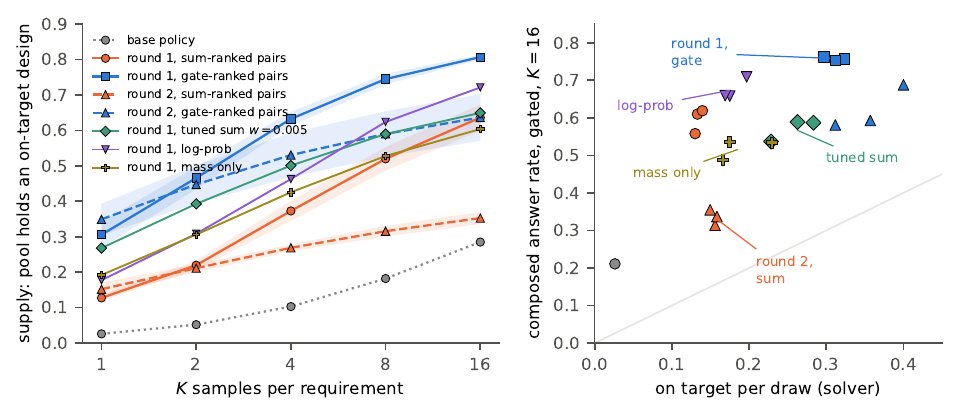}
\caption{Training-side supply and selection geometry. Left:
supply against $K$ (seed means; bands one sd for packaged arms). Right: per-draw
success against composed answer under the gate, one marker per seed.}
\label{fig:training}
\end{figure}

\section{The designs in parameter space}
\label{app:designs}

\begin{table}[h]
\centering
\caption{On-target designs of each policy (test split, all seeds pooled) against the
prior's on-target designs in the reference sample. ``At bound'': radius controls
within $0.05$ of $0$ or $1$; ``block-like'': every radius control above $0.9$;
``sd'': mean per-coordinate standard deviation; $L_2$: mean pairwise distance
between designs; ``families'': share of the four lattice families
(gyroid / Schwarz-P / diamond / IWP).}
\label{tab:designs}
\small
\begin{adjustbox}{max width=\linewidth}
\begin{tabular}{lcccccccc}
\toprule
designs & $n$ & distinct & radius mean & at bound & block-like & sd & $L_2$ & families \\
\midrule
prior (reference) & $1\,594$ & $1.00$ & $0.50$ & $0.03$ & $0.00$ & $0.28$ & $2.15$ & $.24/.25/.25/.26$ \\
base policy & $418$ & $0.96$ & $0.49$ & $0.05$ & $0.00$ & $0.23$ & $1.80$ & $.30/.27/.24/.19$ \\
round 1, sum pairs & $6\,442$ & $1.00$ & $0.42$ & $0.00$ & $0.00$ & $0.17$ & $1.34$ & $.42/.40/.16/.03$ \\
round 1, gate pairs & $14\,915$ & $1.00$ & $0.49$ & $0.01$ & $0.00$ & $0.17$ & $1.34$ & $.09/.32/.48/.12$ \\
round 2, sum pairs & $7\,420$ & $1.00$ & $0.42$ & $0.00$ & $0.00$ & $0.07$ & $0.57$ & $.00/.45/.55/.00$ \\
round 2, gate pairs & $17\,104$ & $1.00$ & $0.48$ & $0.00$ & $0.00$ & $0.13$ & $1.04$ & $.17/.45/.35/.03$ \\
\bottomrule
\end{tabular}
\end{adjustbox}
\end{table}

Every policy's on-target designs have $2.85$--$2.97$ active zones on average, a
printability score of $1.0$, and mass at $0.99$ of the budget on average
(\texttt{design\_degeneracy.py}). The benchmark --- oracle, critic, splits, pools and every
result file --- is released with the paper.

\paragraph{Within-requirement diversity.} Conditional on at least two on-target
samples per requirement, mean within-requirement coordinate spread is $0.101$--$0.104$
for matched sum DPO and $0.097$--$0.103$ for gated DPO. It falls to $0.038$--$0.039$ in
the sum-ranked second round, against $0.059$--$0.064$ for gated ranking. This
conditions on successful samples and is descriptive.

\section{The escape}
\label{app:escape}

\begin{figure}[h]
\centering
\includegraphics[width=\linewidth]{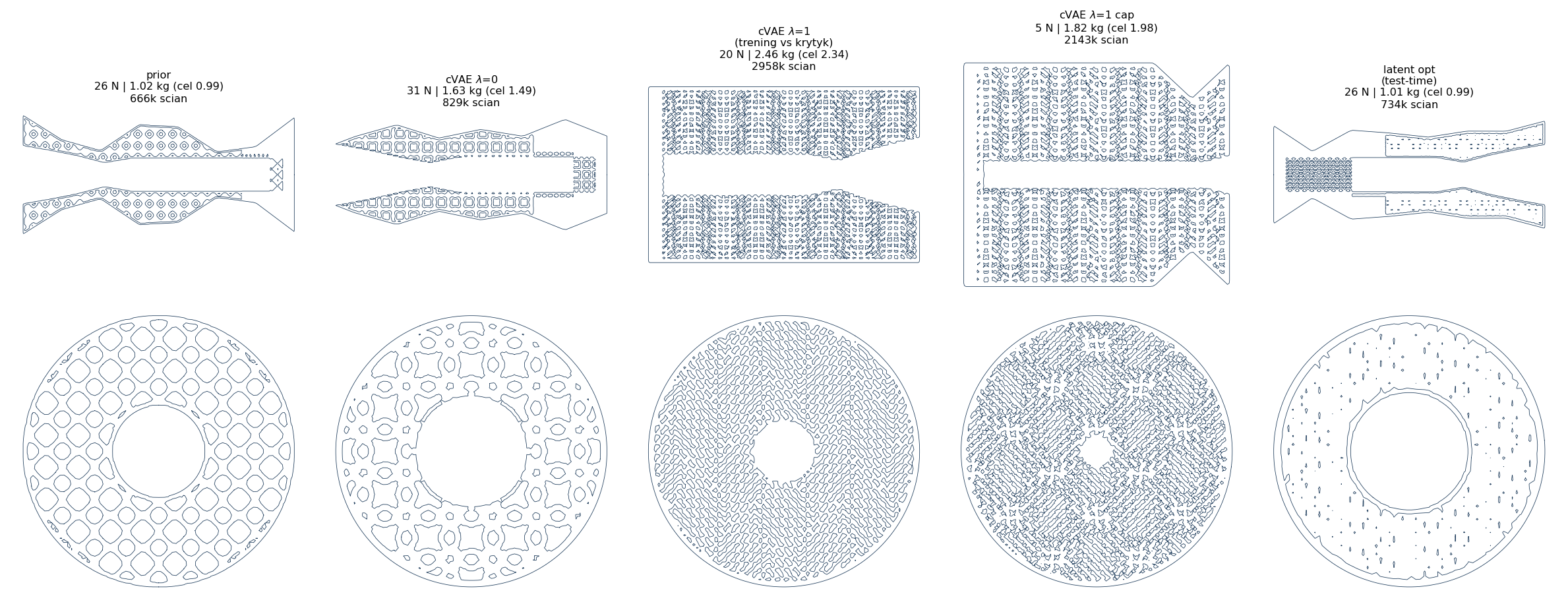}
\caption{One design from each cVAE arm's pool, decoded by the function the solver
grades (longitudinal section above, cross-section at mid-length below). The prior,
the unconstrained decoder and latent search produce engines with visibly different
lattice families; the decoder trained on the critic's terms produces a block ---
every radius control pinned at maximum after clipping from far outside the box ---
that passes the mass-and-printability criterion and that the critic scored feasible
at $0.9999$ before clipping.}
\label{fig:designs}
\end{figure}

\section{Training details}
\label{app:train}

Policies: Qwen2.5-1.5B-Instruct and Qwen2.5-7B-Instruct, LoRA rank $16$ ($\alpha = 32$,
dropout $0.05$), DPO $\beta = 0.1$ (sigmoid loss), learning rate $5\times 10^{-5}$,
batch $4 \times$ accumulation $2$ at 1.5B and $2 \times4$ at 7B, three epochs,
max length $2560$ for engine DPO, bf16, trl. Pairs: the score's top-$4$
against bottom-$4$ of $16$ samples per requirement (engine) or of $8$ per prompt
(GSM8K); unparseable completions carry the score $-10^3$ and so form the rejected
side. Engine pair pools: the original arms used $400$ requirements drawn by the
bench ($1\,600$ pairs, $\approx 600$ steps); the round-1 arms of Table~\ref{tab:train}
use the first $1\,000$ requirements of the committed train split ($4\,000$ pairs,
$\approx 1\,500$ steps); round 2 starts from the round-1 sum policy (seed 0), merged,
which is also its reference. Seeds vary initialisation and batch order; the pairs
are deterministic given the pool. GSM8K heads: RoBERTa-base, two labels, two epochs,
learning rate $2\times 10^{-5}$, on $12\,000$ (prompt, completion) pairs from the train
slice with a $15\%$ validation split.

Engine DPO hard-gated ranking uses the squared log-mass proximity score for
$P(F)\geq0.5$ and subtracts $100$ below the threshold; unparsed rows remain at
$-1000$. Two-stage evaluation instead minimises relative linear-mass error among
passing candidates and falls back to maximum $P(F)$ if none pass. GRPO uses a
soft reward $\log\sigma(s_F)+\texttt{mass\_term}$ versus $s_F+\texttt{mass\_term}$,
and unparsed reward $-100$. The TRL 1.12.0 GRPOTrainer recipe uses its default
DAPO token-normalised loss and group reward scaling, not the original sequence-
normalised GRPO loss. Group size $8$, LoRA $r=16$, two epochs over $1000$ prompts,
lr $10^{-5}$, $\beta=0.04$, temperature $0.9$, top-p $1.0$ for training,
completion limit $700$, per-device completion batch $8$, accumulation $2$,
yield $1000$ optimizer steps; evaluation uses top-p $0.95$.

\section{7B and online GRPO experiments: registration and support}
\label{app:new_campaign}

Registered 2026-09-15 before the runs. P-A1 held: hard-gated DPO at 7B improves
per-draw and two-stage composed answer on each seed, ratio $1.79\times$ inside
$[1.3,2.3]$. P-A2 held: feasibility is lower on every gated seed. P-A3 is partial:
exact identity, $A_{\rm cross}\in[0.45,0.58]$ and sum-arm $q\geq0.9$ all hold;
$A_{\rm own}\geq0.93$ on every arm fails on gated seed 1 ($0.9193$ parsed-only,
$0.9250$ including unparsed failures). No run is removed or threshold changed.

\begin{table}[h]
\centering
\caption{Parsed-only own-bin support for the added 7B runs. Bootstrap intervals
are post-hoc percentile intervals from 3000 requirement-level resamples, seed
20260918, not training-seed intervals; pairs within a requirement are dependent.}
\small
\setlength{\tabcolsep}{3pt}
\begin{adjustbox}{max width=\linewidth}
\begin{tabular}{llrrrr}
\toprule
ranking & seed & $A_{\rm own}$ & \shortstack{supported\\requirements} & \shortstack{positive--infeasible\\pairs} & 95\% CI \\
\midrule
sum & 0 & $1.0000$ & 11 & 127 & $[1.0000,1.0000]$ \\
sum & 1 & $0.9679$ & 38 & 313 & $[0.9371,0.9933]$ \\
sum & 2 & $0.9929$ & 20 & 270 & $[0.9762,1.0000]$ \\
hard gate & 0 & $0.9834$ & 70 & 815 & $[0.9677,0.9964]$ \\
hard gate & 1 & $0.9193$ & 83 & 1218 & $[0.8634,0.9662]$ \\
hard gate & 2 & $0.9822$ & 69 & 773 & $[0.9612,0.9969]$ \\
\bottomrule
\end{tabular}
\end{adjustbox}
\end{table}

An empirical interval of $[1,1]$ with only 11 supported requirements indicates no
observed errors, not proof of perfect ordering. The interval for the failed seed
includes $0.93$ but does not change the registered outcome. The intersection of
requirements supporting $A_{\rm own}$ in all six matched 7B arms contains only one
requirement, so restricting every arm to a common support would replace one support
problem with a more severe one. Identity statistics therefore average each arm over
its informative pools; the direct feasibility audit in Appendix~\ref{app:selection_audit}
uses $40$--$225$ informative requirements per seed and gives AUC $0.940$--$0.991$.

GRPO predictions: feasibility above $0.8$ (P-B1) and sum per-draw in
$[0.08,0.18]$ held; supply advantage at least $0.10$ (P-B2) failed.
Gate per-draw above the sum on every seed held, but gate per-draw at least $0.20$
(P-B3) failed. Coordinate sd below $0.17$ (P-B4) held for both rewards.
All parse rates exceed $0.998$, so no engineering discard applies. The strict
no-difference branch is not supported for supply, but this is a modest online
result, not a large two-algorithm replication. Endpoint outcomes do not establish
the predicted reward-optimisation trajectory.

\begin{table}[h]
\centering
\caption{Training scope checks: a second DPO round and online GRPO. ``Composed'' uses the
same two-stage evaluator as Table~\ref{tab:train}; mean $\pm$ sd over three seeds.}
\label{tab:train_scope}
\small
\setlength{\tabcolsep}{4pt}
\begin{adjustbox}{max width=\linewidth}
\begin{tabular}{llccccc}
\toprule
policy & pairs/reward & on target / draw & feasible / draw & supply @16 & parse & composed @16 \\
\midrule
round 2 & sum pairs & $0.155 \pm 0.005$ & $0.86$ & $0.353 \pm 0.017$ & $0.95 \pm 0.05$ & $0.335 \pm 0.021$ \\
round 2 & gate pairs & $0.356 \pm 0.044$ & $0.88$ & $0.636 \pm 0.066$ & $0.98$ & $0.621 \pm 0.059$ \\
GRPO, online & sum reward & $0.086 \pm 0.002$ & $0.85$ & $0.481 \pm 0.004$ & $1.00$ & $0.460 \pm 0.004$ \\
GRPO, online & log-prob reward & $0.089 \pm 0.003$ & $0.82$ & $0.528 \pm 0.003$ & $1.00$ & $0.492 \pm 0.008$ \\
\bottomrule
\end{tabular}
\end{adjustbox}
\end{table}

\section{Registered predictions and outcomes}
\label{app:prereg}

\begin{table}[h]
\centering
\caption{Evidence chronology for the follow-up studies. ``Pre-specified''
means the named plan/protocol was committed before the corresponding new analysis
or training run; it does not imply that earlier results in the paper were unknown.}
\label{tab:chronology}
\scriptsize
\setlength{\tabcolsep}{3pt}
\begin{adjustbox}{max width=\linewidth}
\begin{tabular}{p{0.24\linewidth}p{0.18\linewidth}p{0.24\linewidth}p{0.27\linewidth}}
\toprule
follow-up & record & status before result & role in paper \\
\midrule
7B hard-gate / GRPO & registration & before added runs & scale / optimization-scope checks; misses retained \\
strong scalar controls & protocol & after original sum-vs-gate, before new policies & tests tuned, log-prob, and mass-only pair builders \\
three-conjunct gate & plan & before job & omitted-objective boundary test \\
1.5B contrast scalar & plan & before $w=0.1$ training & endpoint test after pair-geometry sweep \\
source-term transfer & plan & source terms/code frozen; rough outcome location known & pre-specified transfer, not fully prospective forecast \\
within-run transfer & plan & before checkpoint pools & next-checkpoint forecast \\
7B contrast scalar & none & after 1.5B contrast result & non-preregistered scale follow-up \\
\bottomrule
\end{tabular}
\end{adjustbox}
\end{table}

Committed predictions are reported with misses rather than filtered post hoc. The
chronology above distinguishes original results, prospective follow-ups, and later
post-hoc or non-preregistered checks. The detailed registered statements follow.
(1) Identity within $0.03$ with $A_{\rm cross} \in [0.45, 0.58]$ on every engine arm:
partially held (mean $0.016$, max $0.057$, so not every arm is within $0.03$;
cross $0.47$--$0.56$ on the original arms).
(2) Variance-normalised sums below $\eta = 0.5$, gates above $0.85$: held for the
z-scorings and the gates; exceeded by $0.03$ by the rank average; a fixed weight of
$0.005$, identified in a later unregistered ablation, was not registered and matches the gates.
(3) Argmax dominance of the logit separating trained from base pools: failed
(dominance is $0.70$--$0.93$ on every arm).
(4) Round-1 gated pairs: per-draw above $0.095$, supply above $0.460$, feasible below
$0.88$: held. Registered before the matched control existed; against the matched
control the gated advantage is $2.3\times$, not $3.3\times$.
(5) Round-2 sum pairs: per-draw at or below $0.10$: failed ($0.155$); supply fell instead.
(6) GSM8K: boxed rate above $0.85$ and $q$ above $0.85$: held ($0.98$, $0.96$).
$A_{\rm cross} \in [0.50, 0.65]$: missed low by $0.02$ on some seeds ($0.47$--$0.51$).
Composite: the registered no-cliff branch (spread ratio below $0.5$, rules agree
within $0.1$) obtained. Gate pairs carry more correctness contrast than sum pairs:
failed (identical pairs). The fall in best-of-$16$ was not registered.
(7) Strong training controls (T1--T5, Appendix~\ref{app:pair_audit}): tuned sum and
log-prob within $0.04$ of the gate --- failed, both below by $0.19$ and $0.08$;
mass-only below its falsifier --- held; every new arm above the unit sum --- failed;
$q>0.9$ on tuned-sum arms --- failed ($0.57$--$0.63$), $A_{\rm own}>0.9$ --- held.
(8) Pre-specified later-arm forecast on the twelve new arms (F1--F6, Appendix~\ref{app:transfer}):
error $\leq0.05$ --- held ($0.020$); persistence $>0.10$ on the non-saturated subset ---
missed by $0.007$; AUC monotone in $q$ with mass-only highest --- held on the mean,
missed on one seed ($0.737$ against a registered $>0.80$); $\eta\geq0.5$ everywhere ---
held, but the gate ties the tuned sum under the raw selector; three-conjunct terms
unchanged --- held.
(9) Pair-contrast sweep and conditioning of the new arms (C1--C4,
Appendices~\ref{app:pair_audit},~\ref{app:conditioning},~\ref{app:physics_gate}):
no weight within $0.85$ of the gate's contrast --- failed ($0.91$ at $w\approx0.1$);
mass gain $0.95$ for tuned sum and mass-only, $0.45$ for log-prob --- all failed at
$0.72$--$0.74$; printability below $0.80$ under mass-only --- held ($0.57$);
envelope conditioning by rule --- held on all four; no second-level collapse ---
held; mass-trackers hottest --- held; every arm above base survival --- failed on two
seeds. (10) Three-conjunct DPO (Appendix~\ref{app:physics_gate}): seven of eleven
held, falsifier not fired, all misses toward a smaller effect.
(11) Contrast-matched sum, 1.5B (Appendix~\ref{app:pair_audit}): decision rule
returned ``contrast suffices''; every registered point value (feasible $0.79$, per
draw $0.27$, supply $0.76$, composed $0.70$, gain $0.80$, $q$ $0.90$, envelope
$\rho<0.45$) missed upward --- the policy reached the gate; $A_{\rm own}>0.9$ and the
base-pool forecast within $0.04$ held. A later, non-preregistered 7B follow-up at
$w=0.05$ is reported in Appendix~\ref{app:pair_audit}; it is not counted as a
registered outcome.
(12) Within-run forecast (Appendix~\ref{app:transfer}): next-checkpoint error
$\leq0.06$ with persistence worse and $\geq14/18$ wins --- held ($0.049$, $0.096$,
$15/18$; $0.021$ against $0.064$ with pooled source terms, an unregistered variant); $q\geq0.8$ by step $500$ on the saturating runs and persistence worst on the
first transition --- held; $A_{\rm own}\geq0.9$ at every checkpoint --- failed once
($0.853$); tuned-sum $q>0.5$ by step $500$ --- missed by $0.002$; base-pool error
$\leq0.07$ --- held ($0.032$); null arm --- held.

\section{Closest-work map}
\label{app:related_map}
\begin{table}[t]
\centering
\caption{Closest literatures and the distinction relevant to this paper. We separate ingredients
that are classical from the new verifier-audit consequence.}
\label{tab:closest_work}
\small
\setlength{\tabcolsep}{4pt}
\begin{tabular}{p{0.23\linewidth} p{0.30\linewidth} p{0.39\linewidth}}
\toprule
literature & closest object & distinction in this work \\
\midrule
ROC / case mix \citep{sadatsafavi2022mroc} & how population composition changes ROC behavior & inverse identification from semantic failure bins; complete-erasure threshold and component-specific audit target \\
finite-mixture identification \citep{henry2014mixtures} & identification as mixture weights vary while components stay fixed & our two-mixture rank condition is a special linear case; novelty is the verifier-audit interpretation and its operational consequences \\
rubric RL / policy-aware criteria \citep{gunjal2026rubrics,tyagi2026policyaware} & criterion-level judgments are aggregated for on-policy RL; criterion saturation changes with the policy & establishes the modern component/aggregate setting; we analyze outcome-audit identification and maintenance rather than reward weighting \\
RM evaluation \citep{wen2025rethinking,kim2025rewardoveropt,razin2025teacher,shum2026swerm} & static verifier/RM metrics need not determine downstream optimized-policy quality & a specific structural reason: outcome AUC of a component verifier becomes a moving semantic mixture and can prefer the wrong verifier \\
verifier-induced support reshaping \citep{wei2026support} & optimizing one verifier can remove policy behaviors needed for later objectives & support loss is not claimed as new; we show that the audit itself loses component identification and can trigger a refresh that forgets the missing failure mode \\
verifier-loop regression \citep{goodverifiers2026} & a verifier that is useful on one task can regress a learner after task shift & our failure occurs even without a bad verifier: an exact component oracle can look worse solely because policy success changes the failure mixture \\
continual learning / replay \citep{robins1995catastrophic,kirkpatrick2017overcoming} & preserving old capabilities under sequential distribution shift & audit geometry predicts when optimization removes the semantic failure support that refresh needs; replay is a repair, not the novelty \\
reward overoptimization / verifier failure \citep{gao2023scaling,rafailov2024scaling,skalse2022defining,mahmoud2026rubric} & proxy or verifier failure under optimization & the same full-target-AUC symptom can arise from an exact own-conjunct oracle or from genuine critic exploitation \\
verifier-guided test-time selection \citep{cobbe2021verifiers,lightman2023verify,brown2024monkeys,snell2024scaling,dorner2026roc,tts2026wild} & ranker quality and inference-time scaling & marginal component AUC does not determine relative component influence inside a composite selector \\
interface transfer in reward models \citep{shum2026swerm} & similar test-time selection can coexist with different RL outcomes & formal separation among component audit, composite selection, and the preference-pair training interface \\
multi-objective post-training \citep{dai2024saferlhf,moskovitz2023constrained,rame2023soups,zhou2023modpo,wu2023finegrained,liu2026gdpo,bai2026rdpo} & constraints, reward combination, model interpolation, or scalarization & before DPO, scalarization is a data-construction operator: a selection-optimal weight need not preserve useful training-pair contrasts \\
\bottomrule
\end{tabular}
\end{table}

\section{Compute}
\label{app:compute}

Original GPU experiments ran on single A100-80GB nodes; the added 7B jobs allowed
A100-80GB or H100. Actual accelerator assignment for the added 7B jobs is not recoverable from the packaged logs, so we do not attribute those runs to a more specific GPU model.
Original DPO at 1.5B: $\approx 2.5$\,h per
seed; at 7B: $\approx 5$\,h. Generating $1\,000$ engine requirements at $K{=}16$:
$\approx 4.5$\,h at 1.5B; GSM8K $500 \times 64$: $\approx 3$\,h. The cVAE arms, latent
search, oracle scoring and every analysis of \S\ref{sec:law}--\ref{sec:train} ran on
CPU; re-labelling a $16\,000$-candidate pool takes $20$\,s on $20$ cores.

\section{Additional geometry observations}
\paragraph{What the original 1.5B designers propose.} The on-target criterion can be met by a
degenerate body --- the constrained autoencoder of \S\ref{sec:scope} met it with a
block --- so we grade the policies' on-target designs in parameter space against the
prior's (Appendix~\ref{app:designs}, Figure~\ref{fig:designs_llm}). None are blocks:
across $6\,400$--$17\,000$ on-target designs per policy, no design has every radius
control near its bound, $0$--$1\%$ of radius controls sit at a bound (prior $3\%$),
the mean radius control is $0.42$--$0.49$ (prior $0.50$), every design is distinct at
two decimals, bodies use $2.9$ zones on average, and the oracle's printability score
is $1.0$. What training changes is diversity: the per-coordinate spread of on-target
designs falls from $0.28$ (prior) and $0.23$ (base) to $0.17$ after round 1 under
either score, and to $0.07$ after a sum-ranked second round, whose designs use two of
the four lattice families; the gate-ranked second round keeps $0.13$ and three. The
narrowing of Figure~\ref{fig:training} is visible in the geometry.

\begin{figure}[t]
\centering
\includegraphics[width=\linewidth]{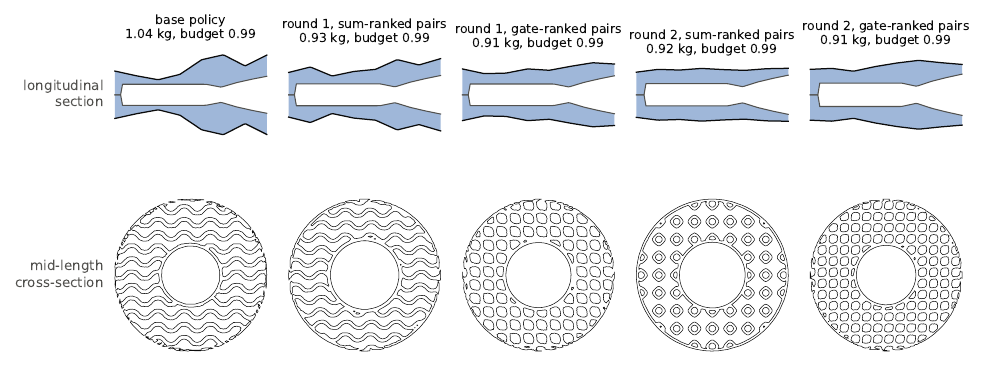}
\caption{One on-target design from each policy on the same requirement, decoded by
the function the solver grades: longitudinal section (bore and wall) and mid-length
cross-section through the lattice. The policies propose engines with different
lattice families and wall profiles, not blocks.}
\label{fig:designs_llm}
\end{figure}

\section{Post-hoc selection and uncertainty audit}
\label{app:selection_audit}

\begin{table}[h]
\centering
\caption{Selection efficiency $\eta$ at $K=16$ on identical engine pools using the
same critic. The raw unit sum performs poorly after feasibility saturates. Rules that
make the feasibility contribution flat in the saturated region recover most of the
available selection lift. Mean $\pm$ sd over seeds where applicable.}
\label{tab:rules}
\small
\setlength{\tabcolsep}{2.8pt}
\renewcommand{\arraystretch}{1.05}
\begin{adjustbox}{max width=\linewidth}
\begin{tabular}{lcccccc}
\toprule
rule & 1.5B base & 1.5B DPO, $1.6$k & 1.5B DPO, $4$k & 1.5B DPO, gated & 7B base & 7B DPO \\
\midrule
$\texttt{logit} + \texttt{mass}$ (as trained) & $0.26$ & $0.05 \pm 0.04$ & $0.10 \pm 0.02$ & $0.08 \pm 0.02$ & $0.18$ & $0.05 \pm 0.02$ \\
z-scored, reference statistics & $0.52$ & $0.49 \pm 0.02$ & $0.48 \pm 0.03$ & $0.36 \pm 0.01$ & $0.47$ & $0.47 \pm 0.11$ \\
z-scored per pool & $0.49$ & $0.29 \pm 0.02$ & $0.33 \pm 0.02$ & $0.49 \pm 0.01$ & $0.52$ & $0.35 \pm 0.04$ \\
rank average & $0.70$ & $0.53 \pm 0.03$ & $0.58 \pm 0.01$ & $0.71 \pm 0.03$ & $0.69$ & $0.54 \pm 0.03$ \\
$0.05\cdot\texttt{logit} + \texttt{mass}$ & $0.54$ & $0.53 \pm 0.03$ & $0.50 \pm 0.03$ & $0.37 \pm 0.01$ & $0.50$ & $0.50 \pm 0.11$ \\
\midrule
$0.005\cdot\texttt{logit} + \texttt{mass}$ & $\mathbf{0.85}$ & $0.91 \pm 0.03$ & $0.88 \pm 0.02$ & $0.83 \pm 0.02$ & $0.90$ & $0.92 \pm 0.03$ \\
$\log\sigma(\texttt{logit}) + \texttt{mass}$ & $0.51$ & $0.92 \pm 0.02$ & $0.89 \pm 0.00$ & $0.82 \pm 0.03$ & $0.76$ & $0.90 \pm 0.05$ \\
$p \geq 0.5$, then closest mass & $0.71$ & $\mathbf{0.94 \pm 0.02}$ & $\mathbf{0.92 \pm 0.01}$ & $\mathbf{0.90 \pm 0.01}$ & $\mathbf{0.92}$ & $\mathbf{0.94 \pm 0.04}$ \\
mass head alone & $0.50$ & $0.93 \pm 0.02$ & $0.91 \pm 0.01$ & $0.86 \pm 0.01$ & $0.74$ & $0.91 \pm 0.08$ \\
\midrule
one head trained on on-target & $0.58$ & $0.52 \pm 0.03$ & $0.44 \pm 0.02$ & $0.43 \pm 0.04$ & $0.67$ & $0.54 \pm 0.05$ \\
\bottomrule
\end{tabular}
\end{adjustbox}
\end{table}

\begin{table}[h]
\centering
\caption{AUC-preserving scale sensitivity on the matched 1.5B sum-DPO pools.
Multiplying the feasibility logit by positive $c$ preserves its marginal ROC/AUC
exactly, yet changes the additive selector's best-of-16 answer. Candidate pools and
critic outputs are fixed.}
\label{tab:scale_sensitivity}
\small
\begin{tabular}{lccccc}
\toprule
$c$ & $0.01$ & $0.1$ & $1$ & $10$ & $100$ \\
\midrule
raw-sum answer@16 & $0.544$ & $0.312$ & $0.185$ & $0.162$ & $0.160$ \\
\bottomrule
\end{tabular}
\end{table}

This follow-up uses stored test pools; it is post-hoc, not a new registered
prediction. We compute AUC directly against oracle feasibility on parsed samples,
using every feasible sample as positive, including feasible mass misses. Across
the matched DPO arms it is $0.940$--$0.991$, with $40$--$225$ informative
requirements per seed. Precision of the predicted passing set is $0.961$--$0.981$.
These measurements supplement own-bin AUC, which only uses on-target positives.
A fixed held-out score alone does not establish on-policy reliability.

\begin{table}[h]
\centering
\caption{Right-minus-sum differences on the same test requirements. DPO right arm:
hard-gated ranking; GRPO right arm: log-probability reward. Intervals use 2000
paired requirement bootstrap resamples after averaging the three fixed training
seeds within each requirement. They quantify test-requirement variation conditional
on those seeds, not population uncertainty over training seeds. Seed SD is reported
separately in the training tables.}
\label{tab:paired_audit}
\small
\begin{tabular}{llcc}
\toprule
family & endpoint & difference & requirement bootstrap 95\% CI \\
\midrule
DPO 1.5B & per draw & $0.1765$ & $[0.1630, 0.1905]$ \\
DPO 1.5B & supply@16 & $0.1707$ & $[0.1453, 0.1973]$ \\
DPO 1.5B & two-stage answer@16 & $0.1620$ & $[0.1373, 0.1877]$ \\
DPO 7B & per draw & $0.0827$ & $[0.0707, 0.0951]$ \\
DPO 7B & supply@16 & $0.1597$ & $[0.1377, 0.1820]$ \\
DPO 7B & two-stage answer@16 & $0.1477$ & $[0.1260, 0.1690]$ \\
GRPO 1.5B & per draw & $0.0035$ & $[-0.0014, 0.0086]$ \\
GRPO 1.5B & supply@16 & $0.0467$ & $[0.0287, 0.0653]$ \\
GRPO 1.5B & two-stage answer@16 & $0.0320$ & $[0.0123, 0.0513]$ \\
\bottomrule
\end{tabular}
\end{table}

\paragraph{Residual spread is not uniformly noise.} We permute feasibility logits
within each requirement's predicted passing set, preserving the set and mass
scores, and re-evaluate the raw sum (100 permutations). On gated 7B pools mean
raw answer falls from $0.237$ to $0.209$: residual ordering retains useful signal.
This does not make the raw sum competitive with two-stage selection, but it
precludes treating all residual feasibility variation as irrelevant noise.
Permutation is a conditional score ablation, not an intervention on training.

\paragraph{Parsing and the second-round lineage.} Relative to the single parent
policy, sum-ranked children lose supply even on requirements where both parent
and child pools are fully parsed: differences $-0.141$, $-0.110$, $-0.061$ on
$915$, $762$, $685$ requirements. Gated children improve supply on the corresponding
subsets by $0.050$, $0.103$, $0.218$. This shows that format breakage alone cannot
account for all observed loss; the subsets are post-hoc and selected, so this is
not a causal correction for parsing. Within-requirement coordinate spread is
reported in the main text conditional on at least two successful samples.

\paragraph{Top-1 failures and sampling budget.} The analysis artifact includes
nested $K=1,2,4,8,16$ answer curves, failures split into unparsed, infeasible parsed,
and feasible mass misses, and the fraction of requirements with a solution that
the selector misses. These directly measure the top of the ranking, which AUC
alone does not determine. For matched sum DPO at 1.5B, the raw sum misses an
available solution on $0.446$--$0.482$ of requirements, compared with
$0.035$--$0.048$ under two-stage selection.

\section{Complete original mixture table}\label{app:original_mixture}
\begin{table}[t]
\centering
\caption{Eq.~\ref{eq:mix} on the head that scores the learned conjunct, the original
policy-sampled pool, parsed candidates only; the identity is evaluated per
requirement before aggregation, and rows with $\pm$ are mean $\pm$ sd over three
training seeds. ``Predicted'' is $(1-q)A_{\rm own} + \tfrac12 q$, i.e. the
identity with $A_{\rm cross}$ set to $\tfrac12$; the one constant it uses,
$A_{\rm own}$, is measured on the same pool. The engine's 7B base has parse rate
$0.76$, the 1.5B base $0.97$; all other engine rows $\geq 0.95$.}
\label{tab:law_original}
\small
\setlength{\tabcolsep}{3pt}
\begin{adjustbox}{max width=\linewidth}
\begin{tabular}{llcccccc}
\toprule
 & policy & conjunct rate & $q$ & measured & $A_{\rm own}$ & $A_{\rm cross}$ & predicted \\
\midrule
engine & 1.5B base & $0.26$ & $0.33$ & $0.802$ & $0.951$ & $0.496$ & $0.800$ \\
 & 1.5B DPO, sum pairs, $1.6$k & $0.87$ & $0.97$ & $0.512 \pm 0.010$ & $0.976 \pm 0.013$ & $0.500 \pm 0.008$ & $0.513$ \\
 & 1.5B DPO, sum pairs, $4$k & $0.87$ & $0.97$ & $0.532 \pm 0.004$ & $0.966 \pm 0.020$ & $0.519 \pm 0.007$ & $0.513$ \\
 & 1.5B DPO, gated pairs, $4$k & $0.80$ & $0.92$ & $0.535 \pm 0.006$ & $0.944 \pm 0.016$ & $0.494 \pm 0.001$ & $0.537$ \\
 & 1.5B round 2, sum pairs & $0.86$ & $0.99$ & $0.488 \pm 0.004$ & $0.984 \pm 0.023$ & $0.484 \pm 0.004$ & $0.505$ \\
 & 1.5B round 2, gated pairs & $0.88$ & $0.97$ & $0.545 \pm 0.019$ & $0.943 \pm 0.010$ & $0.533 \pm 0.023$ & $0.513$ \\
 & 1.5B IPO / DPO $\beta$ sweep (4 arms) & $0.39$--$0.83$ & $0.48$--$0.94$ & $0.715$--$0.506$ & $0.93$--$0.95$ & $0.47$--$0.49$ & $0.725$--$0.527$ \\
 & 7B base & $0.32$ & $0.59$ & $0.682$ & $0.983$ & $0.481$ & $0.698$ \\
 & 7B DPO, sum pairs & $0.79$ & $0.92$ & $0.513 \pm 0.054$ & $0.978 \pm 0.015$ & $0.476 \pm 0.013$ & $0.537$ \\
\midrule
GSM8K & base & $0.42$ & $0.16$ & $0.919$ & $0.997$ & $0.507$ & $0.920$ \\
 & DPO, sum pairs & $0.98$ & $0.96$ & $0.506 \pm 0.004$ & $0.995$ & $0.489 \pm 0.006$ & $0.518$ \\
 & DPO, gated pairs & $0.98$ & $0.96$ & $0.497 \pm 0.016$ & $0.995$ & $0.479 \pm 0.013$ & $0.518$ \\
\bottomrule
\end{tabular}
\end{adjustbox}
\end{table}

\section{Matched train-pair follow-up: tuned sum, log-probability and mass-only DPO}
\label{app:pair_audit}

\paragraph{Protocol, fixed before training.} The protocol, committed as
\texttt{PROTOCOL\_STRONG\_CONTROLS\_2026-09-18.md}, was written after
the original sum-versus-gate result was known and before any new pool or policy
existed; it is a prospective follow-up, not an original preregistration. A new
Qwen2.5-1.5B-Instruct pool on all $500$ committed validation requirements ($K{=}16$,
seed 0, the same shots and critic) selects a weight from the thirteen-point grid
$0$, $0.0005$, $0.001$, $0.002$, $0.005$, $0.01$, $0.02$, $0.05$, $0.1$, $0.2$,
$0.5$, $1$, $2$ by the highest oracle-labelled validation top-1 answer; no test
label enters. The sweep
peaks at $w=0.005$ ($0.238$) and is monotone on either side ($0.166$ at $w=0$,
$0.208$ at $0.01$, $0.134$ at $0.1$, $0.090$ at $1$). Three rules --- the weighted
sum at that weight, $\log\sigma(\texttt{logit})+\texttt{mass\_term}$, and
\texttt{mass\_term} alone --- were then trained on the stored $1000\times16$ train
pool with exactly the original recipe ($4\,000$ pairs, four per prompt, $\beta=0.1$,
three epochs, lr $5\times10^{-5}$, LoRA $r=16$), three seeds each, no seed dropped,
and evaluated on the committed $1\,000$ test requirements with common selectors. All
nine pools were re-labelled by the oracle independently of the bench.

\paragraph{Pair content.} We reconstruct the original sum pairs exactly from the
stored train pool and build the other rules' pairs with the same top-four versus
bottom-four builder. Table~\ref{tab:pair_content} gives the oracle content of each
rule's pairs and Table~\ref{tab:strong_controls} the policies trained on them.

\begin{table}[h]
\centering
\caption{Matched train-pair content, chosen/rejected: feasible rate, on-target rate
and median absolute oracle log-mass error (parsed sides). Exact triple overlap
(requirement, chosen, rejected) is $0.099$ between sum and gate, $0.586$ between sum
and log-prob, $0.210$ between gate and $w=0.005$, and $0.463$ between $w=0.005$ and
mass-only.}
\label{tab:pair_content}
\small
\begin{tabular}{lccc}
\toprule
ranking & feasible & on target & mass error \\
\midrule
sum ($w=1$) & $0.654/0.015$ & $0.0653/0.0018$ & $0.495/0.604$ \\
hard gate & $0.565/0.084$ & $0.0890/0.0005$ & $0.290/0.918$ \\
tuned sum ($w=0.005$) & $0.349/0.223$ & $0.1008/0.0000$ & $0.193/1.007$ \\
log-prob & $0.635/0.019$ & $0.0823/0.0008$ & $0.436/0.616$ \\
mass-only & $0.279/0.245$ & $0.0930/0.0000$ & $0.172/1.007$ \\
\bottomrule
\end{tabular}
\end{table}

\begin{table}[h]
\centering
\caption{Policies trained on the five rules; test $1000\times16$, mean $\pm$ seed sd.
The last three columns are the gate's advantage under the common two-stage
selector with a paired requirement bootstrap (seeds fixed), and the head's
mixture terms on the trained policy's pool.}
\label{tab:strong_controls}
\small
\setlength{\tabcolsep}{3pt}
\begin{adjustbox}{max width=\linewidth}
\begin{tabular}{lcccccccc}
\toprule
ranking & per draw & feasible & supply@16 & two-stage & gate $-$ rule [95\% CI] & $q$ & AUC & $A_{\rm own}$ \\
\midrule
sum & $0.134 \pm 0.005$ & $0.867$ & $0.636 \pm 0.037$ & $0.596 \pm 0.033$ & $0.162$ $[0.135,0.187]$ & $0.970$ & $0.532$ & $0.966$ \\
hard gate & $0.311 \pm 0.013$ & $0.804$ & $0.807 \pm 0.007$ & $0.758 \pm 0.005$ & --- & $0.916$ & $0.535$ & $0.944$ \\
tuned sum & $0.258 \pm 0.028$ & $0.658$ & $0.649 \pm 0.023$ & $0.571 \pm 0.030$ & $0.187$ $[0.159,0.215]$ & $0.601$ & $0.674$ & $0.928$ \\
contrast-matched sum, $w{=}0.1$ & $0.312 \pm 0.021$ & $0.867$ & $0.821 \pm 0.017$ & $0.770 \pm 0.023$ & $-0.012$ $[-0.030,0.006]$ & $0.961$ & $0.515$ & $0.927$ \\
log-prob & $0.180 \pm 0.015$ & $0.871$ & $0.721 \pm 0.038$ & $0.675 \pm 0.029$ & $0.082$ $[0.059,0.104]$ & $0.973$ & $0.524$ & $0.947$ \\
mass-only & $0.190 \pm 0.035$ & $0.451$ & $0.604 \pm 0.024$ & $0.519 \pm 0.027$ & $0.238$ $[0.208,0.267]$ & $0.381$ & $0.778$ & $0.948$ \\
\bottomrule
\end{tabular}
\end{adjustbox}
\end{table}

\paragraph{Registered predictions and outcomes.} Before any result was read
(\texttt{PREREG\_TUNED\_SUM\_THEORY\_2026-09-18.md}) we registered, from the
aggregation argument of \S\ref{sec:select}, that the tuned sum and the
log-probability form would each land within $0.04$ of the gate's composed answer and
$0.05$ of its per-draw rate and supply (T1, T2), that mass-only would stay below
$0.70$ composed with the falsifier ``within $0.03$ of the gate'' (T3), that every
new arm would beat the unit sum (T4), and that $q>0.9$ and $A_{\rm own}>0.9$ on
every tuned-sum and log-prob arm (T5). T3 held on every seed and the falsifier did
not fire. T1, T2 and T4 missed: the tuned sum is $0.187$ below the gate, the
log-probability form $0.082$ below, and the tuned sum and mass-only fall below the
unit sum on composed answer. T5 missed on $q$ for the tuned sum ($0.57$--$0.63$) with
$A_{\rm own}>0.92$ everywhere. The prediction that failed was the extrapolation of
a selection-side argument to pair construction; the selection-side part held
(the validation-selected weight lies in the registered band $[0.001,0.005]$).

\paragraph{Why the tuned weight is the wrong training weight.}
Figure~\ref{fig:pair_contrast} sweeps $w$ over the train pool. Among parsed train
candidates the logit's spread ($9.96$) is ten times the mass term's ($0.97$), so
the pair contrast in feasibility, $\Delta F$, falls monotonically as $w$ shrinks
while the mass contrast $\Delta M$ rises: $(0.64, 0.11)$ at $w=1$, $(0.13, 0.82)$ at
$0.005$, $(0.03, 0.84)$ at $0$. The gate sits at $(0.48, 0.63)$. The scalar curve
passes close to that point near $w\approx0.1$ ($(0.48, 0.52)$), a weight $20\times$
the validation-selected one and one the selection criterion ranks fifth from the
bottom of thirteen; we had registered that no weight would come within $0.85$ of the gate on
both contrasts, and the sweep refutes that ($0.91$ at $w=0.075$). The chosen sets of
$w=0.1$ and of the gate overlap $65\%$, the full pairs $15\%$: the gate's rejected
candidates are the infeasible ones with the worst mass term, while a scalar score
rejects by whichever term dominates at that weight.

\paragraph{The contrast-matched weight matches the observed gate endpoint.} Whether that weight
recovers the gate's outcome was registered before training
(\texttt{PLAN\_CONTRAST\_DPO\_2026-09-19.md}: $w=0.1$ at 1.5B, three seeds, recipe
unchanged), with a decision rule: mass gain $\geq0.95$ and composed $\geq0.74$ would
mean contrast suffices and the gate is a weight in disguise; gain $\leq0.78$ and
composed $\leq0.66$ at feasibility $\geq0.75$ would mean the rejected set matters.
The rule returned \emph{contrast suffices}: per draw $0.312\pm0.021$, feasible
$0.867$, supply $0.821\pm0.017$, composed $0.770\pm0.023$, mass gain $0.98\pm0.02$,
$q$ $0.961$, $A_{\rm own}$ $0.927$ (Table~\ref{tab:strong_controls}); the paired
gate-minus-sum differences are $-0.001$ $[-0.015, 0.012]$ per draw, $-0.014$
$[-0.031, 0.003]$ supply, $-0.012$ $[-0.030, 0.006]$ composed and $-0.020$
$[-0.039, -0.001]$ under the log-probability selector. Every registered point value
missed, all in the same direction: we had expected a partial recovery (feasible
$0.79$, per draw $0.27$, composed $0.70$, gain $0.80$, envelope $\rho<0.45$) and the
policy reached the gate on every count and installed envelope conditioning as well
($0.69\pm0.06$), which no other rule does together with unit budget tracking. The
$0.72$ plateau of the three mis-weighted scalar rules is therefore a property of
their contrast, not of scalar scores, and the gate's distinctive rejected set does
not matter. The base-pool forecast of \S\ref{sec:law} lands within $0.024$ on these
arms. The 7B base train pool is used below to test whether the weight-selection
criterion itself transfers; a later follow-up then trains the corresponding 7B
scalar arm.

\paragraph{Choosing the contrast weight without gold labels.}
The trained $w=0.1$ arm above was selected by oracle-unit contrasts $\Delta F$ and
$\Delta M$. We subsequently pre-specified a label-free repetition using only the
critic on the same train pool: $\Delta P$ is the difference in predicted
feasibility probability between chosen and rejected samples, and
$\Delta\widehat M$ is the reduction in predicted absolute log-mass error. Across
the weight grid, $\Delta P$ and $\Delta F$ have Spearman correlation $1.000$ at
1.5B and $0.999$ at 7B; $\Delta\widehat M$ and $\Delta M$ have $0.995$ and
$0.999$. Matching the hard gate in critic units selects $w=0.075$ at 1.5B and
$w=0.056$--$0.075$ at 7B, the same narrow region as the oracle-unit sweep.

A further post-hoc criterion removes even the gate as a reference. For each $w$, we
normalise $\Delta P$ and $\Delta\widehat M$ by their respective maxima over the
sweep and maximise the smaller of the two normalised contrasts. This intrinsic
maximin rule selects $w=0.075$ at 1.5B and $w=0.056$ at 7B; maximising their product
or minimising distance to the ideal point $(1,1)$ selects exactly the same weights.
This criterion by itself does not prove that the resulting policy must match a
gate; the 1.5B experiment above and the later 7B follow-up below test that training
claim. It answers the practical weight-selection question using critic outputs
alone and is distinct from the failed label-free
\emph{diagnostic} in Appendix~\ref{app:label_free}.

\paragraph{Post-hoc split stability of the intrinsic criterion.}
As a sampling-sensitivity check, we recompute the critic-only maximin rule on the
independently generated \emph{base test} pools, which were not used to build the
training pairs. It selects exactly the train-pool choices: $w=0.074989$ at 1.5B and
$w=0.056234$ at 7B. In 500 requirement-bootstrap resamples of each test pool, the
1.5B maximin choice is $0.074989$ in $499/500$ resamples (and $0.1$ once); at 7B it
is $0.056234$ in $415/500$, $0.05$ in $66/500$ and $0.074989$ in $19/500$.
Thus every resample at both scales stays in the same $[0.05,0.10]$ region. Product
and distance-to-ideal criteria choose the same full-test-pool points. This audit is
post-hoc and is evidence for split stability of the criterion, not a new policy
training result.

\paragraph{7B follow-up of contrast-matched training.}
We subsequently trained three 7B DPO seeds at $w=0.05$, the nearest implemented
weight to the intrinsic 7B choice near $0.056$, with the same $4\,000$-pair recipe.
This run occurred after the 1.5B contrast result and is not preregistered. The three
policies reach target success per draw $0.218\pm0.021$ and supply@16
$0.615\pm0.068$. Re-evaluating the stored pools with the common two-stage selector
gives composed answer $0.565\pm0.065$, compared with $0.562\pm0.026$ for the
matched 7B hard gate. The paired requirement bootstrap for contrast-matched minus
gate is $0.003$ $[-0.015,0.020]$; against the 7B unit-sum policy it is $0.151$
$[0.126,0.175]$. The scalar's own weighted-sum selector reaches only
$0.261\pm0.045$, so the result supports transfer of the pair-construction criterion
without collapsing the distinction between training and inference-time selection.
The follow-up export retained target labels, critic scores, parse masks, and design
vectors, but not the separate oracle-feasibility label; we therefore do not report a
follow-up feasibility rate in Table~\ref{tab:train}.

\begin{figure}[h]
\centering
\includegraphics[width=0.72\linewidth]{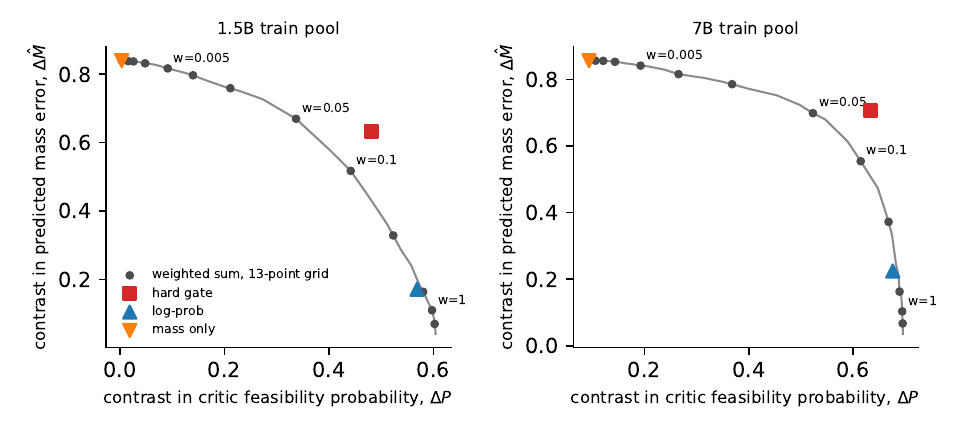}
\caption{Label-free pair contrast in critic units over scalar weights. The balanced
region is near $w\approx0.05$--$0.1$ at both scales, far from the
validation-tuned $w=0.005$. Oracle labels are not used by these axes.}
\label{fig:critic_contrast}
\end{figure}

\section{Proof of the sharp AUC identified set}
\label{app:identifiability_proof}
Let $S^+$ denote a score drawn from the positive class and $S_j^-$ a score drawn
from negative failure bin $B_j$. With the usual half-credit convention for ties,
$a_j=\Pr[S^+>S_j^-]+\tfrac12\Pr[S^+=S_j^-]$. Drawing a negative first chooses
bin $j$ with probability $\pi_j$, so the law of total probability gives
$A=\sum_j\pi_j a_j=\pi^\top a$. Hence every data-generating process compatible
with $(A,\pi)$ lies in $\{a\in[0,1]^J:\pi^\top a=A\}$.

The set is sharp. For any $a\in[0,1]^J$ satisfying the equality, take
$S^+\sim\mathrm{Unif}(0,1)$. For $a_j\in(0,1)$ choose the negative-bin score to be
the constant $S_j^-=1-a_j$; for the endpoints choose any constant above $1$ or below
$0$. Then the positive-versus-bin AUC is exactly $a_j$ for every bin simultaneously,
so every point in the set is attainable. For $J=2$,
$A=(1-q)A_{\rm own}+qA_{\rm cross}$ and eliminating
$A_{\rm cross}\in[0,1]$ yields Eq.~\ref{eq:identified_interval}; exchanging the
roles gives the symmetric bound. If $q=1$, $A_{\rm own}$ drops out completely.
This proves Proposition~\ref{prop:identified_set}.

\begin{figure}[h]
\centering
\includegraphics[width=.48\linewidth]{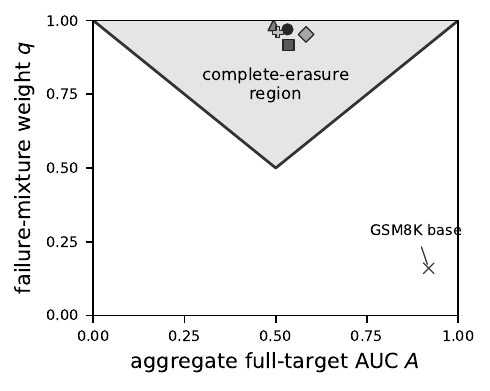}
\caption{Complete erasure for the two-bin audit. In the shaded region
$q\ge\max(A,1-A)$, the sharp identified interval for $A_{\rm own}$ is all of $[0,1]$.}
\label{fig:identified_region_appendix}
\end{figure}

\paragraph{Controlled-mixture audit design.} Suppose the bin-specific ordering vector
$a$ is held fixed while an auditor constructs $K$ negative mixtures with known rows
$\pi^{(k)}$. Writing $\Pi$ for the resulting $K\times J$ matrix, the aggregate
audits satisfy $A=\Pi a$. Hence $a$ is point-identified for every possible value iff
$\operatorname{rank}(\Pi)=J$; otherwise a null-space direction that remains inside
$[0,1]^J$ is observationally equivalent. For the binary split, two audits with
$q_1\ne q_2$ give
\[
 A_{\rm own}=\frac{q_2A_1-q_1A_2}{q_2-q_1},\qquad
 A_{\rm cross}=\frac{(1-q_1)A_2-(1-q_2)A_1}{q_2-q_1}.
\]
This is a controlled-reweighting result; it does not license combining ordinary
policy checkpoints whose conditional score distributions may themselves change.

A useful corollary is the exact condition for complete loss of information about
$A_{\rm own}$ when $q<1$: its sharp interval is $[0,1]$ iff
$1-q\le A\le q$. All four matched engine rows in Table~\ref{tab:law} satisfy this
condition. The result is an identification statement about the aggregate statistic;
it does not say that direct own-bin estimates are unreliable when their support is
adequate.

\section{Why validation-tuned selection weights need not be pair-optimal}
\label{app:weight_geometry}
For two candidates $i,j$ scored by $r_w=w s_F+s_M$, their order can change only
when
\[
  w(s_{F,i}-s_{F,j})+(s_{M,i}-s_{M,j})=0.
\]
Thus, away from the finite set of pairwise crossing weights, every within-pool
ranking is fixed. Any top-$k$ selector and any deterministic top-versus-bottom pair
builder is consequently piecewise constant in $w$. If two rules induce the same
chosen--rejected multiset, the DPO loss is literally the same function of the policy
parameters because the scalar score is no longer an input to DPO. This proves Proposition~\ref{prop:ranking_geometry}.

For the approximate statement, write the per-pair DPO loss as
$\ell_\theta(z)=\log(1+\exp[-\beta d_\theta(z)])$. If
$|d_\theta(z)|\le R$, then $\ell_\theta$ ranges over an interval of width exactly
$\beta R$. The variational characterisation of total variation therefore gives
$|\mathbb E_P\ell-\mathbb E_Q\ell|\le\beta R d_{\rm TV}(P,Q)$. For the
gradient bound, project the gradient difference onto an arbitrary unit vector; the
projected per-pair gradient lies in $[-G,G]$, so its expectation changes by at most
$2Gd_{\rm TV}(P,Q)$. Taking the supremum over unit vectors proves
Proposition~\ref{prop:pair_stability}. For uniform finite pair multisets, total
variation equals one minus their overlap coefficient (counting multiplicity).

\paragraph{A local bridge from pair contrasts to the first DPO update.}
\begin{proposition}[Reference-gradient contrast bound]
\label{prop:contrast_gradient}
Let $\theta_0$ equal the DPO reference policy, let
$g(z)=\nabla_\theta d_\theta(z)|_{\theta_0}$, and let
$\phi(z)\in\mathbb R^d$ be a vector of pair-property contrasts. Suppose
$g(z)=B\phi(z)+r(z)$, with
$\mathbb E_P\|r\|\le\epsilon_P$ and $\mathbb E_Q\|r\|\le\epsilon_Q$. Then
\[
\|\nabla L_P(\theta_0)-\nabla L_Q(\theta_0)\|
\le \frac{\beta}{2}\Bigl(
\|B\|\,\|\mathbb E_P\phi-\mathbb E_Q\phi\|+\epsilon_P+\epsilon_Q\Bigr).
\]
\end{proposition}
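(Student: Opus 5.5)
The plan is a direct computation at the reference point. The key fact is that the DPO margin vanishes identically there. Write the per-pair DPO loss as in the proof of Proposition~\ref{prop:pair_stability}, $\ell_\theta(z)=\log(1+\exp[-\beta d_\theta(z)])$. Here
\[
d_\theta(z)=\log\frac{\pi_\theta(c\mid x)}{\pi_{\rm ref}(c\mid x)}-\log\frac{\pi_\theta(r\mid x)}{\pi_{\rm ref}(r\mid x)}
\]
is the implicit reward margin between the chosen and rejected sides of the pair $z$. Because $\theta_0$ is the reference policy, both log-ratios are zero, so $d_{\theta_0}(z)=0$ for every pair $z$.

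\textbf{Step 1: per-pair gradient.} By the chain rule,
\[
\nabla_\theta\ell_\theta(z)=-\beta\,\sigma\!\bigl(-\beta d_\theta(z)\bigr)\,\nabla_\theta d_\theta(z),
\]
where $\sigma$ is the logistic function. At $\theta_0$ the margin is zero, so $\sigma(0)=\tfrac12$ and
\[
\nabla_\theta\ell_\theta(z)\big|_{\theta_0}=-\tfrac{\beta}{2}\,g(z).
\]
This is the only place where $\theta_0=\pi_{\rm ref}$ is used. Away from the reference the weights $\sigma(-\beta d_\theta)$ differ across pairs, and a clean linear bound like this would not hold.

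\textbf{Step 2: pass to expectations.} Interchange the gradient with the expectations under $P$ and $Q$. For finite pair multisets this is immediate; in general it needs integrability of $g$ under both distributions, which is implicitly assumed since the hypotheses make $\mathbb E\|r\|$ finite. This gives
\[
\nabla L_P(\theta_0)-\nabla L_Q(\theta_0)=-\tfrac{\beta}{2}\bigl(\mathbb E_P g-\mathbb E_Q g\bigr).
\]

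\textbf{Step 3: insert the decomposition and bound.} Substituting $g=B\phi+r$ and using linearity of expectation,
\[
\mathbb E_P g-\mathbb E_Q g=B\bigl(\mathbb E_P\phi-\mathbb E_Q\phi\bigr)+\mathbb E_P r-\mathbb E_Q r.
\]
The triangle inequality and the operator-norm bound $\|Bv\|\le\|B\|\,\|v\|$ control the first term. Jensen's inequality $\|\mathbb E r\|\le\mathbb E\|r\|$, applied under $P$ and under $Q$ separately, bounds the remainder terms by $\epsilon_P$ and $\epsilon_Q$. Multiplying by $\beta/2$ gives the stated inequality.

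There is no real obstacle. The main point to get right is the observation in Step 1 that the sigmoid weight equals exactly $\tfrac12$ at the reference, together with the sign and factor bookkeeping that follows from it. The only other technical requirement is the differentiation-under-the-expectation step, which is trivial for the finite pair multisets used in the paper.
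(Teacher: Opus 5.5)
Your proof is correct and matches the paper's argument: the paper also uses $d_{\theta_0}(z)=0$ to obtain $\nabla L_P(\theta_0)=-\tfrac{\beta}{2}\mathbb{E}_P[g(z)]$, then subtracts the $P$ and $Q$ expressions and applies the triangle inequality and Jensen. Your explicit sigmoid-weight computation and your remark on exchanging gradient and expectation fill in details that the paper leaves implicit.
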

For DPO, $\ell_\theta(z)=\log(1+\exp[-\beta d_\theta(z)])$. At the reference
policy $\theta_0$, $d_{\theta_0}(z)=0$ for every pair, hence
\[
\nabla L_P(\theta_0)=-\frac{\beta}{2}\,\mathbb E_P[g(z)].
\]
Subtracting the expressions for $P$ and $Q$, applying the triangle inequality and
then Jensen proves Proposition~\ref{prop:contrast_gradient}. The assumption is an
explicit local feature model for the preference-log-ratio gradient. We did not retain
the per-pair gradient artifacts needed to estimate its residual for these runs, so
the proposition is a sufficient-condition bridge, not an empirically established
mechanism or a claim that oracle contrasts are sufficient statistics for later
optimization.

There is nevertheless no implication from validation top-1 optimality to balanced
training-pair contrast. A two-prompt counterexample suffices. Each row below is a
candidate with gold conjuncts $(F,M)$ and proxy components $(s_F,s_M)$:
\[
\begin{array}{c|ccc}
&1&2&3\\ \hline
\text{prompt A}&(0,1;5,1)&(1,1;0,6)&(0,1;2,0)\\
\text{prompt B}&(0,1;5,2)&(0,0;3,4)&(1,1;4,5).
\end{array}
\]
At $w=0.5$, the successful $(1,1)$ candidate is top-ranked for both prompts, so
top-1 success is $1$, but top-versus-bottom pairs have average oracle contrasts
$(\Delta F,\Delta M)=(1,0)$. At $w=2$, top-1 success falls to $1/2$, while the
average contrasts become $(1/2,1/2)$. Hence a weight that is strictly better for
selection can erase one training contrast that a worse selector preserves. The
example is only an existence result; the engine experiment tests whether this
mismatch matters in the observed system.

\paragraph{Pair overlap is not the empirical explanation.}
All 1.5B round-1 rules use $4\,000$ pairs from the same train pool. Their full-pair
overlap with the hard gate and eventual composed answer are
\[
\begin{array}{lcc}
\toprule
\text{ranking rule} & \text{pair overlap with gate} & \text{composed @16}\\
\midrule
\text{unit sum} & 0.099 & 0.596\\
\text{validation-tuned sum} & 0.210 & 0.571\\
\text{contrast-matched sum }(w=0.1) & 0.150 & 0.770\\
\text{log-probability} & 0.140 & 0.675\\
\text{mass-only} & 0.262 & 0.519\\
\bottomrule
\end{array}
\]
Thus the best-performing scalar is not the one whose examples most resemble the
gate's. Its chosen-candidate set overlaps the gate by $65\%$, but only $15\%$ of
complete chosen--rejected pairs coincide.

\begin{figure}[h]
\centering
\includegraphics[width=0.46\linewidth]{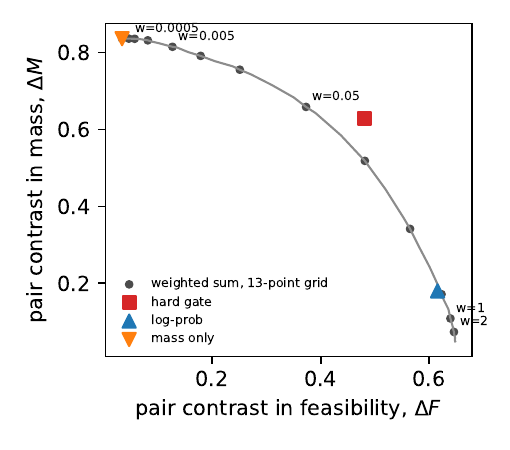}
\caption{Matched 1.5B pair geometry: tuned $w=0.005$ is mass-dominated, the unit sum
feasibility-dominated; the pre-specified $w\approx0.1$ passes near the gate.}
\label{fig:pair_contrast}
\end{figure}

\paragraph{Across-rule contrast alignment.}
For the five scalar rules trained with the same 1.5B recipe, define the bottleneck
contrast relative to the hard gate as
\[
b(r)=\min\!\left\{\frac{\Delta F_r}{\Delta F_{\rm gate}},
                  \frac{\Delta M_r}{\Delta M_{\rm gate}}\right\},
\qquad
(\Delta F_{\rm gate},\Delta M_{\rm gate})=(0.481,0.628).
\]
The quantities below use only the already reported pair-content and endpoint tables:
\[
\begin{array}{lccc}
\toprule
\text{scalar rule} & b(r) & \text{pair overlap with gate} & \text{composed @16}\\
\midrule
\text{unit sum} & 0.174 & 0.099 & 0.596\\
\text{validation-tuned sum} & 0.262 & 0.210 & 0.571\\
\text{contrast-matched sum }(w=0.1) & 0.828 & 0.150 & 0.770\\
\text{log-probability} & 0.287 & 0.140 & 0.675\\
\text{mass-only} & 0.071 & 0.262 & 0.519\\
\bottomrule
\end{array}
\]
Across these five points, Spearman correlation with composed @16 is $0.90$ for
$b(r)$ and $-0.60$ for exact pair overlap. With $n=5$ this is deliberately presented
as a descriptive ordering check, not a significance result or a fitted predictive
model. It nevertheless tests the claim on every scalar rule that was actually
trained under the matched recipe, rather than only on the successful $w=0.1$ arm.

These are descriptive overlaps from the
fixed training pool, not a statistical test.

\section{Perfect-conjunct and finite-pool controls}
\label{app:perfect_head}
These are post-hoc controls on existing candidates, not part of the original
preregistration. The exact feasibility oracle (engine) and boxed-format regex
(GSM8K) replace only the head of the corresponding conjunct for AUC measurement.
A binary perfect conjunct head ties every successful candidate with every failure
that satisfies that conjunct. Thus its full-target AUC is $1-q/2$, with half credit
for ties. We average over requirements containing both target classes; engine
candidates are parsed-only. Support is $285$ base requirements and $658,657,593$
for the three matched 1.5B sum-DPO seeds. Their perfect-head AUCs are
$0.8326$ and $0.5187,0.5145,0.5113$, respectively. GSM8K support is $474$ at base
and $431,443,439$ after sum-DPO, with AUCs $0.9219$ and
$0.5189,0.5210,0.5155$. Own-task oracle correctness is exact by construction;
this is not an oracle for the full conjunction.

We also measure best-of-$k$ averaged over every uniform size-$k$ subset of each
stored GSM8K pool of $N=64$, without replacement. For a deterministic selector
order, candidate rank $r\in\{1,\ldots,N\}$ is selected with probability
$\binom{N-r}{k-1}/\binom Nk$; averaging its target label with these weights gives
the exact conditional answer rate. Supply is
$1-\binom{N-P}{k}/\binom Nk$, where $P$ is the number of successes. Ties use
original candidate index. Exhaustive small-pool subset enumeration verifies the
implementation. This is a finite-pool estimator, not a new scaling law or a
prediction beyond $N$; related ROC-based scaling theory is in
\citet{dorner2026roc}.

For sum-DPO, averaging training seeds, raw-sum answers at $k=4,16,64$ are
$0.693,0.680,0.645$, versus base $0.606,0.736,0.762$.
Supplies after sum-DPO are $0.839,0.929,0.969$ at those budgets.
Replacing the learned format gate with the exact boxed regex and retaining the
learned correctness ranker gives $0.679$ at $k=16$ and $0.639$ at $k=64$ after
sum-DPO (base $0.737,0.758$). This control removes format-classification error
without repairing the remaining ranker. It does not establish impossibility for
other correctness models or selection methods.

Finally, a fixed scale grid $T\in\{0.01,0.1,1,10,100\}$ applied to feasibility
logits preserves every head AUC but changes raw-sum answers. Matched 1.5B sum-DPO
has mean answers $0.544,0.312,0.185,0.162,0.160$, respectively.
This test-set sensitivity analysis illustrates the distinction between ranking
metrics and aggregation units; it is not validation-based hyperparameter tuning.
Code and complete tables are in \texttt{scripts/audit\_perfect\_head.py},
\texttt{scripts/audit\_gsm\_tails.py}, and the corresponding results directories.

The paired requirement-bootstrap difference (sum-DPO minus base), conditional on
these fixed training seeds, is $-0.0555$ [$-0.0738,-0.0368$] for raw-sum answers
at $k=16$ and $-0.1167$ [$-0.1500,-0.0833$] at $k=64$.
Exact-format-gate differences are $-0.0578$ [$-0.0767,-0.0402$] and
$-0.1193$ [$-0.1533,-0.0873$]. These intervals quantify requirement uncertainty,
not population training-seed uncertainty.

\begin{figure}[h]
\centering
\includegraphics[width=\linewidth]{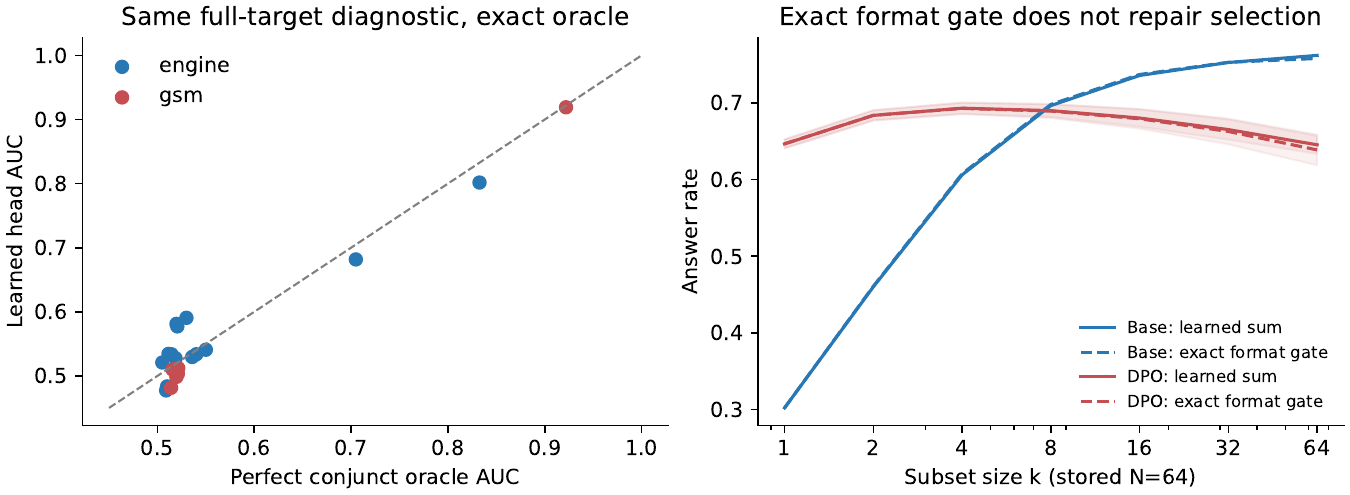}
\caption{Post-hoc controls. Left: full-target AUC for learned versus perfect
own-conjunct heads on the same pools. Right: GSM8K finite-pool subset-averaged
answers; exact format gating does not repair the remaining correctness ranker.
Bands show training-seed SD, not confidence intervals.}
\end{figure}

\section{Transfer forecast of on-policy AUC}
\label{app:transfer}
Equation~\ref{eq:mix} is exact on the pool whose terms it uses, so reconstructing
an AUC there is arithmetic rather than evidence. This control asks the out-of-sample
question instead. The bin terms $A_{\rm own}$ and $A_{\rm cross}$ are measured on a
\emph{source} pool; from a \emph{target} pool we read only $q$ and never the head's
ordering; we then predict the target's on-policy AUC. Every arm answers the same
committed test split, so requirement $i$ denotes the same requirement in both pools:
a prediction is formed per requirement and averaged over the requirements that
contain both target classes in both pools. We compare four predictors: chance
($\tfrac12$); \emph{persistence}, the source pool's own AUC, which is what assuming
an earlier audit of the same verifier still describes the policy amounts to; the
zero-parameter form $(1-q)A_{\rm own}+q/2$; and the transported form
$(1-q)A_{\rm own}+qA_{\rm cross}$.

Five protocols supply $57$ source--target pairs: the 1.5B base pool to $26$ trained
1.5B arms (matched sum and hard-gate DPO, the earlier DPO arms, the five IPO/DPO
mitigation arms, both GRPO rewards and the second round), the 7B base pool to $9$
trained 7B arms, the 1.5B base pool to those same $9$ 7B arms, the 1.5B base pool to
$7$ cVAE-family pools scored by the same critic, and the GSM8K base pool to its $6$
trained arms. Measured arm AUC spans $0.408$ to $0.937$.

\paragraph{Why ordinary checkpoints are not a designed two-mixture audit.}
Suppose an auditor observes $(A_1,q_1)$ and $(A_2,q_2)$ but the conditional terms drift from
$(a_1,c_1)$ to $(a_2,c_2)$. If it nevertheless applies the fixed-component inversion
$\hat a=(q_2A_1-q_1A_2)/(q_2-q_1)$, then relative to the first pool
\[
\hat a-a_1=\frac{q_1(1-q_2)(a_1-a_2)+q_1q_2(c_1-c_2)}{q_2-q_1}.
\]
Thus separation in mixture weights cannot remove conditional drift; it can amplify it when
$|q_2-q_1|$ is small. Proposition~\ref{prop:audit_cost}(ii) is therefore a statement about
\emph{controlled reweighting/resampling of fixed bin-conditional score distributions}. The
transfer experiments below ask how far that assumption approximately travels across real policy
changes; they are diagnostics, not the designed-audit guarantee.

\begin{figure}[t]
\centering
\includegraphics[width=0.62\linewidth]{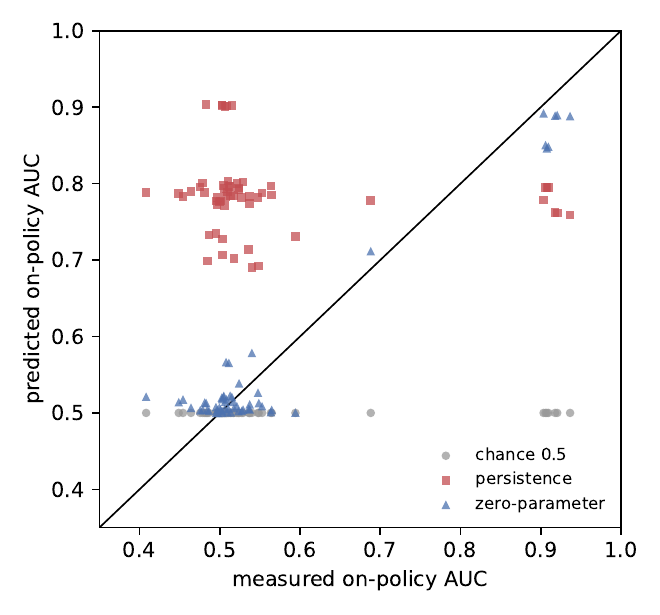}
\caption{Forecast against measurement for all $57$ target arms. Each point is one
target policy or pool. The zero-parameter form tracks the diagonal across the
measured range; persistence sits in a band set by the source audit regardless of the
target; chance is correct only in the saturated cluster.}
\label{fig:transfer}
\end{figure}

\begin{table}[h]
\centering
\caption{Arm-level forecast error over all $57$ target arms: absolute error of the
arm's mean predicted AUC against its mean measured AUC. The bootstrap resamples
requirements jointly across arms and holds the training seeds fixed; it is
requirement uncertainty, not population variance over training runs.}
\label{tab:transfer}
\small
\begin{tabular}{lcccc}
\toprule
predictor & MAE & worst arm & signed bias & $r$(pred, measured) \\
\midrule
chance $\tfrac12$ & $0.0745$ [$0.0742,0.0821$] & $0.4366$ & $-0.0622$ & --- \\
persistence & $0.2566$ [$0.2432,0.2707$] & $0.4204$ & $+0.2231$ & $-0.093$ \\
$(1-q)A_{\rm own}+q/2$ & $0.0280$ [$0.0275,0.0357$] & $0.1130$ & $-0.0012$ & $0.967$ \\
$(1-q)A_{\rm own}+qA_{\rm cross}$ & $0.0325$ [$0.0322,0.0496$] & $0.1162$ & $+0.0010$ & $0.956$ \\
\bottomrule
\end{tabular}
\end{table}

\begin{table}[h]
\centering
\caption{Arm-level forecast MAE by protocol. Chance is accurate exactly where the
target has saturated its own conjunct, and fails on the cVAE pools, which have not.}
\label{tab:transfer_protocol}
\small
\setlength{\tabcolsep}{4pt}
\begin{tabular}{lccccc}
\toprule
source $\rightarrow$ target & arms & chance & persistence & zero-param & transported \\
\midrule
1.5B base $\rightarrow$ 1.5B trained & $26$ & $0.0305$ & $0.2767$ & $0.0279$ & $0.0374$ \\
7B base $\rightarrow$ 7B trained & $9$ & $0.0302$ & $0.1868$ & $0.0278$ & $0.0291$ \\
1.5B base $\rightarrow$ 7B trained & $9$ & $0.0252$ & $0.2668$ & $0.0262$ & $0.0223$ \\
1.5B base $\rightarrow$ cVAE pools & $7$ & $0.4145$ & $0.1365$ & $0.0425$ & $0.0416$ \\
GSM8K base $\rightarrow$ GSM8K trained & $6$ & $0.0088$ & $0.3991$ & $0.0144$ & $0.0213$ \\
\bottomrule
\end{tabular}
\end{table}

The $11$ target arms whose own conjunct has not saturated (mean $q<0.9$: the seven
cVAE pools, two low-$\beta$/low-$\tau$ mitigation arms and two earlier 7B DPO arms)
carry the discriminating evidence, because a chance forecast cannot be right there
by construction. Their measured AUC spans $0.508$ to $0.937$, and the zero-parameter
MAE is $0.0429$ (worst arm $0.0613$) against $0.2863$ for chance and $0.1596$ for
persistence. Transporting the measured cross term gives $0.0404$.

A further post-hoc baseline makes the interpretation of the full $57$
comparisons more conservative. The perfect-head predictor $1-q/2$ has MAE $0.0261$,
and a leave-one-arm-out linear regression of AUC on $q$ has MAE $0.0261$, both
slightly below the source-own forecast's $0.0280$. Deduplicating the repeated 7B
targets leaves $48$ unique target pools and gives the same conclusion ($0.0261$ and
$0.0259$). Thus the stored-arm analysis demonstrates the composition relation over a
wide policy range, but does not show incremental predictive value from source-bin
terms beyond $q$ alone. That incremental comparison is reserved for the later arms
below.

\paragraph{What this does and does not establish.} The forecast avoids re-auditing
the head's ordering on the target policy; it does not avoid target-side labels,
since $q$ is an oracle-label statistic of the target pool. All arms existed before
this control was written, so this is a post-hoc transfer protocol across stored
pools, not a preregistered forward forecast of an unrun experiment. Transporting
$A_{\rm own}$ assumes the head's ordering against own-conjunct failures is stable
across policies; the residual errors in Table~\ref{tab:transfer} bound how far that
assumption travels here and do not license it for arbitrary policies, critics or
tasks. Accuracy on the saturated arms is unimpressive on its own, since chance
achieves it. Code: \texttt{scripts/predict\_auc\_transfer.py}; complete per-arm
tables in \texttt{results/predictive\_audit/}.

\paragraph{Arms trained after the forecast was frozen.} The forecast code and the
1.5B base pool's bin terms were fixed on the day the control was written; the nine
strong-controls arms (Appendix~\ref{app:pair_audit}) and the three three-conjunct
arms (Appendix~\ref{app:physics_gate}) were trained afterwards, with predictions
registered before any term was computed on their pools
(\texttt{PLAN\_NEW\_ARMS\_FORECAST\_2026-09-19.md}). What remains post-hoc is that
their outcome rates had been read, so the rough location of $q_t$ could be
anticipated; that anticipation is written in the plan. We therefore call this a
pre-specified transfer test with frozen source terms, not a fully prospective forecast.
Table~\ref{tab:prospective} gives the result. The registered arm-level error of $0.030$ (band $\leq0.05$) was
met ($0.020$). Two post-hoc baselines address how much of this is merely $q$: the
perfect-head form $1-q/2$ gives MAE $0.030$, and a linear AUC--$q$ model fitted once
on the earlier $57$ comparisons gives $0.025$. On the six non-saturated arms the
corresponding errors are $0.016$, $0.036$ and $0.027$. A post-hoc two-level bootstrap resamples the $1\,000$ committed test requirements
jointly across arms and, within each of the four training rules, resamples the three
observed training seeds. It gives source-own advantages over $1-q/2$ of $0.0098$
$[0.0046,0.0139]$ and over the historical linear baseline of $0.0054$
$[0.0005,0.0096]$. On the six non-saturated arms the corresponding gaps are $0.0193$
$[0.0092,0.0271]$ and $0.0100$ $[0.0006,0.0177]$. A requirement-only bootstrap with
arms/seeds fixed gives the slightly narrower intervals reported in the artifact.
These are empirical uncertainty checks over the observed seeds and requirements,
not population intervals over unseen model families or training recipes. Averaged within training-rule families, source-own improves on
$1-q/2$ in all four families and on the linear baseline in three of four; the
three-conjunct family is essentially tied with the latter. On that subset persistence was slightly better than the
registered ``$>0.10$'' ($0.093$), a miss that leaves the ranking of the registered
predictors unchanged. Spearman correlation between measured AUC and $q$ over the fifteen
single-gate 1.5B arms is $-0.82$; $A_{\rm own}$ lies in $[0.92, 0.98]$ on every one
of the eighteen arms, and the three-conjunct training leaves the feasibility head's
terms where the two-conjunct gate had them ($q$ $0.946$ against $0.916$).
Artifact: \texttt{results/new\_arms\_audit/}; code \texttt{scripts/audit\_new\_arms.py}.

\begin{table}[h]
\centering
\caption{Pre-specified transfer test on the twelve arms trained after the source terms were
frozen (means of three seeds; 1.5B base as source). $q_t$, measured AUC and the
forecast are over the requirements whose pools support the terms in both source and
target; $A_{\rm own}$ is the target arm's own, shown for comparison and not used.
``Forecast'' is the zero-parameter form $(1-q_t)A_{\rm own}^{\rm src}+q_t/2$.}
\label{tab:prospective}
\small
\setlength{\tabcolsep}{4pt}
\begin{adjustbox}{max width=\linewidth}
\begin{tabular}{lcccccc}
\toprule
training rule & feasible & $q_t$ & measured AUC & $A_{\rm own}$ & forecast & $|$error$|$ \\
\midrule
tuned sum ($w=0.005$) & $0.658$ & $0.642$ & $0.644$ & $0.928$ & $0.661$ & $0.017$ \\
log-prob & $0.871$ & $0.993$ & $0.509$ & $0.947$ & $0.503$ & $0.034$ \\
mass-only & $0.451$ & $0.429$ & $0.748$ & $0.948$ & $0.757$ & $0.015$ \\
three-conjunct gate & $0.812$ & $0.961$ & $0.518$ & $0.958$ & $0.517$ & $0.013$ \\
\midrule
\multicolumn{7}{l}{12 arms: source-own $0.020$, transported $0.019$, $1-q/2$ $0.030$, historical linear AUC--$q$ $0.025$} \\
\multicolumn{7}{l}{6 non-saturated arms: source-own $0.016$, transported $0.012$, $1-q/2$ $0.036$, historical linear AUC--$q$ $0.027$} \\
\bottomrule
\end{tabular}
\end{adjustbox}
\end{table}

\paragraph{Along a run.} The within-run version, registered before any checkpoint
pool was generated (\texttt{PLAN\_WITHIN\_RUN\_FORECAST\_2026-09-19.md}), forecasts
the head's on-policy AUC at checkpoint $k{+}1$ from the bin terms at checkpoint $k$,
exactly as across arms. Four seed-0 runs with adapters saved every $250$ steps
(log-probability, tuned sum, three-conjunct gate, mass-only) were sampled on the
first $250$ committed test requirements at $K{=}16$; step $0$ and step $1\,500$ are
the corresponding rows of the stored base and final pools. Table~\ref{tab:within_run}
and Figure~\ref{fig:within_run} give the result. Over the $18$ transitions of the
three arms whose $q$ moves, the zero-parameter forecast has arm-level error $0.049$
(registered $0.03$, band $\leq0.06$) against $0.096$ for persistence and $0.138$ for
chance, and beats persistence on $15$ of $18$; from the base pool's terms alone the
error over every later checkpoint is $0.032$. That per-requirement comparison is
confined to the requirements where the source checkpoint still has both negative
bins, which on a saturated checkpoint are its least saturated ones; applying the
source's pooled own term to every target requirement instead (the arm-level form
used out of distribution, and what Figure~\ref{fig:within_run} plots) gives $0.021$
against $0.064$ for persistence and $0.080$ for chance, $13$ of $18$ transitions in
the forecast's favour, worst transition $0.064$. Saturation is fast: on the
log-probability and three-conjunct runs $q$ exceeds $0.9$ by step $500$ (registered),
the first transition is persistence's worst, and the head's AUC has already fallen
to $0.54$ and $0.49$ at step $250$ while per-draw success is $0.09$ and $0.16$. The
mass-only run is the registered null: its $q$ stays in $[0.28, 0.41]$, persistence
is within $0.030$, and the forecast gains only $0.010$ on it --- the forecast wins
where $q$ moves and only there. Two registered checks missed: $A_{\rm own}\geq0.90$ at
every checkpoint fails once ($0.853$, log-probability step $1\,000$, on a saturated
pool with few own-bin pairs among $250$ requirements), and the tuned sum's $q$
reached $0.498$ rather than $>0.5$ by step $500$. Single seed, $250$ requirements,
and generation batches that differ between the checkpoint pools and the stored
endpoints are the limits of this control. Artifact:
\texttt{results/within\_run\_forecast/}; code \texttt{scripts/forecast\_within\_run.py}.

\begin{table}[h]
\centering
\caption{Within-run forecast from the previous checkpoint: arm-level mean absolute
error of each predictor over the six transitions of each run ($250$ requirements,
$K{=}16$, seed 0). The mass-only run is the registered null case whose $q$ does not
move.}
\label{tab:within_run}
\small
\begin{tabular}{lccccc}
\toprule
run & $q$: step 0 $\to$ 250 $\to$ 1500 & chance & persistence & zero-param & transported \\
\midrule
log-probability & $0.34 \to 0.95 \to 0.97$ & $0.095$ & $0.116$ & $0.074$ & $0.119$ \\
tuned sum $w{=}0.005$ & $0.34 \to 0.44 \to 0.59$ & $0.214$ & $0.052$ & $0.014$ & $0.026$ \\
three-conjunct gate & $0.34 \to 0.88 \to 0.93$ & $0.105$ & $0.120$ & $0.061$ & $0.083$ \\
\midrule
mass-only (null) & $0.34 \to 0.30 \to 0.33$ & $0.282$ & $0.030$ & $0.020$ & $0.015$ \\
\bottomrule
\end{tabular}
\end{table}

\begin{figure}[h]
\centering
\includegraphics[width=0.85\linewidth]{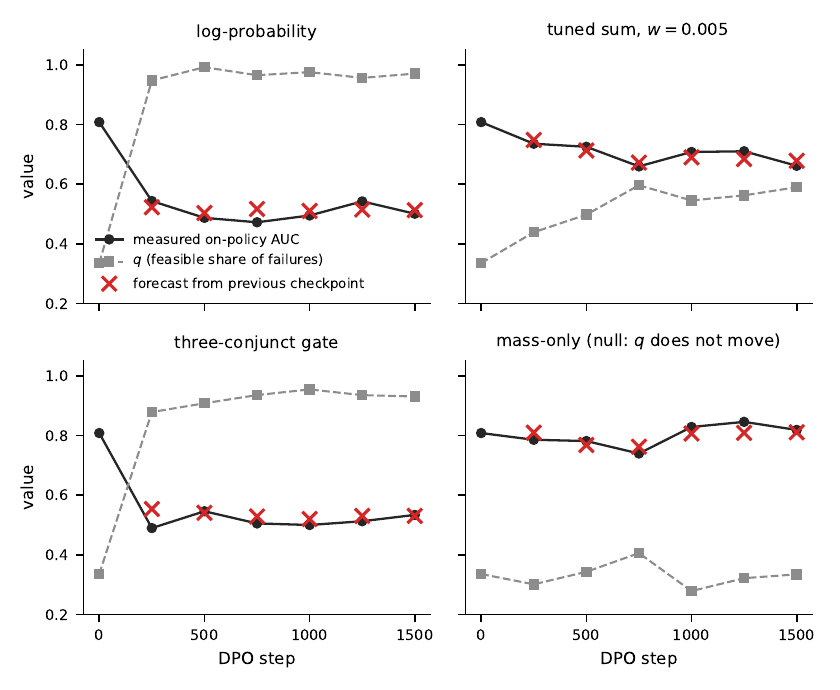}
\caption{Measured on-policy AUC of the feasibility head (dots), $q$ (squares) and the
zero-parameter forecast from the previous checkpoint's pooled own term (crosses)
along four DPO runs; $250$ requirements, $K{=}16$, seed 0.}
\label{fig:within_run}
\end{figure}

\paragraph{Out of distribution.} The benchmark ships a fourth split of $500$
requirements drawn beyond the training prior (thrust to $61.8$\,N against a
$5$--$60$\,N prior, expansion ratio to $23.3$). Fifteen stored cVAE pools answer it.
Requirement indices do not correspond across splits, so this forecast is arm-level:
each generator family's mean bin terms from the test split, that family's mean $q$ on
the OOD split, and its OOD AUC predicted. Over the five families the zero-parameter
mean absolute error is $0.0282$, against $0.0752$ for persistence and $0.3640$ for
chance, and the identity still reconstructs every OOD requirement to $10^{-9}$. Four
families land within $0.024$. The capped decoder is the exception, missing by $0.101$,
and the terms say why: its own-bin ordering does not transport, falling from
$A_{\rm own}=0.965$ on the test split to $0.821$ out of distribution. That is the
assumption the forecast rests on and this arm is where it breaks. The transported
variant fares worse still on latent search, whose $A_{\rm cross}$ drops from $0.817$
to $0.651$. Artifact: \texttt{results/ood\_split/audit.json}; code
\texttt{scripts/audit\_ood\_split.py}.

\section{A label-free version of the diagnostic does not transfer}
\label{app:label_free}
Every measurement above needs oracle labels: $q$, $A_{\rm own}$ and $A_{\rm cross}$
are all label statistics. A deployed pipeline has none, so we asked whether the
critic's own scores already reveal that an additive composition is losing best-of-$K$
lift. Three label-free statistics were computed per arm from the two term arrays
alone: the share of pools whose sum-argmax equals the saturated term's argmax, the
median within-pool SD of the saturated term over the other term's, and the share of
candidates the saturated head calls satisfied with $p>0.99$. They were asked to
predict two oracle-measured quantities: $\eta_{\rm raw}$, the share of attainable
best-of-$K$ lift the additive rule captures, and the gain from switching to a gate.
Leave-one-arm-out linear fits were compared against predicting the training-fold mean
over $32$ engine arms and the same $32$ plus $7$ GSM8K arms.

On the engine arms alone the relations look usable: saturated fraction against gate
gain reaches $r=0.70$ with leave-one-out MAE $0.140$ against $0.227$ for the mean,
and the SD ratio against $\eta_{\rm raw}$ reaches $r=0.73$ with MAE $0.112$ against
$0.151$. The SD ratio's sign is nevertheless opposite to the mechanism, because a
dominant feasibility term also marks pools where feasibility genuinely discriminates.
Adding GSM8K removes the effect: saturated fraction against gate gain falls to
$r=0.02$ and the SD ratio to $r=-0.14$, both worse than predicting the mean
($0.339$ and $0.336$ against $0.332$).

We therefore make no label-free claim. This is a negative result on three specific
statistics and two targets, not a proof that no label-free diagnostic exists; a
learned combination, per-requirement rather than per-arm targets, or a calibrated
head might do better. Code: \texttt{scripts/label\_free\_diagnostic.py}; full table
in \texttt{results/label\_free\_diagnostic/audit.json}.

\section{Feasibility's own sub-constraints: a case where the mix moves and the AUC does not}
\label{app:subconstraint}
The packaged oracle defines \texttt{feasible} as \texttt{printable AND
fits\_envelope}: no DfAM flag (overhang, minimum wall, channel-versus-ligament,
minimum TPMS cell) and length $\le$ envelope (the radius clause $2r_{\max}\le$
envelope cannot bind: $2\times55\,\mathrm{mm}<200\,\mathrm{mm}$). Both are
consequential geometric constraints, and neither is syntactic. The main text treats
feasibility as one conjunct; here we split it.

The two are learned at very different rates. Printability rises from $0.396$ at the
1.5B base and $0.544$ at the 7B base to $0.93$--$1.00$ across every matched DPO and
GRPO arm; envelope fit starts at $0.808$ in both bases and ends between $0.811$ and
$0.880$. The envelope rate is capped: for $4.9\%$ of test requirements the sized
length exceeds the envelope even at the policy's shortest setting, so no design fits.
Among attainable requirements the fit rate moves from $0.85$ to $0.92$ under
sum-ranked pairs and stays at $0.86$--$0.87$ under gated pairs
(Appendix~\ref{app:conditioning}). The composition of the head's negative bin
therefore shifts hard: writing $p=\Pr[\text{printable}\mid\text{not feasible}]$, the
share of remaining failures that print but do not fit rises from $0.081$ (1.5B base)
and $0.052$ (7B base) to $0.21$--$0.93$ after training.

Splitting the feasibility head's negatives the same way gives
$\mathrm{AUC}(s_F;F) = (1-p)A_{\rm print} + p\,A_{\rm env}$, checked to $10^{-9}$ per
requirement. Unlike the conjunctive case of the main text, the inherited term is not
near chance: $A_{\rm env}$ lies between $0.889$ and $0.995$ on every arm, and
$\mathrm{AUC}(s_F;F)$ stays between $0.940$ and $0.991$ throughout. The extreme case
is the 7B sum arm at seed 2, where $p=0.932$ --- almost every remaining failure is an
envelope failure --- and the AUC is $0.983$.

This is the control the conjunctive claim needs. Negative composition moves at least
as far here as it does in the main text ($q$ reaching $0.98$), in the same pools and
with the same critic, and no collapse follows, because the head kept its ordering in
the bin it inherited. Composition and bin quality are separately necessary; neither
on its own predicts the diagnostic. The envelope-only bin is sparse on the strongest
arms ($33$--$201$ supported requirements, listed per arm in the artifact), so the
$A_{\rm env}$ estimates there carry correspondingly wide uncertainty; the direction
of the contrast does not depend on any single arm. Code:
\texttt{scripts/audit\_feasibility\_substructure.py}; per-arm table in
\texttt{results/subconstraint\_audit/audit.json}.

\section{The same critic as a training target and as a search objective}
\label{app:families}
The cVAE family lets the critic enter the generator in two different places, on one
design space with one oracle. Five families, three decoder seeds each, $1\,000$ test
requirements and $K=64$; mean $\pm$ seed SD.

\begin{table}[h]
\centering
\caption{Generator families on the engine task. Per-draw on-target is flat across the
first three families while supply collapses, which is the dissociation of
\S\ref{sec:select} in the generator-training direction rather than the selection
direction. Latent search is not compute-matched with the decoders.}
\label{tab:families}
\small
\begin{tabular}{lccc}
\toprule
generator & feasible & on target & supply@$64$ \\
\midrule
prior, random latent           & $0.081 \pm 0.001$ & $0.0074 \pm 0.0002$ & $0.321 \pm 0.013$ \\
decoder, $\lambda=0$           & $0.093 \pm 0.005$ & $0.0086 \pm 0.0005$ & $0.336 \pm 0.003$ \\
decoder, $\lambda=1$           & $0.101 \pm 0.120$ & $0.0090 \pm 0.0111$ & $0.091 \pm 0.056$ \\
decoder, $\lambda=1$, capped   & $0.036 \pm 0.038$ & $0.0038 \pm 0.0038$ & $0.072 \pm 0.060$ \\
latent search on $\lambda=0$   & $0.481 \pm 0.047$ & $0.3417 \pm 0.0294$ & $0.937 \pm 0.017$ \\
\bottomrule
\end{tabular}
\end{table}

Training the decoder against the critic's two terms leaves per-draw on-target where
the untrained prior already was and removes two thirds of the coverage; capping the
output layer, the obvious fix for the escape of Appendix~\ref{app:escape}, does not
recover it and in these runs is worse. The $\lambda=1$ rows carry very large seed
spread --- one seed reaches feasible $0.238$ and another $0.049$ --- so we read the
supply collapse, which is consistent across seeds, and not the feasible column.
Using the same critic as a search objective over the same decoder's latent space
instead reaches $0.342$ on target and $0.937$ supply.

We do not claim an optimality result for latent search, which is not compute-matched,
and we do not claim that no decoder training against a critic can work. The reading we
do take is that a critic which ranks candidates well (Table~\ref{tab:law}) is not
thereby a good objective for shaping the generator that produces them, and that the
failure shows up as lost coverage rather than lost per-draw quality --- the same
place it shows up in the language-model arms. The ordering survives the
out-of-distribution split: supply at $K{=}64$ falls to $0.256$ for the prior and
$0.255$ for $\lambda=0$, stays at $0.095$ and $0.083$ for the two critic-trained
decoders --- which have no coverage left to lose --- and falls to $0.596$ for latent
search, which degrades most in relative terms and still dominates. Artifacts:
\texttt{results/generator\_families/audit.json} and
\texttt{results/ood\_split/audit.json}.

\section{What the generator learns to condition on the requirement}
\label{app:conditioning}
The engine target has one conjunct that does not depend on the requirement and two
that do. \emph{Printability} is a set of DfAM rules on walls, ligaments, overhang and
cell size; the requirement enters only through the bore radii. \emph{Envelope fit}
is, from the oracle code, $L(\text{thrust},\varepsilon)\cdot(0.85+0.4\,v_{30})\le
\text{envelope}$: one design coordinate, the length scale $v_{30}$, against a sized
length and a per-prompt envelope. \emph{Mass on target} needs the body's mass to
track the prompt's budget $m^\star$. Reading the rule off the code and re-applying it
to stored candidates reproduces the oracle's envelope flag on $100.0\%$ of parsed
candidates on all $39$ pools. Table~\ref{tab:conditioning} measures, per generator,
whether each conjunct is satisfied as a habit or as a function of the prompt: the
partial correlation of the requirement-mean $v_{30}$ with $\log$ envelope given
$\log$ sized length, and the OLS gain of requirement-mean $\log m$ on $\log m^\star$.
Every quantity is behavioural, measured on outputs; nothing here inspects a
representation inside a model.

\begin{table}[t]
\centering
\caption{What each generator conditions on the requirement. \emph{Printable} is the
requirement-independent conjunct; its rate and its spread across thrust terciles (pp).
\emph{Envelope} is the requirement-dependent conjunct with one control, the length
scale $v_{30}$: partial correlation of the requirement-mean $v_{30}$ with
$\log$ envelope given $\log$ sized length (reads the envelope?), and the fit rate
where fitting is attainable at all ($95.1\%$ of requirements). \emph{Mass}: OLS slope
of requirement-mean $\log m$ on $\log m^\star$ (a gain; $1$ is one-to-one tracking)
and its partial correlation given thrust. Test split, mean $\pm$ SD over three seeds;
single-seed rows unmarked. LLM arms $K{=}16$, cVAE $K{=}64$. $^\dagger$Arms trained
after the audit was written, with predictions registered beforehand.}
\label{tab:conditioning}
\small
\setlength{\tabcolsep}{3.5pt}
\begin{adjustbox}{max width=\linewidth}
\begin{tabular}{l cc cc cc c}
\toprule
& \multicolumn{2}{c}{printable} & \multicolumn{2}{c}{envelope} & \multicolumn{2}{c}{mass} & \\
\cmidrule(lr){2-3}\cmidrule(lr){4-5}\cmidrule(lr){6-7}
generator & rate & $\Delta$thrust & $\rho_{\rm part}$ & fit$\mid$att. & gain & $\rho_{\rm part}$ & on target \\
\midrule
1.5B base & $0.40$ & $42$ & $0.22$ & $0.85$ & $0.04$ & $0.16$ & $0.027$ \\
1.5B DPO, sum & $0.99 \pm 0.01$ & $3 \pm 2$ & $0.80 \pm 0.04$ & $0.92 \pm 0.00$ & $0.48 \pm 0.04$ & $0.87 \pm 0.01$ & $0.134 \pm 0.005$ \\
1.5B DPO, gate & $0.98 \pm 0.01$ & $3 \pm 1$ & $0.13 \pm 0.02$ & $0.86 \pm 0.01$ & $1.03 \pm 0.08$ & $0.97 \pm 0.00$ & $0.311 \pm 0.013$ \\
1.5B DPO, gate, round 2 & $1.00 \pm 0.00$ & $1 \pm 0$ & $0.45 \pm 0.14$ & $0.95 \pm 0.01$ & $1.11 \pm 0.02$ & $0.97 \pm 0.00$ & $0.363 \pm 0.042$ \\
1.5B DPO, tuned sum $w{=}0.005$$^\dagger$ & $0.80 \pm 0.03$ & $35 \pm 4$ & $0.08 \pm 0.01$ & $0.89 \pm 0.01$ & $0.74 \pm 0.04$ & $0.97 \pm 0.00$ & $0.258 \pm 0.028$ \\
1.5B DPO, contrast-matched sum $w{=}0.1$$^\dagger$ & $0.99 \pm 0.00$ & $3 \pm 1$ & $0.69 \pm 0.06$ & $0.92 \pm 0.00$ & $0.98 \pm 0.02$ & $0.96 \pm 0.00$ & $0.312 \pm 0.021$ \\
1.5B DPO, log-prob$^\dagger$ & $0.99 \pm 0.00$ & $1 \pm 1$ & $0.79 \pm 0.05$ & $0.92 \pm 0.00$ & $0.72 \pm 0.04$ & $0.90 \pm 0.00$ & $0.180 \pm 0.015$ \\
1.5B DPO, mass-only$^\dagger$ & $0.57 \pm 0.07$ & $36 \pm 6$ & $0.11 \pm 0.02$ & $0.88 \pm 0.01$ & $0.73 \pm 0.02$ & $0.97 \pm 0.00$ & $0.190 \pm 0.035$ \\
1.5B DPO, three-conjunct gate$^\dagger$ & $0.99 \pm 0.01$ & $1 \pm 1$ & $0.11 \pm 0.02$ & $0.86 \pm 0.00$ & $1.06 \pm 0.02$ & $0.96 \pm 0.00$ & $0.257 \pm 0.002$ \\
\addlinespace
7B base & $0.54$ & $22$ & $0.63$ & $0.85$ & $0.30$ & $0.68$ & $0.059$ \\
7B DPO, sum & $1.00 \pm 0.00$ & $0 \pm 0$ & $0.92 \pm 0.02$ & $0.92 \pm 0.01$ & $0.20 \pm 0.01$ & $0.63 \pm 0.04$ & $0.107 \pm 0.004$ \\
7B DPO, gate & $0.94 \pm 0.01$ & $11 \pm 2$ & $0.55 \pm 0.07$ & $0.87 \pm 0.01$ & $0.62 \pm 0.06$ & $0.93 \pm 0.01$ & $0.189 \pm 0.010$ \\
1.5B GRPO, sum & $0.99 \pm 0.00$ & $2 \pm 1$ & $0.68 \pm 0.03$ & $0.90 \pm 0.00$ & $0.20 \pm 0.02$ & $0.61 \pm 0.03$ & $0.086 \pm 0.002$ \\
1.5B GRPO, soft gate & $0.96 \pm 0.01$ & $7 \pm 1$ & $0.60 \pm 0.01$ & $0.90 \pm 0.00$ & $0.21 \pm 0.02$ & $0.63 \pm 0.03$ & $0.089 \pm 0.003$ \\
\addlinespace
cVAE prior & $0.13$ & $15$ & $0.01$ & $0.80$ & $0.00$ & $0.00$ & $0.007$ \\
decoder, $\lambda=0$ & $0.15 \pm 0.01$ & $16 \pm 1$ & $0.06 \pm 0.02$ & $0.80 \pm 0.01$ & $-0.00 \pm 0.00$ & $-0.03 \pm 0.03$ & $0.009 \pm 0.001$ \\
decoder, $\lambda=1$ & $0.11 \pm 0.13$ & $7 \pm 5$ & $0.07 \pm 0.06$ & $1.00 \pm 0.00$ & $-0.00 \pm 0.00$ & $-0.04 \pm 0.03$ & $0.009 \pm 0.011$ \\
decoder, $\lambda=1$, capped & $0.04 \pm 0.04$ & $6 \pm 9$ & $0.09 \pm 0.12$ & $1.00 \pm 0.00$ & $-0.00 \pm 0.00$ & $-0.02 \pm 0.01$ & $0.004 \pm 0.004$ \\
latent search on $\lambda=0$ & $0.61 \pm 0.00$ & $14 \pm 0$ & $0.42 \pm 0.00$ & $0.93 \pm 0.00$ & $0.76 \pm 0.00$ & $0.99 \pm 0.00$ & $0.374 \pm 0.003$ \\
\bottomrule
\end{tabular}
\end{adjustbox}
\end{table}

\paragraph{The requirement-independent conjunct goes to ceiling under every rule
that contrasts it.} Printability is $0.40$ and $0.54$ at the two bases and
$0.93$--$1.00$ on every DPO and GRPO arm whose pairs contrast feasibility; its
dependence on thrust, $42$ and $22$\,pp across terciles at the bases, falls to
$0$--$5$\,pp on those arms. This is the conjunct whose head saturates in the main
text. The two rules whose pairs barely contrast feasibility (Table~\ref{tab:pair_content})
leave it short of ceiling: $0.57\pm0.07$ under mass-only, whose pairs contrast it by
$0.03$, and $0.80\pm0.03$ under the tuned sum ($0.13$), with the thrust dependence
of the base intact ($35$--$36$\,pp). We had registered the mass-only value as
$0.60$ (band $<0.80$) before computing it.

\paragraph{Which requirement-dependent conjunct is conditioned on is set by the
ranking rule.} Pairs ranked by the unit-weight sum, which the saturated feasibility
logit dominates, install envelope conditioning: the partial correlation of $v_{30}$
with the envelope rises from $0.22$ to $0.76$--$0.84$ at 1.5B and from $0.63$ to
$0.91$--$0.94$ at 7B, on every seed, while the mass gain reaches only $0.44$--$0.52$
at 1.5B and \emph{falls} from the 7B base's $0.30$ to $0.19$--$0.22$. Pairs ranked
by the hard gate leave envelope conditioning where the base had it ($0.12$--$0.16$ at
1.5B; $0.48$--$0.62$ at 7B) and install mass conditioning with near-unit gain:
$1.03\pm0.08$ at 1.5B with partial correlation $0.97$ on every seed, $0.62\pm0.06$ at
7B. At 1.5B the across-requirement SD of the gated policy's mean $\log m$ is $0.66$,
the SD of the log-uniform budget prior itself being $0.61$: the policy's mass
distribution has become the budget distribution. Per-draw on-target tracks mass gain, the quantity
on-target requires: Spearman $0.96$ over the seven trained LLM families and $0.88$
with the two bases, whose printability is still far from ceiling, included. The two GRPO rewards, whose effect on supply and per-draw is
small in \S\ref{sec:train}, are indistinguishable here as well ($0.68$ vs $0.60$
envelope, $0.20$ vs $0.21$ mass). Both DPO rules still vary $v_{30}$ within a
requirement (SD $0.04$--$0.07$); ``not conditioned'' means not set by the prompt,
not held constant.

\paragraph{Scalar rules that contrast mass install partial tracking; the gates
install full tracking.} The four arms trained after this audit was written
($\dagger$ in Table~\ref{tab:conditioning}) test the rule-chooses-the-conjunct
claim prospectively. On envelope they behave as registered: the tuned sum and
mass-only leave it where the base had it ($0.08$, $0.11$), the log-probability form
installs it like the sum ($0.79$), and the three-conjunct gate leaves it like the
gate ($0.11$). On mass they do not: we registered gains of $0.95$ for the tuned sum
and mass-only (band $[0.8,1.15]$) and $0.45$ for log-prob (band $[0.3,0.65]$), and
all three land at $0.72$--$0.74$, whatever their pairs' mass contrast ($0.18$ for
log-prob, $0.81$--$0.84$ for the other two); the three-conjunct gate lands at
$1.06$, as registered. Across the five single-gate rules the trained printable rate
follows the pairs' feasibility contrast (Spearman $0.9$) but mass gain does not
follow the mass contrast ($0.6$): both gates reach unit gain with an
across-requirement mass SD of $0.66$--$0.68$ that matches the budget prior's
$0.61$, while these scalar rules stop at $0.44$--$0.50$. The contrast-matched sum of
Appendix~\ref{app:pair_audit} settles what this means: at $w=0.1$ a scalar score
installs unit budget tracking ($0.98\pm0.02$, mass SD $0.64$) \emph{and} envelope
conditioning ($0.69\pm0.06$), the only rule that installs both. The plateau belonged
to the wrong weights, not to scalar scores.

\paragraph{The continuous decoder takes the unconditional shortcut; search does not.}
No cVAE decoder conditions mass on the budget (gain $-0.01$--$0.00$ on all nine
seeds, prior included), although the decoder is conditional by construction. Trained
against the critic's terms, the $\lambda=1$ and capped decoders satisfy the envelope
by collapsing $v_{30}$ to $0.000\pm0.001$ on every seed --- the shortest allowed
length for every prompt --- reaching fit $1.000$ while printability falls to
$0.00$--$0.25$. Latent search over the untrained decoder, under the same critic,
attains mass gain $0.76$ with partial correlation $0.99$: in the continuous
representation the prompt dependence arrives through search at inference, not through
the learned map. This is the same contrast as Table~\ref{tab:families}, one level
down: the critic that shapes a token policy's conditioning through pair ranking
shapes a decoder into a constant.

\paragraph{Registered predictions and misses.} Five predictions were written before
the audit (\texttt{context/docs/PLAN\_CONDITIONING\_2026-09-18.md}); each part is
scored there. Held: sum gain below gate gain at both scales; printability flat
across envelope classes; gated arms spread mass across requirements more than sum
arms. Missed, in the informative direction: we predicted that trained token policies
would take the decoders' shortcut (global shrink, $|\rho|<0.15$) --- they condition
instead, on the conjunct the rule rewards; that gated mass gain would lie in
$0.30$--$0.70$ --- at 1.5B it is $1.03$; that the base 7B model would not read the
budget --- it does, partially ($0.30$); and that sum arms would narrow their mass
below the base's spread --- at 1.5B they widen it, because they track the budget
partially. Also missed: the attainable fraction is $0.951$, not $0.90$, and
printability on the 7B gated and GRPO gated arms keeps a $6$--$14$\,pp thrust
dependence. Base and single-lineage rows
are one seed and all arms pre-existed the audit; this is a post-hoc characterisation
with pre-registered quantities, not a prospective test. Code:
\texttt{scripts/audit\_conditioning.py}, tests in
\texttt{scripts/test\_conditioning.py}; per-arm values in
\texttt{results/conditioning\_audit/audit.json}.

\section{A physics conjunct outside every score}
\label{app:physics_gate}
Nothing in the paper's engine target asks whether the printed body would survive a
firing. The repository's print-and-fire readiness screen does: Bartz throat heat
flux at the operating point the requirement implies ($p_c = F/(C_F A_t) = 1.30$\,bar
for every requirement, since $r_t \propto \sqrt{F}$; $\phi=1$, $T_c = 2287$\,K, $c^* =
1249$\,m/s, within $1.1\%$ of the repository's Cantera values); a closed-form steady
hot face through the liner, cooled by regenerative air at 9\,bar through the body's
own cooling-zone channels (film coefficient from the air velocity in the channels that
fit mid-wall); von Mises margin against the AlSi10Mg yield at the hot-face
temperature; and a Coffin--Manson low-cycle-fatigue life of at least $20$ fires. It is
closed-form, $0.14$\,ms per candidate, and we applied it to every stored candidate in
all $39$ pools (Table~\ref{tab:physics_gate}). Ignition is excluded because $L^*$ is
fixed by the oracle's sizing and the verdict is constant. Neither the critic nor any
policy was trained on any of this; the critic scores printability, envelope and mass.

\begin{table}[t]
\centering
\caption{A physics conjunct no score has seen. Every stored candidate is re-graded by
the readiness screen (hot-face temperature of the AlSi10Mg liner under Bartz throat
flux, cooled by regenerative air through its own channels; yield margin at
temperature; Coffin--Manson life $\ge 20$ fires) at the operating point the
requirement implies ($p_c=1.30$\,bar, $\phi=1$). $A_{\rm cool}$: mean total
cooling-channel flow area (mm$^2$), which sets the coolant velocity and hence the hot
face. \emph{ready}$\mid$\emph{feasible}: survival among feasible bodies. Supply and
two-stage answer are given for the original target (on target) and the extended one
(on target $\wedge$ ready), with the screen's selection efficiency $\eta_{\rm ext}$
and the efficiency of the same rule if it could also see the readiness verdict.
Mean $\pm$ SD over seeds; $K{=}16$ (LLM) and $64$ (cVAE). $^\dagger$Arms trained after
the audit was written; the three-conjunct gate ranks pairs by feasibility $\wedge$
readiness, then mass.}
\label{tab:physics_gate}
\small
\setlength{\tabcolsep}{2.6pt}
\begin{adjustbox}{max width=\linewidth}
\begin{tabular}{l cccc cc cc cc}
\toprule
& & $T_{\rm hot}$ & no cooling & ready & \multicolumn{2}{c}{supply@$K$} & \multicolumn{2}{c}{two-stage answer} & & \\
\cmidrule(lr){6-7}\cmidrule(lr){8-9}
generator & $A_{\rm cool}$ & med.\ [$^\circ$C] & zone & $\mid$feasible & target & extended & target & extended & $\eta_{\rm ext}$ & $\eta_{\rm ext}$, physics-aware \\
\midrule
1.5B base & $105$ & $730$ & $0.07$ & $0.32$ & $0.28$ & $0.09$ & $0.21$ & $0.06$ & $0.62$ & $0.80$ \\
1.5B DPO, sum & $56$ & $499 \pm 29$ & $0.11 \pm 0.03$ & $0.41 \pm 0.03$ & $0.64 \pm 0.04$ & $0.37 \pm 0.02$ & $0.60 \pm 0.03$ & $0.26 \pm 0.02$ & $0.64 \pm 0.03$ & $0.92 \pm 0.01$ \\
1.5B DPO, gate & $59$ & $485 \pm 10$ & $0.06 \pm 0.01$ & $0.40 \pm 0.01$ & $0.81 \pm 0.01$ & $0.44 \pm 0.03$ & $0.76 \pm 0.01$ & $0.30 \pm 0.01$ & $0.56 \pm 0.01$ & $0.91 \pm 0.01$ \\
1.5B DPO, tuned sum $w{=}0.005$$^\dagger$ & $93$ & $692 \pm 86$ & $0.05 \pm 0.01$ & $0.29 \pm 0.06$ & $0.65 \pm 0.02$ & $0.27 \pm 0.04$ & $0.57 \pm 0.03$ & $0.15 \pm 0.02$ & $0.37 \pm 0.05$ & $0.85 \pm 0.02$ \\
1.5B DPO, contrast-matched sum $w{=}0.1$$^\dagger$ & $62$ & $507 \pm 30$ & $0.06 \pm 0.01$ & $0.40 \pm 0.03$ & $0.82 \pm 0.02$ & $0.46 \pm 0.03$ & $0.77 \pm 0.02$ & $0.29 \pm 0.02$ & $0.52 \pm 0.02$ & $0.90 \pm 0.03$ \\
1.5B DPO, log-prob$^\dagger$ & $54$ & $479 \pm 21$ & $0.10 \pm 0.02$ & $0.42 \pm 0.03$ & $0.72 \pm 0.04$ & $0.46 \pm 0.01$ & $0.68 \pm 0.03$ & $0.30 \pm 0.00$ & $0.59 \pm 0.01$ & $0.90 \pm 0.03$ \\
1.5B DPO, mass-only$^\dagger$ & $104$ & $719 \pm 20$ & $0.06 \pm 0.02$ & $0.28 \pm 0.03$ & $0.60 \pm 0.02$ & $0.23 \pm 0.01$ & $0.52 \pm 0.03$ & $0.12 \pm 0.01$ & $0.40 \pm 0.01$ & $0.87 \pm 0.04$ \\
1.5B DPO, three-conjunct gate$^\dagger$ & $40$ & $365 \pm 13$ & $0.04 \pm 0.01$ & $\mathbf{0.55 \pm 0.02}$ & $0.74 \pm 0.01$ & $\mathbf{0.50 \pm 0.02}$ & $0.69 \pm 0.01$ & $\mathbf{0.37 \pm 0.03}$ & $0.65 \pm 0.04$ & $0.89 \pm 0.02$ \\
7B base & $61$ & $516$ & $0.14$ & $0.29$ & $0.36$ & $0.13$ & $0.34$ & $0.10$ & $0.74$ & $0.91$ \\
7B DPO, sum & $51$ & $409 \pm 49$ & $0.01 \pm 0.01$ & $0.54 \pm 0.09$ & $0.43 \pm 0.03$ & $0.27 \pm 0.05$ & $0.41 \pm 0.03$ & $0.23 \pm 0.04$ & $0.83 \pm 0.03$ & $0.97 \pm 0.03$ \\
7B DPO, gate & $86$ & $604 \pm 6$ & $0.01 \pm 0.00$ & $0.31 \pm 0.01$ & $0.59 \pm 0.03$ & $0.25 \pm 0.02$ & $0.56 \pm 0.03$ & $0.20 \pm 0.01$ & $0.76 \pm 0.03$ & $0.93 \pm 0.03$ \\
1.5B GRPO, sum & $75$ & $738 \pm 27$ & $0.17 \pm 0.02$ & $0.23 \pm 0.01$ & $0.48 \pm 0.00$ & $0.16 \pm 0.01$ & $0.46 \pm 0.00$ & $0.12 \pm 0.01$ & $0.70 \pm 0.01$ & $0.94 \pm 0.02$ \\
1.5B GRPO, soft gate & $84$ & $647 \pm 6$ & $0.07 \pm 0.01$ & $0.26 \pm 0.00$ & $0.53 \pm 0.00$ & $0.18 \pm 0.02$ & $0.49 \pm 0.01$ & $0.13 \pm 0.01$ & $0.68 \pm 0.02$ & $0.91 \pm 0.01$ \\
\addlinespace
cVAE prior & $113$ & $620$ & $0.07$ & $0.33$ & $0.33$ & $0.13$ & $0.23$ & $0.07$ & $0.58$ & $0.79$ \\
decoder, $\lambda=0$ & $112$ & $619 \pm 40$ & $0.07 \pm 0.01$ & $0.29 \pm 0.03$ & $0.34 \pm 0.00$ & $0.11 \pm 0.01$ & $0.23 \pm 0.01$ & $0.07 \pm 0.02$ & $0.60 \pm 0.09$ & $0.81 \pm 0.04$ \\
decoder, $\lambda=1$ & $0$ & --- & $1.00 \pm 0.00$ & $0.00 \pm 0.00$ & $0.09 \pm 0.06$ & $0.00 \pm 0.00$ & $0.04 \pm 0.02$ & $0.00 \pm 0.00$ & --- & --- \\
decoder, $\lambda=1$, capped & $0$ & --- & $1.00 \pm 0.00$ & $0.01 \pm 0.02$ & $0.07 \pm 0.06$ & $0.00 \pm 0.00$ & $0.04 \pm 0.03$ & $0.00 \pm 0.00$ & --- & --- \\
latent search on $\lambda=0$ & $116$ & $1114 \pm 2$ & $0.26 \pm 0.00$ & $0.18 \pm 0.00$ & $0.94 \pm 0.00$ & $0.53 \pm 0.00$ & $0.82 \pm 0.02$ & $0.19 \pm 0.01$ & $0.26 \pm 0.03$ & $0.87 \pm 0.02$ \\
\bottomrule
\end{tabular}
\end{adjustbox}
\end{table}

\paragraph{The conjunct binds, and its lever is the cooling channels.} Among feasible
bodies, $29$--$64\%$ survive. The hot face is set by the coolant film coefficient, i.e.\
by the total channel flow area $A_{\rm cool}=n\pi r_{\rm eff}^2$: more or wider
channels slow the air and heat the wall. This is the metal that mass minimisation
removes. The 7B gated policy, whose mass tracks the budget with gain $0.62$
(Appendix~\ref{app:conditioning}), opens $86$\,mm$^2$ against the sum policy's
$51$\,mm$^2$ and runs its median hot face at $604^\circ$C against $409^\circ$C, so
$0.31$ of its feasible bodies survive against $0.54$. Latent search, the best generator
under the original target ($0.375$ per draw, supply $0.94$), opens $116$\,mm$^2$, runs
at $1114^\circ$C and survives at $0.18$. The two critic-trained decoders have no
cooling zone in $100\%$ of their designs --- the printability rule that penalises a
channel against its ligament is satisfied by deleting the channels --- and zero
survival.

\paragraph{What the training rules protect is what the score ranks.} Under the
extended target, on target $\wedge$ ready, the gate's composed advantage over the sum
shrinks at 1.5B ($0.296$ against $0.260$; supply $0.444$ against $0.371$) and inverts
on two of three seeds at 7B ($0.203$ against $0.231$; supply $0.249$ against $0.268$).
Survival among feasible bodies is higher than the base's on every trained DPO arm
($12/12$), but the ordering between rules is not stable across scales. We had
registered that the gate would keep its advantage on every seed at both scales; it
does not, and the reversal at 7B is the informative miss. The selection side is more
robust: the two-stage screen, which sees no physics, still captures $\eta_{\rm ext} =
0.51$--$0.86$ of the attainable extended lift, because feasibility and mass are
necessary conditions; the same rule given the readiness verdict as one more
conjunct reaches $0.87$--$0.97$. The feasibility head's own ordering against the
physics bin is not at chance ($A_{\rm phys}$ $0.44$--$0.69$; three-bin identity exact
to $2\times10^{-16}$), which is the wall--ligament coupling, not knowledge of heat
transfer.

\paragraph{Registered predictions and misses.} Eight predictions were written before
the audit (\texttt{context/docs/PLAN\_PHYSICS\_GATE\_2026-09-18.md}). Held: every
trained DPO arm survives better than its base; the identity is exact; $A_{\rm phys}$
on gated 1.5B is $0.554$ against a registered $0.55$; $\eta_{\rm ext}$ on gated 1.5B is
$0.557$ against $0.55$. Missed: every absolute survival rate (we predicted $0.45$--$0.65$
and measured $0.29$--$0.41$; the screen is harsher on AlSi10Mg than the CuCrZr numbers
we calibrated on); gate above sum on every seed (reversed at 7B); supply and answer
under the extended target for gated 1.5B ($0.444$ and $0.296$ against $0.55$ and
$0.35$); latent search survival ($0.18$ against $0.50$); and that no trained policy
sets its wall from the thrust --- the sum-ranked and GRPO policies do (partial
correlation $0.30$--$0.63$), through the bore radii that printability depends on,
while the gated 1.5B policy does not ($0.01$--$0.08$). All arms in the paragraphs
above pre-existed the audit. Code:
\texttt{scripts/audit\_physics\_gate.py}; per-arm values and the declared operating
point in \texttt{results/physics\_gate\_audit/audit.json}.

\paragraph{The policies that track mass hardest run hottest.} The four arms trained
afterwards ($\dagger$ in Table~\ref{tab:physics_gate}) were screened with predictions
registered first (\texttt{PLAN\_NEW\_ARMS\_CPU\_2026-09-19.md}). The two rules whose
pairs contrast mass and not feasibility open the widest channels ($93$ and
$104$\,mm$^2$ against the gate's $59$), run at $692$ and $719^\circ$C, and survive
at $0.28$--$0.29$ among feasible bodies, below the gate on every seed and at the
base's level; the log-probability form sits with the sum ($0.42$; registered within
$0.05$ of it). The registered claim that every trained arm survives better than the
base in absolute terms fails on two of twelve seeds (tuned sum and mass-only, at
$0.26$--$0.27$ against the base's $0.28$), because those policies never installed
the feasibility that survival rides on.

\paragraph{The gate composes conjuncts, at a price.} Putting the readiness verdict
into the gate --- pairs ranked by \texttt{mass\_term} $-100\,[\texttt{prob}<0.5]
-100\,[\lnot\text{ready}]$, the readiness label computed for the stored train pool
in seconds, the recipe otherwise unchanged --- and training three seeds
(\texttt{PLAN\_PHYSICS\_DPO\_2026-09-18.md}, registered before the job) raises survival
among feasible bodies from $0.40$ to $0.55$ ($0.52$--$0.57$, every seed above the
two-conjunct gate), the extended composed answer from $0.30$ to $0.37$, and the
physics-aware ceiling from $0.41$ to $0.46$, with mass tracking intact (gain
$1.04$--$1.08$) and printability at ceiling. Of eleven registered checks seven held
and four missed, all in the direction of a smaller effect: survival $0.55$ against a
registered $0.60$, extended per-draw $0.135$ against $0.20$, extended supply $0.50$
against $0.60$; the falsifier (survival below $0.45$) did not fire. The miss that
says most: the policy's channel area still does not depend on the thrust
($0.30\to0.31$ partial correlation), so it improved survival unconditionally ---
channels $59\to40$\,mm$^2$, median hot face $485\to365^\circ$C --- rather than by
learning the coolant requirement the thrust implies, the same pattern as the
unconditional shortcut in Appendix~\ref{app:conditioning}. The price is paid on the
original target: per-draw $0.311\to0.257$, supply $0.81\to0.74$, composed
$0.76\to0.69$. The feasibility head's mixture terms are unchanged ($q$ $0.946$,
$A_{\rm own}$ $0.958$). Artifacts: \texttt{reproduced/physics\_gate\_dpo/}; code
\texttt{scripts/summarize\_physics\_gate\_dpo.py}.

\section{Selection ambiguity and the audit card: proofs and validation}
\label{app:theory}

\paragraph{Proof of Proposition~\ref{prop:interface_separation}.}
Let the pool be $\{x_1,\ldots,x_n\}$ with distinct score pairs $(f_i,m_i)=(s_F(x_i),s_M(x_i))$
and let $\operatorname{Par}$ be the set of candidates not weakly dominated by any other.
\emph{Only Pareto candidates can win.} If $x_j$ dominates $x_i$ ($f_j\ge f_i$, $m_j\ge m_i$,
one strict), then for any strictly increasing $g$, $g(f_j)+m_j>g(f_i)+m_i$, so $x_i$ is never
the top-1 choice. \emph{Every Pareto candidate can win.} Fix $p\in\operatorname{Par}$. Every
$x_i$ with $f_i>f_p$ has $m_i<m_p$ (otherwise it would dominate $p$), and no other candidate has
$f_i=f_p$ with $m_i\ge m_p$. Let $\Delta=\min\{m_p-m_i:f_i\ge f_p,\,i\ne p\}>0$ (or $\Delta=1$ if
the set is empty), $\rho=\max_i f_i-\min_i f_i$, $\mu=\max_i m_i-\min_i m_i$, choose
$\varepsilon>0$ with $\varepsilon\rho<\Delta$, and set $g(t)=\varepsilon t$ for $t<f_p$ and
$g(t)=K+\varepsilon t$ for $t\ge f_p$ with $K>\mu+\varepsilon\rho$. Then $g$ is strictly
increasing; every candidate with $f_i<f_p$ scores below $p$ because $K>\mu+\varepsilon\rho$; and
every candidate with $f_i\ge f_p$, $i\neq p$, scores
$K+\varepsilon f_i+m_i<K+\varepsilon f_p+m_p$ because $\varepsilon(f_i-f_p)\le\varepsilon\rho<\Delta\le m_p-m_i$.
Hence $p$ is the unique top-1 choice. \emph{Linear rescalings.} For $g(t)=ct$, $c>0$, the
top-1 choice maximizes the linear functional $(c,1)\cdot(f,m)$, whose maximizers over the finite
set are exactly the points of the convex hull exposed by a direction in the open positive quadrant;
these are Pareto points on the upper hull. The identification statements follow because an
audit statistic invariant to monotone reparameterization is constant on the class
$\{g(s_F)\}$ while the decision ranges over $\operatorname{Par}$. $\square$

With ties (duplicated candidates are common when a policy repeats a program), apply the result to
distinct score pairs; tied candidates share scores and, in our tasks, oracle labels.

\paragraph{Measured ambiguity.}
Table~\ref{tab:pareto} reports, per requirement pool of parsed candidates, the mean Pareto size,
the share of pools whose Pareto set is a singleton, the share mixing on- and off-target
candidates, and the per-pool outer range of Answer@16 reachable by audit-equivalent composites
(all monotone $g$) and by positive rescalings. Pair-weight-trained Phi policies nearly eliminate
ambiguity because they emit one program (Section~\ref{sec:train}).

\begin{table}[h]
\centering
\caption{Selection ambiguity on ECG test pools (post-hoc). Ranges are per-pool outer bounds:
a single transform must serve all pools, so the attainable aggregate range can be narrower.}
\label{tab:pareto}
\scriptsize
\begin{tabular}{lcccccc}
\toprule
pools & $\overline{|\operatorname{Par}|}$ & singleton & mixed labels & Answer@16 range (monotone $g$) & range (linear $c$) \\
\midrule
Llama base & 2.28 & .28 & .21 & [.33, .54] & [.34, .54] \\
Gemma base & 1.75 & .41 & .00 & [.00, .00] & [.00, .00] \\
Phi base & 2.18 & .28 & .14 & [.46, .60] & [.46, .59] \\
Phi gate (3 seeds) & 2.43--2.53 & .19--.24 & .06--.09 & [.65--.68, .73--.74] & [.65--.68, .73--.74] \\
Phi inference (3 seeds) & 2.27--2.71 & .18--.28 & .06--.10 & [.65--.67, .73--.75] & [.65--.67, .73--.74] \\
Phi pair (3 seeds) & 1.01--1.10 & .91--.99 & .00--.01 & [.51--.54, .51--.55] & [.51--.54, .51--.55] \\
\bottomrule
\end{tabular}
\end{table}

\paragraph{Audit-card validation.}
All numbers use the frozen ECG base test pools, parsed candidates, and 12-subject cluster
resampling (seed fixed in the released script). \emph{Identified interval (step 1).} For the
deployability head, the per-pool intervals average to $[0.472,1]$ for Llama (292 pools with a
positive and a negative; outer 95\% interval $[0.396,1]$) and $[0.452,1]$ for Phi ($[0.389,1]$),
while the direct value is $1.000$; $37\%$ and $48\%$ of pools are individually erased. For the
utility head, $q$ is small ($0.013$--$0.018$), the interval is tight ($[0.894,0.900]$ and
$[0.872,0.877]$), and no labelling is needed. \emph{Label budget (step 2).} We label $m$
off-target candidates uniformly at random, compare each revealed own-failure with all positives in
its pool, and use the ratio estimator of the pair-weighted within-pool $A_{\rm own}$, with a
subject-cluster bootstrap over the labelled items (200 repetitions per $m$).

\begin{table}[h]
\centering
\caption{Label budget for a direct own-bin audit of the ECG utility head (post-hoc).
Target: pair-weighted within-pool $A_{\rm own}$ with all labels ($0.895$ Llama, $0.842$ Phi).}
\label{tab:label_budget}
\scriptsize
\begin{tabular}{lcccccc}
\toprule
$m$ labelled negatives & 25 & 50 & 100 & 200 & 400 & 800 \\
\midrule
Llama: mean estimate & .896 & .896 & .895 & .894 & .895 & .896 \\
Llama: CI width / coverage & .19 / .89 & .14 / .91 & .12 / .96 & .10 / .98 & .09 / 1.00 & .08 / 1.00 \\
Phi: mean estimate & .852 & .846 & .841 & .841 & .843 & .842 \\
Phi: CI width / coverage & .25 / .91 & .21 / .93 & .18 / .98 & .16 / .99 & .15 / 1.00 & .14 / 1.00 \\
\bottomrule
\end{tabular}
\end{table}

\emph{Controlled mixtures (step 3).} Pooling the bin-conditional score distributions, we draw
200 positives and 400 negatives at mixture weights $q_1,q_2$, compute the two aggregate AUCs, and
solve $A_{\rm own}=(q_2A_1-q_1A_2)/(q_2-q_1)$. Over 200 repetitions the bias is below $0.015$ in all
settings; RMSE is $0.010$--$0.021$ at $(q_1,q_2)=(0.2,0.8)$, $0.034$--$0.047$ at $(0.4,0.6)$ and
$0.075$--$0.090$ at $(0.45,0.55)$ for both heads and families. The utility head's inherited bin is
small (25 and 57 candidates), so its mixtures resample those candidates heavily.

\section{Preregistered embedded-ECG stress test}
\label{app:ecg}

\paragraph{Purpose.}
This study was designed after the original engine results as an external stress test, not as an exploratory search for another positive domain. The benchmark, critic, model revisions, pair builders, scalar grids, DPO recipe, primary endpoint, equivalence margin, and 30-run matrix were frozen before confirmatory DPO. The optional INCART OOD phase was disabled before the primary freeze and was not run. The freeze is an internal commitment: protocol, manifest and predictions are SHA-256 hashed in the project repository (freeze commit \texttt{f0b5895}) before any confirmatory DPO run, but were not deposited with an external registry, so their timing rests on the repository history rather than on a third-party timestamp.

\paragraph{Data and split.}
The primary signals are MIT-BIH Arrhythmia Database v1.0.0 (DOI 10.13026/C2F305), with baseline-wander, muscle-artifact and electrode-motion recordings from MIT-BIH Noise Stress Test Database v1.0.0. Records 201 and 202 are grouped as one subject. Subjects are split before window construction into 27 train, 8 validation and 12 test subjects, yielding 1,000/300/500 fixed 30-s requirements. Beat annotations are used only for eligibility and the biomedical oracle and do not enter the prompt features.

\paragraph{Pipeline DSL and target.}
Each generator emits a finite causal ECG pipeline with whitelisted high-pass and low-pass settings, filter family, notch setting, downsampling, derivative flag, integration window, adaptive threshold multiplier, refractory period and numeric format. The deployment label is
\[
F=\text{valid/stable}\wedge[\mathrm{cycles}\le C_r]\wedge[\mathrm{RAM}\le R_r]\wedge[\mathrm{latency}\le L_r].
\]
The biomedical label is
\[
M=[F1_{\rm QRS}\ge0.80]\wedge[\mathrm{MAE}_{R}\le65\text{ ms}],
\]
and the full target is $y=F\wedge M$. Benchmark difficulty was resolved from 131,072 Sobol configurations before generator experiments: $P(F)=0.5848$, $P(M\mid F)=0.2408$, with semantic counts 54,421 $F=0$, 58,192 $F=1,M=0$, and 18,459 $F=1,M=1$.

\paragraph{Fixed critic.}
One two-head MLP critic is trained independently of the generator families and then frozen. Its held-out test AUC is 1.000 for feasibility and 0.930 for utility. The same critic scores Llama, Gemma and Phi candidate pools.

\paragraph{Models and generation.}
The frozen generator revisions are Llama-3.2-3B-Instruct (revision \texttt{0cb88a4...}), Gemma-3-4B-IT (\texttt{093f9f3...}), and Phi-4-mini-instruct (\texttt{cfbefac...}). Each requirement has $K=16$ samples at temperature 0.8 and top-$p=0.95$. The first prompt implementation was rejected before the preregistration freeze because it produced no usable Gemma outputs and almost no schema-valid Llama outputs; no valid biomedical outcomes from that attempt were available for scientific tuning. A shared prompt-v2 was accepted by a parse-only train pilot under a criterion fixed before inspecting those outputs, then hashed and frozen. The Phi checkpoint required disabling incompatible remote model code before any model weights or completions were produced; all scientific settings remained unchanged. These engineering deviations are retained in the package.

\paragraph{Selection and pair construction.}
The scalar grid is $\{.001,.003,.01,.03,.1,.3,1,3,10\}$. Validation selection gives $w_{\rm sel}=.003,.001,.03$ for Llama, Gemma and Phi. A critic-only normalized maximin of chosen/rejected feasibility and utility contrasts on the frozen train pools (each contrast divided by its maximum over the grid; the gate-free criterion of Appendix~\ref{app:pair_audit}) gives $w_{\rm pair}=1,.1,.3$. Each rule constructs 4,000 deterministic top-4-versus-bottom-4 pairs. Llama additionally includes a unit-sum arm. Exact pair overlap with the hard gate is 92.3\%/25.6\% for Llama inference/pair weights, 99.5\%/71.1\% for Gemma, and 79.6\%/31.0\% for Phi. Llama pair-weight and unit-sum pairs are exactly identical.

\paragraph{DPO and evaluation.}
All runs use sigmoid DPO with $\beta=.1$, two epochs, learning rate $10^{-5}$, LoRA rank 16 ($\alpha=32$, dropout .05), batch 2 with accumulation 8, and maximum length 768. The matrix contains 12 Llama runs and 9 each for Gemma and Phi, three seeds per rule. All 30 runs completed and all 30 aligned test pools contain 500 requirements by 16 candidates. Evaluation uses one common hard-feasibility-then-utility selector, not the training rule. The primary endpoint is Answer@16; the preregistered gate-equivalence margin is $\pm.04$.

\paragraph{Per-seed endpoints.}
Table~\ref{tab:ecg_seeds} lists all 30 runs. No seed was removed or repeated. Typical
invalid outputs include numeric values in the notch field, quoted numeric notch
values, unquoted \texttt{Q15}, and refractory values outside the finite schema.

\begin{table}[h]
\centering
\caption{All ECG DPO runs: Answer@16 (parse rate) per training seed under the common
evaluator. Llama pair-weight and unit-sum were trained on identical pair sets.}
\label{tab:ecg_seeds}
\scriptsize
\begin{tabular}{llccc}
\toprule
family & rule & seed 0 & seed 1 & seed 2 \\
\midrule
Llama & inference & .512 (1.00) & .518 (1.00) & .470 (1.00) \\
Llama & gate & .512 (.47) & .520 (1.00) & .488 (1.00) \\
Llama & pair & .000 (.00) & .000 (.00) & .512 (1.00) \\
Llama & unit sum & .000 (.00) & .002 (.00) & .512 (1.00) \\
Gemma & inference & .008 (1.00) & .000 (.00) & .006 (.96) \\
Gemma & gate & .006 (.97) & .000 (.00) & .006 (1.00) \\
Gemma & pair & .000 (.00) & .124 (1.00) & .244 (1.00) \\
Phi & inference & .718 (.75) & .724 (.98) & .714 (.79) \\
Phi & gate & .722 (.98) & .698 (1.00) & .724 (.99) \\
Phi & pair & .512 (.92) & .512 (.82) & .544 (.89) \\
\bottomrule
\end{tabular}
\end{table}

\paragraph{Sensitivity to the number of test subjects (post-hoc).}
The frozen bootstrap resamples only 12 test subjects. A leave-one-subject-out
jackknife with $t_{11}$ intervals gives $[-.386,-.273]$, $[+.043,+.194]$ and
$[-.264,-.128]$ for pair minus inference (Llama, Gemma, Phi), $[-.390,-.283]$,
$[+.043,+.194]$ and $[-.253,-.131]$ for pair minus gate, and $[-.003,+.016]$,
$[-.002,+.001]$ and $[-.018,+.010]$ for gate minus inference. Dropping any single
subject moves no estimate by more than $0.024$. Per-seed contrasts are the informative
caveat: pair minus gate is $-.21,-.19,-.18$ for Phi, but $-.51,-.52,+.02$ for Llama
and $-.01,+.12,+.24$ for Gemma, so only the Phi sign is stable across training seeds.

\paragraph{Output diversity and a requirement-blind program (post-hoc).}
One pipeline (IIR4 filter, 0.5--25\,Hz band, no notch, no downsampling, derivative on,
140\,ms integration, threshold multiplier 0.35, 220\,ms refractory, Q15 arithmetic)
satisfies the full target on $51.2\%$ of test requirements regardless of the request.
It is already the most frequent Llama base output ($38.5\%$ of train samples). It
accounts for $78$--$100\%$ of parsed outputs in six of the eight Llama arms that parse,
and for $97.1$--$100\%$ of parsed outputs in all three Phi pair-weight seeds, versus
$0$--$8\%$ in Phi gate and inference arms, which retain 79--113 distinct pipelines per
500$\times$16 pool. Gemma never emits it. Endpoints near $0.512$ therefore indicate a
requirement-blind policy, not partial success.

\paragraph{Gate-referenced versus gate-free contrast criteria (post-hoc).}
Rebuilding the pairs of the frozen builder on the frozen train pools for every grid
weight (which reproduces the frozen gate pair sets exactly), the critic contrasts at
the gate coincide with those of the smallest weights: the gate's feasibility contrast
is matched within $1.2\%$ at $w\le0.01$ in every family, while its utility contrast
equals the sweep maximum within $0.2\%$. The gate-free maximin therefore moves away from the gate toward
feasibility. Selecting instead the grid weight nearest the gate in gate-normalized
critic-contrast space gives $w=0.01$ (Llama), $0.003$ (Gemma) and $0.003$ (Phi), with
exact pair overlap with the gate of $87.3\%$, $99.0\%$ and $94.7\%$. At $w_{\rm pair}$,
the oracle utility contrast falls from $.524$ to $.461$ (Llama) and from $.303$ to
$.259$ (Phi) relative to the gate, while the oracle feasibility contrast is unchanged
or slightly higher. No policy was trained with the gate-referenced weights; these
numbers locate the criteria, they are not training outcomes.

\paragraph{Interpretation.}
The study supports transfer of the audit and selection separations and the
pair-equivalence reading of the training interface. It rejects the gate-free contrast
criterion as a transferable recipe; in its one informative family (Phi) the failure
has the engine's feasibility-dominated signature and ends in a requirement-blind
program. It does not establish that gate-referenced contrast matching would succeed
outside the near-identical-pair regime.

\section{Audit, selection and reranker analyses: proofs and CPU-only results}
\label{app:v23}
All analyses in this appendix are post hoc, CPU-only, and use the frozen ECG pools, the
frozen critic and the frozen common evaluator; scripts are \texttt{v23\_*.py} and \texttt{v24\_*.py} in the
supplement. No policy was trained for this appendix.

\subsection{Proof of Proposition~\ref{prop:audit_cost}}
\emph{(i), upper bound.} Let $N\sim\mathrm{Bin}(m,1-q)$ be the number of revealed own
failures and $\mu=m(1-q)$. Conditional on $N=k$, the revealed $h$ values are i.i.d.\ in
$[0,1]$ with mean $A_{\rm own}$, so by Hoeffding's inequality
\citep{hoeffding1963probability} $\Pr[|\hat A-A_{\rm own}|>\varepsilon\mid N=k]\le
2e^{-2k\varepsilon^2}$. The multiplicative Chernoff bound gives $\Pr[N<\mu/2]\le
e^{-\mu/8}\le\delta/2$ when $\mu\ge8\log(2/\delta)$, and on $N\ge\mu/2$ the conditional
failure probability is at most $2e^{-\mu\varepsilon^2}\le\delta/2$ when
$\mu\ge\log(4/\delta)/\varepsilon^2$. A union bound completes the claim.
\emph{(i), lower bound.} Take two instances that share the positives, $q$, and the
inherited bin, but whose own bins have $A_{\rm own}=0$ and $A_{\rm own}=1$. On the event
$N=0$, which has probability $q^m$ under both, the labelled sample has the same law, so
any estimator has the same output distribution; since the targets differ by
$1>2\varepsilon$, it errs by more than $\varepsilon$ on at least one instance with
probability at least $q^m/2$. Requiring this to be at most $\delta$ gives
$m\ge\log(1/(2\delta))/\log(1/q)\ge q\log(1/(2\delta))/(1-q)$, using
$\log(1/q)\le(1-q)/q$.
\emph{(i), $\varepsilon$-dependent lower bound.} Take two instances that share the
positives, $q$ and the inherited bin, and whose own-bin negatives all score either below
every positive ($h=1$) or above every positive ($h=0$), with
$\Pr[h=1]=\tfrac12+\tfrac32\varepsilon$ in one instance and
$\tfrac12-\tfrac32\varepsilon$ in the other. For $\varepsilon\le\tfrac14$ these are
valid Bernoulli laws and the two values of $A_{\rm own}$ differ by $3\varepsilon>2\varepsilon$.
A labelled negative reveals its bin, which has the same law under both, and, if it is an
own failure, one Bernoulli draw. Hence the KL divergence between the laws of an
$m$-sample is
$m(1-q)\,\mathrm{KL}(\mathrm{Ber}(\tfrac12+\tfrac32\varepsilon)\,\|\,
\mathrm{Ber}(\tfrac12-\tfrac32\varepsilon))$. A direct one-variable bound gives
$\mathrm{KL}\le24\varepsilon^2$ on $0<\varepsilon\le\tfrac14$ (equivalently, with
$x=3\varepsilon\le3/4$, $x\log((1+x)/(1-x))\le(8/3)x^2$). Because the targets are
more than $2\varepsilon$ apart, an estimate cannot be within $\varepsilon$ of both.
Bretagnolle--Huber \citep{tsybakov2009introduction} therefore implies that the larger
of the two probabilities of error greater than $\varepsilon$ is at least
$\tfrac14\exp(-24m(1-q)\varepsilon^2)$. Requiring it to be at most
$\delta<\tfrac14$ gives
$m(1-q)\ge\log(1/(4\delta))/(24\varepsilon^2)$. Together with the upper bound, the
passive budget is $\Theta(\log(1/\delta)/(\varepsilon^2(1-q)))$ up to universal
constants.
\emph{(ii).} From $A_i=(1-q_i)A_{\rm own}+q_iA_{\rm cross}$ for $i=1,2$,
$A_{\rm own}=(q_2A_1-q_1A_2)/(q_2-q_1)$; substituting estimates with errors
$e_i$, $|e_i|\le\varepsilon_i$, changes the right-hand side by
$(q_2e_1-q_1e_2)/(q_2-q_1)$. \qed

\subsection{Proof of Proposition~\ref{prop:ambiguity_K}}
With continuous coordinates there are no ties almost surely. Order candidates by
decreasing $s_F$. Candidate $i$ in this order is dominated iff some earlier candidate has
larger $s_M$, so it is Pareto iff its $s_M$ is the maximum of the first $i$ values. By
independence of $s_F$ and $s_M$, the $s_M$ sequence in this order is i.i.d.\ continuous,
and its record indicators are independent Bernoulli$(1/i)$ (R\'enyi's record theorem;
\citealt{bentley1978maxima}). Summing gives $\mathbb E|\operatorname{Par}|=H_K$;
$|\operatorname{Par}|=1$ iff the first candidate holds the overall maximum, probability
$1/K$. Under comonotone heads the top-$s_F$ candidate also has top $s_M$ and dominates
all others. A Monte-Carlo check with $20\,000$ pools per $K\in\{2,4,8,16,64\}$ matches
$H_K$ and $1/K$ to within $0.006$ and $0.004$. \qed

\subsection{Proof of Proposition~\ref{prop:pair_stability}(iv)}
Let $\delta_t=\|\theta^P_t-\theta^Q_t\|$ and $\Delta=\sup_\theta\|\nabla L_P(\theta)-\nabla
L_Q(\theta)\|\le2G\,d_{\rm TV}(P,Q)$ by part (iii). Then
$\delta_{t+1}\le\|(\theta^P_t-\eta\nabla L_P(\theta^P_t))-(\theta^Q_t-\eta\nabla L_P(\theta^Q_t))\|
+\eta\Delta$. For convex $L$-smooth $L_P$ and $\eta\le2/L$ the gradient step is
non-expansive \citep{hardt2016train}, so $\delta_{t+1}\le\delta_t+\eta\Delta$ and
$\delta_T\le\eta T\Delta$. With smoothness alone the step is $(1+\eta L)$-Lipschitz,
giving $\delta_T\le\Delta\sum_{t<T}\eta(1+\eta L)^t=\Delta((1+\eta L)^T-1)/L$. Any
evaluation functional that is $\Lambda$-Lipschitz in $\theta$ then differs by at most
$\Lambda\delta_T$. The DPO objective is convex for log-linear policies; for LoRA
fine-tuning neither assumption is guaranteed, which is why the main text treats overlap
as a predictor, not a guarantee, for endpoints. \qed

\subsection{Selection ambiguity versus pool size}
For each requirement we draw $K$ of the $16$ stored candidates ($20$ draws for $K<16$),
keep parsed candidates, collapse identical score pairs, and compute the Pareto set. $H_n$
and $1/n$ are the independence values of Proposition~\ref{prop:ambiguity_K} at the
observed number $n$ of distinct points; the permutation null shuffles $s_M$ within the
pool. The range is the per-pool outer range of Answer@$K$ over audit-equivalent
composites, averaged over requirements.

\begin{table}[h]
\centering
\caption{Selection ambiguity on ECG test pools as a function of $K$ (post hoc).}
\label{tab:v23_ambiguity}
\small
\setlength{\tabcolsep}{3.5pt}
\begin{tabular}{lcccccccc}
\toprule
pools & $K$ & distinct $n$ & $|\operatorname{Par}|$ & $H_n$ & permuted & $\Pr[|\operatorname{Par}|{=}1]$ & $1/n$ & Answer@$K$ range \\
\midrule
Llama base & 2 & 1.79 & 1.25 & 1.40 & 1.39 & 0.75 & 0.60 & 0.058 \\
 & 4 & 3.09 & 1.57 & 1.83 & 1.81 & 0.52 & 0.35 & 0.107 \\
 & 8 & 5.10 & 1.88 & 2.27 & 2.25 & 0.39 & 0.21 & 0.153 \\
 & 16 & 8.32 & 2.28 & 2.73 & 2.68 & 0.28 & 0.13 & 0.212 \\
Phi base & 2 & 1.73 & 1.28 & 1.36 & 1.36 & 0.72 & 0.64 & 0.049 \\
 & 4 & 2.79 & 1.60 & 1.74 & 1.73 & 0.49 & 0.40 & 0.084 \\
 & 8 & 4.17 & 1.89 & 2.09 & 2.09 & 0.36 & 0.26 & 0.112 \\
 & 16 & 6.31 & 2.18 & 2.46 & 2.44 & 0.28 & 0.17 & 0.140 \\
Phi gate (3 seeds) & 2 & 1.88 & 1.31 & 1.44 & 1.44 & 0.70 & 0.56 & 0.030 \\
 & 4 & 3.44 & 1.65 & 1.94 & 1.93 & 0.48 & 0.31 & 0.051 \\
 & 8 & 5.93 & 2.05 & 2.42 & 2.41 & 0.32 & 0.18 & 0.064 \\
 & 16 & 9.53 & 2.49 & 2.87 & 2.82 & 0.22 & 0.11 & 0.069 \\
\bottomrule
\end{tabular}
\end{table}

The permutation null reproduces $H_n$ within $0.05$ in every row, confirming the
proposition on real score marginals; the observed Pareto sets are smaller because
$s_F$ and $s_M$ are positively dependent within pools, and they still grow roughly
logarithmically in $K$.

\subsection{Audit decomposition on post-DPO pools}
Table~\ref{tab:v23_postdpo} lists every trained ECG arm with at least one evaluable pool
(a pool is evaluable when its parsed candidates contain both oracle classes). ``Own
support'' is the share of evaluable pools whose own-failure bin ($F=0$) is non-empty;
``erased'' is the share with $q\ge\max(A,1-A)$, which includes every pool with $q=1$.

\begin{table}[h]
\centering
\caption{Deployability-head audit on base and post-DPO ECG test pools (post hoc).}
\label{tab:v23_postdpo}
\small
\begin{tabular}{lccccc}
\toprule
pool & evaluable & own support & erased & macro $A$ & macro $q$ \\
\midrule
llama base & 292 & 0.87 & 0.37 & 0.751 & 0.611 \\
llama hard gate s1 & 85 & 0.00 & 1.00 & 0.671 & 1.000 \\
llama hard gate s2 & 110 & 0.00 & 1.00 & 0.604 & 1.000 \\
llama inference weight s1 & 43 & 0.00 & 1.00 & 0.605 & 1.000 \\
llama inference weight s2 & 123 & 0.00 & 1.00 & 0.628 & 1.000 \\
gemma pair weight s1 & 19 & 0.00 & 1.00 & 0.211 & 1.000 \\
gemma pair weight s2 & 2 & 0.00 & 1.00 & 0.000 & 1.000 \\
phi base & 283 & 0.64 & 0.48 & 0.791 & 0.666 \\
phi hard gate s0 & 168 & 0.36 & 0.64 & 0.836 & 0.661 \\
phi hard gate s1 & 197 & 0.28 & 0.72 & 0.811 & 0.719 \\
phi hard gate s2 & 184 & 0.33 & 0.67 & 0.817 & 0.681 \\
phi inference weight s0 & 158 & 0.36 & 0.64 & 0.800 & 0.649 \\
phi inference weight s1 & 163 & 0.37 & 0.64 & 0.754 & 0.648 \\
phi inference weight s2 & 169 & 0.34 & 0.66 & 0.730 & 0.673 \\
phi pair weight s1 & 2 & 0.50 & 0.50 & 0.500 & 0.500 \\
phi pair weight s2 & 29 & 0.21 & 0.79 & 0.759 & 0.798 \\
\bottomrule
\end{tabular}
\end{table}

\emph{Designed re-injection.} For the $12$ Llama and Phi gate/inference runs we draw
$200$ positives from the trained pool and $400$ negatives mixing the trained pool's
inherited bin with base-policy own failures of the same family at $q_1=0.2$ and
$q_2=0.8$, and invert the $2\times2$ system ($200$ repetitions). Against the direct
pooled value $A_{\rm own}=1.000$ the estimates average 0.998--1.003 with RMSE
0.016--0.020. Because the deployability head is near-perfect, this checks the
procedure rather than detecting degradation.

\subsection{Label budget as a function of $q$}
Utility head, pooled base test candidates; positives $y=1$, own bin $M=0$, inherited bin
$F=0\wedge M=1$. Each of $2\,000$ repetitions labels $m$ negatives from a mixture with
weight $q$ and averages $h$ over the revealed own failures.

\begin{table}[h]
\centering
\caption{Passive audit cost (post hoc). The scaled error $\sqrt{m(1-q)}\cdot$RMSE is
constant, as Proposition~\ref{prop:audit_cost} predicts, until own failures stop
appearing.}
\label{tab:v23_cost}
\small
\begin{tabular}{lcccccc}
\toprule
 & & \multicolumn{3}{c}{$m=100$} & \multicolumn{2}{c}{$m=400$} \\
\cmidrule(lr){3-5}\cmidrule(lr){6-7}
family & $q$ & RMSE & scaled & no own failure & RMSE & scaled \\
\midrule
Llama & 0.00 & 0.021 & 0.20 & 0.000 & 0.010 & 0.21 \\
 & 0.50 & 0.029 & 0.20 & 0.000 & 0.014 & 0.20 \\
 & 0.80 & 0.046 & 0.21 & 0.000 & 0.023 & 0.21 \\
 & 0.90 & 0.067 & 0.21 & 0.000 & 0.033 & 0.21 \\
 & 0.95 & 0.102 & 0.23 & 0.006 & 0.047 & 0.21 \\
 & 0.99 & 0.174 & 0.17 & 0.351 & 0.120 & 0.24 \\
Phi & 0.00 & 0.023 & 0.23 & 0.000 & 0.011 & 0.23 \\
 & 0.50 & 0.033 & 0.23 & 0.000 & 0.016 & 0.23 \\
 & 0.80 & 0.053 & 0.24 & 0.000 & 0.025 & 0.22 \\
 & 0.90 & 0.076 & 0.24 & 0.000 & 0.036 & 0.23 \\
 & 0.95 & 0.106 & 0.24 & 0.005 & 0.053 & 0.24 \\
 & 0.99 & 0.205 & 0.20 & 0.359 & 0.136 & 0.27 \\
\bottomrule
\end{tabular}
\end{table}

\subsection{Seed-and-subject intervals for the ECG contrasts}
The frozen intervals of Table~\ref{tab:ecg_train} condition on the three training
seeds. Table~\ref{tab:v23_seed} resamples, in each of $5\,000$ replicates, the three
seeds of each rule independently and the $12$ test subjects, and recomputes the mean
Answer@16 difference under the frozen common evaluator.

\begin{table}[h]
\centering
\caption{Two-level (seed and subject) bootstrap intervals (post hoc).}
\label{tab:v23_seed}
\small
\begin{tabular}{llcc}
\toprule
family & contrast & estimate & 95\% interval \\
\midrule
Llama & gate-inference & $+0.007$ & $[-0.026,+0.045]$ \\
Llama & pair-gate & $-0.336$ & $[-0.557,-0.005]$ \\
Llama & pair-inference & $-0.329$ & $[-0.551,+0.000]$ \\
Gemma & gate-inference & $-0.001$ & $[-0.008,+0.006]$ \\
Gemma & pair-gate & $+0.119$ & $[-0.002,+0.266]$ \\
Gemma & pair-inference & $+0.118$ & $[-0.001,+0.266]$ \\
Phi & gate-inference & $-0.004$ & $[-0.026,+0.015]$ \\
Phi & pair-gate & $-0.192$ & $[-0.254,-0.141]$ \\
Phi & pair-inference & $-0.196$ & $[-0.263,-0.143]$ \\
\bottomrule
\end{tabular}
\end{table}

\subsection{Prediction-registered convex DPO reranker}
\label{app:v23_reranker}
\emph{Design, frozen before running.} The design and four predictions were written to
\texttt{v23b\_PREDICTIONS.md} (SHA-256 \texttt{d6863c3a8fb1ff40\ldots}) before any training run; this
study is post hoc relative to the ECG preregistration. For Llama and Phi (Gemma excluded a
priori for lack of supply) a reranker $\pi_\theta(i\mid r)\propto\exp(\theta^\top\phi(r,i))$ over the
parsed candidates of each frozen base pool uses DSL one-hot features and their products
with standardized requirement features ($588$ dimensions); the reference is uniform
($\theta_0=0$), so the DPO log-ratio of a same-pool pair is $\theta^\top(\phi_c-\phi_r)$ and the
objective is convex. Sigmoid DPO with $\beta=0.1$, minibatch $16$, two epochs over the rule's
$4\,000$ pairs, ten shuffle seeds; the learning rate was chosen once per family from
$\{1,3,10,30\}$ by the gate-trained reranker's validation success (seed 0) and frozen
($30$ in both, the grid edge). Pairs are rebuilt with the frozen builder and reproduce the
frozen gate, $w_{\rm sel}$ and $w_{\rm pair}$ files exactly. The endpoint is expected per-draw
target success on the test pools; intervals resample seeds and the $12$ test subjects.
Uniform reranking scores $0.360$ (Llama) and $0.443$ (Phi).

\emph{Predictions and outcomes.} P1 (gate-referenced weight within $\pm0.04$ of the gate in
both families): held. P2 (Phi's gate-free $w_{\rm pair}$ not within $\pm0.04$): held, with a small
effect ($-0.016$). P3 ($|$gap$|$ negatively rank-correlated with overlap): held, Spearman
$-0.60$ (Llama) and $-0.93$ (Phi). P4 (full-batch GD distances respect
Proposition~\ref{prop:pair_stability}(iv)): held; distances are monotone in $1-$overlap
(Spearman $1.0$) and at most $1.1\%$ of the bound, which is therefore valid but loose.
\emph{Post hoc, after seeing results:} the test top-1 choice is the requirement-blind
program for $25\%$ (gate), $27\%$ (gate-referenced) and $67\%$ ($w_{\rm pair}$) of Phi
requirements, and $48\%$, $52\%$ and $86\%$ for Llama.

\begin{table}[h]
\centering
\caption{Convex DPO reranker on frozen ECG candidate pools, ten seeds per rule
(prediction-registered, CPU-only). ``dist'' is the full-batch GD parameter distance to the
gate after $200$ steps.}
\label{tab:v23_reranker}
\scriptsize
\setlength{\tabcolsep}{3pt}
\begin{tabular}{lllccccc}
\toprule
family & rule & role & overlap & per-draw & $-$ gate & 95\% (seed+subject) & dist \\
\midrule
Llama & gate &  & 1.000 & 0.512 & --- & --- & 0.00 \\
 & $w=0.001$ &  & 0.938 & 0.509 & $-0.003$ & $[-0.009,+0.001]$ & 0.08 \\
 & $w=0.003$ & $w_{\rm sel}$ & 0.923 & 0.510 & $-0.002$ & $[-0.007,+0.003]$ & 0.08 \\
 & $w=0.01$ & gate-referenced & 0.873 & 0.516 & $+0.004$ & $[-0.005,+0.015]$ & 0.13 \\
 & $w=0.03$ &  & 0.762 & 0.523 & $+0.011$ & $[-0.010,+0.036]$ & 0.36 \\
 & $w=0.1$ &  & 0.559 & 0.511 & $-0.001$ & $[-0.021,+0.025]$ & 0.92 \\
 & $w=0.3$ &  & 0.366 & 0.510 & $-0.002$ & $[-0.028,+0.026]$ & 1.81 \\
 & $w=1$ & $w_{\rm pair}$ (gate-free); unit sum & 0.256 & 0.506 & $-0.006$ & $[-0.034,+0.025]$ & 2.52 \\
 & $w=3$ &  & 0.183 & 0.482 & $-0.030$ & $[-0.056,-0.002]$ & 2.75 \\
 & $w=10$ &  & 0.160 & 0.460 & $-0.052$ & $[-0.081,-0.024]$ & 2.85 \\
Phi & gate &  & 1.000 & 0.537 & --- & --- & 0.00 \\
 & $w=0.001$ &  & 0.962 & 0.535 & $-0.002$ & $[-0.006,+0.001]$ & 0.12 \\
 & $w=0.003$ & gate-referenced & 0.947 & 0.535 & $-0.002$ & $[-0.007,+0.002]$ & 0.14 \\
 & $w=0.01$ &  & 0.903 & 0.537 & $+0.001$ & $[-0.004,+0.006]$ & 0.24 \\
 & $w=0.03$ & $w_{\rm sel}$ & 0.796 & 0.531 & $-0.006$ & $[-0.021,+0.007]$ & 0.69 \\
 & $w=0.1$ &  & 0.558 & 0.533 & $-0.004$ & $[-0.025,+0.017]$ & 1.79 \\
 & $w=0.3$ & $w_{\rm pair}$ (gate-free) & 0.310 & 0.521 & $-0.016$ & $[-0.044,+0.015]$ & 3.04 \\
 & $w=1$ & unit sum & 0.165 & 0.517 & $-0.020$ & $[-0.048,+0.013]$ & 4.06 \\
 & $w=3$ &  & 0.127 & 0.515 & $-0.022$ & $[-0.052,+0.013]$ & 4.21 \\
 & $w=10$ &  & 0.116 & 0.512 & $-0.025$ & $[-0.054,+0.011]$ & 4.24 \\
\bottomrule
\end{tabular}
\end{table}

\section{Richer audit information, realistic selection, and rule-level pair geometry}
\label{app:v24}
All results in this appendix are post hoc and CPU-only on frozen artifacts
(\texttt{v24\_trimming\_bounds.py}, \texttt{v24\_realistic\_selector.py},
\texttt{v24\_rule\_geometry.py}); no policy was trained.

\subsection{Identified set when every negative's score is observed}
Fix a pool with $n$ negatives, of which $n_{\rm own}=(1-q)n$ fail $F$, and let
$h_{(1)}\le\cdots\le h_{(n)}$ be the sorted values $h(x)$ over negatives (the share of the
pool's positives ranked above $x$, ties $\tfrac12$). Suppose the auditor observes all
scores and all outcome labels $y$, and $q$, but not which negatives fail $F$. Then
\[
A_{\rm own}\in\Big[\tfrac1{n_{\rm own}}\textstyle\sum_{i\le n_{\rm own}}h_{(i)},\;
\tfrac1{n_{\rm own}}\sum_{i>n-n_{\rm own}}h_{(i)}\Big],
\]
and the interval is sharp. \emph{Proof.} $A_{\rm own}$ is the mean of $h$ over the own
bin, a subset of size $n_{\rm own}$ of the negatives; every such subset is compatible
with the observations because bin membership is unobserved and $q$ fixes only its size,
and the extreme means are attained by the lowest and highest $n_{\rm own}$ values. In
the population version this is the trimming bound of \citet{horowitz1995identification}.
It equals $[0,1]$ iff at least $n_{\rm own}$ negatives have $h=0$ and at least
$n_{\rm own}$ have $h=1$. If $q$ is known only to an interval, the identified set is the
union over the admissible $n_{\rm own}$. \qed

Table~\ref{tab:v24_trim} compares three information structures on the ECG deployability
head: (I0) $(A,q)$ only, Eq.~\ref{eq:identified_interval}; (I1) plus every negative's
score; (I2) as I1 but with $q$ replaced by a $95\%$ Clopper--Pearson interval from $50$
labelled negatives. Every interval contains the directly measured $A_{\rm own}$ where the
own bin has support (I2: $\ge98\%$ of pools). Observing the scores barely helps, because
with $K=16$ most negatives are ranked below all positives or above them.

\begin{table}[h]
\centering
\caption{Mean interval width / share of evaluable pools with an uninformative interval
(width $\ge0.999$) under three information structures. Post-DPO Llama pools have no own-bin support; pair-weight
arms have almost no evaluable pools (Phi: $0$, $2$, $29$).}
\label{tab:v24_trim}
\scriptsize
\begin{tabular}{lccccc}
\toprule
pool & evaluable & own support & I0: $(A,q)$ & I1: $+$scores & I2: $+$scores, $\hat q$ \\
\midrule
Llama base & 292 & 0.87 & 0.528 / 0.37 & 0.486 / 0.34 & 0.496 / 0.36 \\
Phi base & 283 & 0.64 & 0.548 / 0.48 & 0.538 / 0.48 & 0.538 / 0.48 \\
Phi hard gate s0 & 168 & 0.36 & 0.662 / 0.64 & 0.662 / 0.64 & 0.668 / 0.65 \\
Phi hard gate s1 & 197 & 0.28 & 0.720 / 0.72 & 0.720 / 0.72 & 0.721 / 0.72 \\
Phi hard gate s2 & 184 & 0.33 & 0.682 / 0.67 & 0.682 / 0.67 & 0.684 / 0.67 \\
Phi inference weight s0 & 158 & 0.36 & 0.648 / 0.64 & 0.648 / 0.64 & 0.649 / 0.64 \\
Phi inference weight s1 & 163 & 0.37 & 0.646 / 0.64 & 0.646 / 0.64 & 0.649 / 0.64 \\
Phi inference weight s2 & 169 & 0.34 & 0.677 / 0.66 & 0.677 / 0.66 & 0.679 / 0.67 \\
\bottomrule
\end{tabular}
\end{table}

\subsection{Selection with a validation-tuned common weight}
Proposition~\ref{prop:interface_separation} bounds what an audit leaves open. A deployed
selector uses one common weight $w$ in $w\,s_F+s_M$ chosen on validation Answer@16.
Table~\ref{tab:v24_sel} reports, on the base test pools, the test value at the frozen
$w_{\rm sel}$ and the best and worst values over the frozen grid; all grid points share
every component AUC.

\begin{table}[h]
\centering
\caption{Validation-tuned common weight versus the test range over audit-equivalent
common weights (Answer@16, ECG base test pools).}
\label{tab:v24_sel}
\scriptsize
\begin{tabular}{lcccccc}
\toprule
family & $w_{\rm sel}$ & test at $w_{\rm sel}$ & best common & worst common & regret & hard gate \\
\midrule
Llama & 0.003 & 0.480 & 0.500 & 0.392 & 0.020 & 0.486 \\
Phi & 0.03 & 0.542 & 0.546 & 0.512 & 0.004 & 0.548 \\
\bottomrule
\end{tabular}
\end{table}

\subsection{Rule-level pair geometry for all trained scalar rules}
Table~\ref{tab:v24_geo} lists every trained scalar rule with its exact pair overlap with
the gate, the bottleneck oracle contrast $b(r)=\min\{\Delta F_r/\Delta F_{\rm gate},
\Delta M_r/\Delta M_{\rm gate}\}$ of Appendix~\ref{app:weight_geometry}, and its endpoint.
A family is \emph{informative} for training iff its hard-gate arm exceeds its reference
by more than the $\pm0.04$ margin (ECG: the single requirement-blind pipeline, $0.512$;
engine: the base policy's supply@16, $0.285$, an upper bound on its Answer@16); the rule does not use the gaps. It was fixed after the endpoints were known and is not part of the ECG preregistration. Without it, all three rules with overlap $\ge0.79$ match the gate within $0.007$ (Llama and Gemma inference weights against degenerate gates), and the next-highest overlap, Gemma's pair weight at $0.71$, does not ($+0.119$, with every arm far below the requirement-blind program). Among the seven
informative rules, the one with overlap $\ge0.79$ matches the gate; five of six below
miss by $\ge0.08$. All four with $b<0.5$ miss, but among low-overlap rules with $b\ge0.5$
one matches (engine $w=0.1$) and one misses (Phi pair, whose policies collapse onto the
requirement-blind program). Over the seven informative rules the Spearman correlation of
the signed gap is $0.54$ with $b$ and $-0.21$ with overlap. Neither statistic is
sufficient in the low-overlap regime; overlap near one is sufficient by
Proposition~\ref{prop:pair_stability}.

\begin{table}[h]
\centering
\caption{All eleven trained scalar rules (Answer@16 under the common evaluator; three
seeds). Engine values from Table~\ref{tab:train} and Appendix~\ref{app:weight_geometry};
ECG from Table~\ref{tab:ecg_train} and the frozen pair sets.}
\label{tab:v24_geo}
\scriptsize
\begin{tabular}{lcccccc}
\toprule
rule & overlap & $b(r)$ & endpoint & gate & gap & informative \\
\midrule
engine unit sum & 0.099 & 0.174 & 0.596 & 0.758 & $-0.162$ & yes \\
engine validation-tuned $w{=}$.005 & 0.210 & 0.262 & 0.571 & 0.758 & $-0.187$ & yes \\
engine contrast-matched $w{=}$.1 & 0.150 & 0.828 & 0.770 & 0.758 & $+0.012$ & yes \\
engine log-probability & 0.140 & 0.287 & 0.675 & 0.758 & $-0.083$ & yes \\
engine mass-only & 0.262 & 0.071 & 0.519 & 0.758 & $-0.239$ & yes \\
Llama inference $w{=}$0.003 & 0.923 & 0.969 & 0.500 & 0.507 & $-0.007$ & no \\
Llama pair $w{=}$1 & 0.256 & 0.879 & 0.171 & 0.507 & $-0.336$ & no \\
Gemma inference $w{=}$0.001 & 0.995 & 0.923 & 0.005 & 0.004 & $+0.001$ & no \\
Gemma pair $w{=}$0.1 & 0.711 & 1.000 & 0.123 & 0.004 & $+0.119$ & no \\
Phi inference $w{=}$0.03 & 0.796 & 0.980 & 0.719 & 0.715 & $+0.004$ & yes \\
Phi pair $w{=}$0.3 & 0.310 & 0.855 & 0.523 & 0.715 & $-0.192$ & yes \\
\bottomrule
\end{tabular}
\end{table}

\section{AUC-change decomposition and engine selection controls}
\label{app:moved}

\paragraph{Mixture-shift versus ordering-drift decomposition.}
Across two policies, the AUC change can also be decomposed into mixture shift and
conditional-ordering drift:
\begin{align}
A_t-A_s={}&(q_t-q_s)(A_{\rm cross}^s-A_{\rm own}^s) \label{eq:change}\\
&+(1-q_t)(A_{\rm own}^t-A_{\rm own}^s)
+q_t(A_{\rm cross}^t-A_{\rm cross}^s).\nonumber
\end{align}
The first term changes only composition; the last two change conditional ordering.
Both mechanisms can therefore coexist. The identity extends to any disjoint
failure-bin partition and requires no independence assumptions.

\paragraph{Rules that reduce the saturated component's influence recover selection.}
Holding the candidate pools and critic fixed, the raw unit sum reaches
$\eta=0.10\pm0.02$ on matched 1.5B pools, rank averaging $0.58\pm0.01$, a
validation-tuned feasibility weight of $0.005$ reaches $0.88\pm0.02$, the soft
log-probability rule reaches $0.89\pm0.00$, and threshold-then-mass reaches
$0.92\pm0.01$. At 7B, the raw sum remains poor ($0.05\pm0.02$) while
threshold-then-mass reaches $0.94\pm0.04$ (full sweep in
Appendix~\ref{app:selection_audit}).

The residual feasibility ordering is not useless (permuting it within the passing
set lowers raw-sum answer on gated 7B pools from $0.237$ to $0.209$, post-hoc); the
problem is excessive influence, not absence of cross-signal.

\paragraph{Controls.}
A holistic on-target classifier (held-out AUC $0.974$) avoids the component-AUC
cliff but captures only $\eta=0.49$--$0.60$, versus $0.90$--$0.94$ for the gated
two-head critic. This is not evidence that decomposition is universally superior:
the mass regressor contains ordinal distance-to-budget information. GSM8K shows the
opposite saturation geometry. Its format-head within-pool SD falls from $8.6$ to
$0.02$, so the tested composites effectively reduce to the correctness head and
agree within $0.01$ (Appendix~\ref{app:gsm}).

\section{Audit inversion and the refresh study: proofs, protocol and full results}
\label{app:refresh}

\subsection{Proof of Proposition~\ref{prop:inversion}}
\emph{(i)} Subtract Eq.~\ref{eq:mix} for $f$ and $g$ in the same pool (same $q$); the sign of
$(1-q)(-\Delta_{\rm own})+q\Delta_{\rm cross}$ is positive iff $q>\Delta_{\rm own}/(\Delta_{\rm own}+\Delta_{\rm cross})$.
\emph{(ii)} In a pool with candidate law $P$, $\operatorname{AUC}(s)=\Pr[s(X)>s(X')]+\tfrac12\Pr[s(X)=s(X')]$
for $X\sim P(\cdot\mid y=1)$, $X'\sim P(\cdot\mid y=0)$ independent. The likelihood ratio
$dP(\cdot\mid y=1)/dP(\cdot\mid y=0)$ is a strictly increasing function of $\eta_y$, and any
scorer that is a strictly increasing transform of it maximizes this probability
\citep[Proposition~1]{clemencon2008ranking}. \emph{(iii)} Since $y=F\wedge M$, $y=1$ implies
$F=\phi(x)=1$, so $\eta_y(x)=0$ whenever $\phi(x)=0$, while $\eta_y(x)>0$ almost surely under
$P(\cdot\mid y=1)$; hence every positive outranks every own failure and $a^{\rm own}(\eta_y)=1$.
On $\{\phi=1\}$ we have $y=M$ and $\eta_y=\Pr[M=1\mid x]$, which is the optimal ranker for the
cross subproblem by (ii) applied to that subpopulation, so $a^{\rm cross}(\eta_y)\ge\tfrac12$ (the
value of any constant scorer), with equality iff $\eta_y$ is a.s.\ constant there. The oracle
$\phi$ has $a^{\rm own}=1$ and $a^{\rm cross}=\tfrac12$; substituting in Eq.~\ref{eq:mix} gives
$A(\eta_y)-A(\phi)=q(a^{\rm cross}(\eta_y)-\tfrac12)$. $\square$

\emph{Remark (support).} Refresh data drawn from a policy whose negatives contain no own
failures carry no information about how a head orders own failures: any two heads that agree on
the refresh support fit it equally well, however they rank off-support $F$-failures. This is
Proposition~\ref{prop:identified_set} with $\pi_{\rm own}=0$, now read as a statement about
learning rather than auditing.

\subsection{Protocol}
Data are the frozen ECG pools of Appendix~\ref{app:ecg}: base-policy train pools (Llama, Phi),
base-policy test pools (Llama, Phi, and Gemma as an unseen generator), and the test pools of the
six post-DPO policies per family trained with gate or inference-weight pairs. Features are the
frozen critic's $55$ inputs (requirement descriptors and one-hot DSL fields). Head classes: an MLP
with the critic's architecture ($256$-$128$-$64$, Adam, batch $256$, up to $40$ epochs) and
histogram gradient boosting ($200$ trees, no early stopping). \emph{Label-mixture heads}
(P1) are trained from scratch on the base train pool with each candidate duplicated and labelled
$y$ with probability $\lambda$ and $F$ otherwise, for three seeds. \emph{Refresh heads} continue
training the specialist $S$ ($\lambda=0$): the MLP by $20$ epochs of \texttt{partial\_fit}, the
GBM by $100$ additional trees on the refresh data. Refresh data are all parsed candidates of the
post-DPO pools whose requirement belongs to a random half of the test subjects (seeds
$100$--$109$); $R_{\rm mix}$ adds an equal number of base-train candidates drawn uniformly. All
metrics use the complementary subject half. The monitor is the within-pool full-target AUC of
Section~\ref{sec:setup}, averaged over the six post-DPO policies; canonical $F$-AUC is computed over
all parsed candidates of the base pools; the gate selects the highest frozen utility score among
candidates with head probability $\ge0.5$ and otherwise the head's top candidate.

\subsection{Where the interface ingredients already appear}
\label{app:workflow_relevance}
The exact controller we analyze is a composition of interfaces, not a claim about one named product.
Table~\ref{tab:workflow_relevance} records the closest modern ingredients. Rubric systems expose
criterion-level judgments but ultimately aggregate them for policy optimization; policy-aware rubric
work explicitly measures saturation; and separate reward-model work refreshes models from fresh
policy-interaction data. At LLM scale, POW3R reports $17$--$26\%$ saturated and $20$--$33\%$
dead rubric criteria across its fresh-policy diagnostics, so criterion informativeness is visibly
policy-dependent even before any maintenance decision is made \citep{tyagi2026policyaware}. The
unsafe loop arises when an aggregate outcome statistic is reused as the maintenance signal for one
component head.

\begin{table}[h]
\centering
\caption{Contemporary systems instantiate the ingredients of the analyzed interface. We do not
claim that any cited system implements the exact audit-trigger-refresh controller end to end.}
\label{tab:workflow_relevance}
\scriptsize
\setlength{\tabcolsep}{4pt}
\begin{tabularx}{\linewidth}{lXXX}
\toprule
work & component-level structure & policy-dependent ingredient & interface relevance \\
\midrule
Rubrics as Rewards \citep{gunjal2026rubrics} & prompt-specific rubric criteria judged separately & on-policy rubric RL & criterion signals are aggregated into a scalar reward \\
Reward hacking in rubric RL \citep{mahmoud2026rubric} & criterion-level training-verifier judgments & exploitation grows during RL & aggregate proxy reward can diverge from broader reference evaluation \\
POW3R \citep{tyagi2026policyaware} & independently judged rubric criteria & criteria become saturated, dead or mixed under the current policy & shows criterion informativeness changes with policy state \\
UGDA / iterative RLHF \citep{sun2025ugda,bai2022training} & learned reward/preference model & fresh policy interactions enter later updates & supplies the on-policy refinement step used in our controller abstraction \\
\bottomrule
\end{tabularx}
\end{table}

\subsection{Registered predictions and outcomes}
The predictions file was hashed (SHA-256 \texttt{ac50e1a1\ldots}) before the confirmatory run and is
included in the supplement with the exploratory pilot, which used refitting from scratch rather
than continued training and one split seed. P1 (pure component head ranked last on the post-DPO
audit, all four settings): \textbf{held}. P2 (Phi monitor repaired by $\ge0.05$ in both classes):
\textbf{failed} ($+0.042$, $-0.012$). P3 (outcome refresh lowers off-policy $F$-AUC by $\ge0.05$,
all four): \textbf{held}. P4 (re-injection within $0.01$ of $S$, all four): \textbf{held}. P5 (P3 and
P4 on the unseen generator): \textbf{held}. P6 (more gate violations under $R_y$ than $S$, MLP, both
families): \textbf{held}.

\begin{table}[h]
\centering
\caption{P1: on-policy audit AUC of label-mixture heads $(1-\lambda)F+\lambda y$ (mean of three
seeds). The pure component head ($\lambda=0$) is lowest in every row labelled ``audit''.}
\label{tab:mix}
\scriptsize
\begin{tabular}{llccccc}
\toprule
setting & quantity & $\lambda=0$ & $0.25$ & $0.5$ & $0.75$ & $1$ \\
\midrule
Llama GBM & post-DPO audit & 0.625 & 0.686 & 0.674 & 0.721 & 0.728 \\
 & base audit & 0.855 & 0.897 & 0.902 & 0.905 & 0.917 \\
 & $F$-AUC (base) & 1.000 & 0.999 & 0.999 & 0.997 & 0.964 \\
\midrule
Llama MLP & post-DPO audit & 0.561 & 0.654 & 0.644 & 0.673 & 0.692 \\
 & base audit & 0.747 & 0.829 & 0.858 & 0.874 & 0.891 \\
 & $F$-AUC (base) & 1.000 & 0.999 & 0.998 & 0.997 & 0.936 \\
\midrule
Phi GBM & post-DPO audit & 0.477 & 0.696 & 0.714 & 0.696 & 0.773 \\
 & base audit & 0.809 & 0.859 & 0.874 & 0.875 & 0.881 \\
 & $F$-AUC (base) & 1.000 & 1.000 & 1.000 & 0.999 & 0.965 \\
\midrule
Phi MLP & post-DPO audit & 0.651 & 0.723 & 0.712 & 0.687 & 0.690 \\
 & base audit & 0.710 & 0.819 & 0.815 & 0.840 & 0.851 \\
 & $F$-AUC (base) & 1.000 & 0.998 & 0.998 & 0.996 & 0.941 \\
\bottomrule
\end{tabular}
\end{table}

\begin{table}[h]
\centering
\caption{Refresh minus deployed specialist, paired by subject split (mean; in parentheses the
number of the $10$ splits with a positive difference).}
\label{tab:refresh_diff}
\scriptsize
\setlength{\tabcolsep}{3pt}
\begin{tabular}{llccccc}
\toprule
head & setting & monitor AUC & $F$-AUC base & $F$-AUC unseen & gate $F$-violation & Answer@16 \\
\midrule
$R_y$ & Llama GBM & $-0.050$ (3) & $-0.433$ (0) & $-0.507$ (0) & $+0.055$ (10) & $-0.048$ (0) \\
$R_y$ & Llama MLP & $+0.105$ (9) & $-0.117$ (0) & $-0.291$ (0) & $+0.011$ (10) & $-0.020$ (2) \\
$R_y$ & Phi GBM & $-0.012$ (4) & $-0.481$ (0) & $-0.481$ (0) & $+0.040$ (10) & $-0.056$ (0) \\
$R_y$ & Phi MLP & $+0.042$ (8) & $-0.091$ (0) & $-0.364$ (0) & $+0.018$ (10) & $-0.030$ (0) \\
\midrule
$R_F$ & Llama GBM & $-0.120$ (1) & $-0.235$ (0) & $-0.224$ (0) & $+0.027$ (10) & $-0.017$ (0) \\
$R_F$ & Llama MLP & $+0.008$ (8) & $-0.000$ (0) & $-0.002$ (0) & $+0.001$ (2) & $-0.004$ (0) \\
$R_F$ & Phi GBM & $+0.111$ (10) & $-0.000$ (0) & $+0.000$ (6) & $+0.000$ (0) & $+0.000$ (0) \\
$R_F$ & Phi MLP & $+0.033$ (10) & $-0.006$ (0) & $-0.000$ (8) & $-0.001$ (0) & $-0.000$ (0) \\
\midrule
$R_{\rm mix}$ & Llama GBM & $+0.003$ (6) & $+0.000$ (0) & $-0.000$ (0) & $+0.000$ (0) & $+0.000$ (0) \\
$R_{\rm mix}$ & Llama MLP & $+0.006$ (9) & $+0.000$ (5) & $+0.000$ (9) & $+0.000$ (0) & $+0.000$ (0) \\
$R_{\rm mix}$ & Phi GBM & $+0.255$ (10) & $-0.000$ (0) & $-0.001$ (3) & $+0.000$ (0) & $+0.000$ (0) \\
$R_{\rm mix}$ & Phi MLP & $+0.030$ (10) & $+0.000$ (9) & $+0.000$ (7) & $-0.001$ (0) & $+0.000$ (0) \\
\bottomrule
\end{tabular}
\end{table}

\subsection{Closed-loop trigger analysis}
\label{app:controller}
This post-hoc analysis composes already frozen quantities; it trains no new head. For each of the
four family$\times$head-class settings, let the controller compare the pure specialist's base-policy
audit to its post-DPO audit (Table~\ref{tab:mix}) and choose $R_y$ iff the drop is at least $\tau$.
If it does not trigger, it leaves the deployed specialist $S$ unchanged. Table~\ref{tab:controller}
reports the resulting mean over the four primary settings. A reference-distribution component controller using the same threshold on off-policy $F$-AUC triggers no refresh because $S$ retains $F$-AUC $1.000$ in all four settings. The threshold sweep is illustrative rather than preregistered; its purpose is to test
whether the measurement error can actually induce the harmful action posited by the control-loop
framing.

\begin{table}[h]
\centering
\caption{Counterfactual controller on frozen ECG heads. ``Triggered'' is the number of the four
primary family$\times$class settings in which aggregate audit AUC falls by at least $\tau$ from
base to post-DPO. Triggered settings deploy $R_y$; others keep $S$. No-refresh/component-specific
control has $F$-AUC $1.000$, violation $.148$, Answer@16 $.530$.}
\label{tab:controller}
\small
\begin{tabular}{ccccc}
\toprule
$\tau$ & triggered & mean $F$-AUC & gate $F$-violation & Answer@16 \\
\midrule
.05 & 4/4 & .719 & .179 & .491 \\
.10 & 3/4 & .742 & .175 & .499 \\
.15 & 3/4 & .742 & .175 & .499 \\
.20 & 2/4 & .772 & .172 & .504 \\
.25 & 1/4 & .880 & .158 & .516 \\
\midrule
no refresh / reference-$F$ trigger & 0/4 & 1.000 & .148 & .530 \\
\bottomrule
\end{tabular}
\end{table}

\begin{figure}[h]
\centering
\includegraphics[width=.90\linewidth]{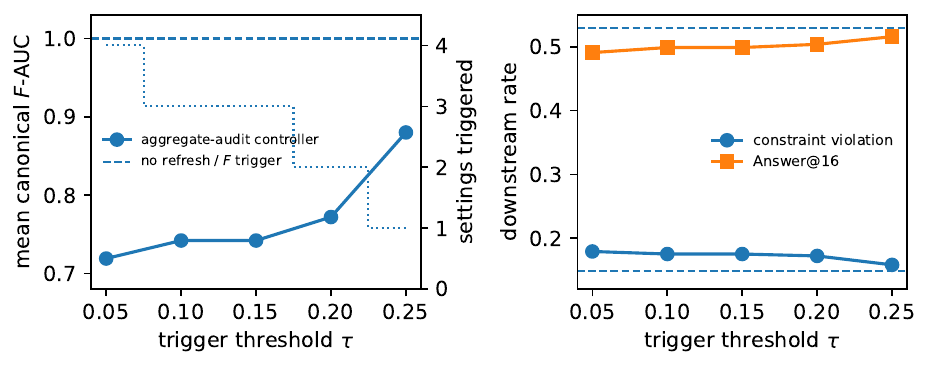}
\caption{\textbf{Frozen closed-loop threshold sweep.} Left: aggregate-audit triggers deploy the
already-trained outcome-refresh head and reduce canonical $F$-AUC; the dotted secondary axis is the
number of the four primary settings that trigger. Right: the same controller raises constraint
violations and lowers Answer@16. Dashed horizontal lines are the no-refresh/reference-$F$ controller.}
\label{fig:controller_sweep}
\end{figure}

\subsection{Natural multi-label semantic stress test: GoEmotions}
\label{app:goemotions}
We use the official GoEmotions development and test splits \citep{demszky2020goemotions}, with
$5{,}426$ and $5{,}427$ Reddit comments. Each comment can carry multiple human-annotated emotion
labels. The component $F$ is fixed throughout as absence of \emph{annoyance}; the development/test
sets contain $303/320$ annoyance-labelled own failures. We train Laplace-smoothed multinomial
Naive Bayes heads on the development split over a fixed bag-of-words vocabulary (terms occurring
at least twice; at most $10{,}000$). This small transparent model is intentional: the experiment tests
the audit geometry without conflating it with a large verifier's capacity.

\paragraph{Near-boundary control fixed in v31.}
With $M$ = absence of \emph{disapproval}, the test set has $q=.434$. The $F$ specialist has
$(A_{\rm own},A_{\rm cross},A_F)=(.683,.569,.679)$; the deliberately entangled
$s_{\rm leak}=z(s_F)+z(s_M)$ has $(.646,.631,.640)$ and predicted $q^*=.374$. At the natural
mixture the audit difference is only $.006$ and its 500-resample bootstrap interval
$[-.009,.019]$ spans zero. This is therefore a useful near-boundary sanity check rather than a
claim of naturally observed inversion.

\paragraph{Natural high-$q$ positive control.}
After the near-boundary result, an explicitly post-hoc exploratory scan kept the same $F$ specialist
and varied the second annotated criterion; the scan identified absence of \emph{gratitude} as a
high-support natural positive control. We then reran this pair under the original frozen
vocabulary/protocol. The test set contains $4{,}757$ target positives, $320$ own failures and $350$
cross failures, hence $q=.522$. The unchanged $F$ specialist has
$A_{\rm own}=.665$, $A_{\rm cross}=.238$, canonical $F$-AUC $.679$, and aggregate outcome AUC
$.442$. With $s_{\lambda}=z(s_F)+\lambda z(s_M)$ at $\lambda=1$, canonical $F$-AUC falls to
$.570$ and $A_{\rm own}$ to $.589$, while $A_{\rm cross}$ rises to $.765$. Proposition~\ref{prop:inversion}
predicts $q^*=.126$ (bootstrap $[.092,.153]$), so the naturally observed mixture lies well past the
crossing: the aggregate audit prefers the worse component verifier, $.681$ versus $.442$. A
500-resample bootstrap interval for the audit difference is $[.210,.270]$, and the canonical
$F$-AUC gap favoring the specialist is $[.089,.128]$. Because the second criterion was selected
post hoc, these intervals quantify stability of this fixed official test split; they are not a
confirmatory multiplicity-adjusted discovery claim.

\begin{table}[h]
\centering
\caption{Leakage-strength sweep for the natural annoyance/gratitude control. The specialist is the
$\lambda=0$ reference: canonical $F$-AUC $.679$ and aggregate audit $.442$ at observed $q=.522$.
Increasing cross-component leakage makes the component classifier worse while making the aggregate
audit better.}
\label{tab:goe_lambda}
\scriptsize
\begin{tabular}{ccccc}
\toprule
$\lambda$ & canonical $F$-AUC & $A_{\rm own}$ & $A_{\rm cross}$ & $q^*$ / observed audit \\
\midrule
.10 & .668 & .656 & .299 & .120 / .470 \\
.50 & .619 & .623 & .564 & .113 / .592 \\
1.00 & .570 & .589 & .765 & .126 / .681 \\
2.00 & .520 & .547 & .879 & .155 / .721 \\
4.00 & .483 & .512 & .919 & .183 / .725 \\
\bottomrule
\end{tabular}
\end{table}

\begin{figure}[h]
\centering
\begin{minipage}{.49\linewidth}\centering
\includegraphics[width=\linewidth]{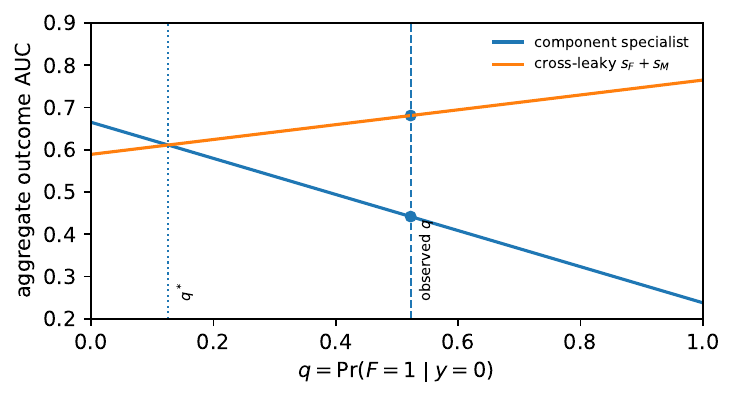}
\end{minipage}\hfill
\begin{minipage}{.49\linewidth}\centering
\includegraphics[width=\linewidth]{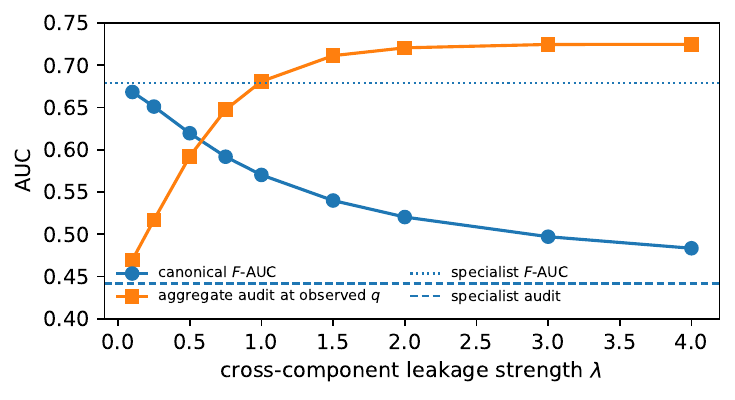}
\end{minipage}
\caption{\textbf{Natural semantic inversion and a severity sweep.} Left: for the post-hoc
annoyance/gratitude positive control, the observed $q=.522$ lies well past the analytic crossing.
Right: increasing cross-component leakage monotonically degrades canonical $F$-AUC while the
outcome audit improves over most of the sweep.}
\label{fig:goe_v32}
\end{figure}

\paragraph{Support depletion and targeted replay.}
We next retrain the correct $F$ classifier while retaining all $F=1$ development examples and a
nested number of annoyance-labelled failures. Canonical $F$-AUC rises from $.537$ with two own
failures to $.545$ (8), $.594$ (64), $.638$ (128), $.671$ (256), and $.679$ with all $303$ failures.
The original fixed 8+80 own-failure replay arm reaches $.615$. To distinguish targeted failure
support from generic extra data, we additionally repeat 100 deterministic draws. Adding 80 random
development examples to the 8-failure arm adds only $4.42\pm1.96$ own failures on average and
raises canonical $F$-AUC only to $.550\pm.004$. Adding 80 targeted own failures instead reaches
$.606\pm.008$. At the development-set prevalence, a uniform sample would require about $1{,}433$
examples in expectation to contain 80 own failures. The rescue is therefore about reconstructing
the missing semantic support, not merely increasing the update set.

\begin{figure}[h]
\centering
\includegraphics[width=.64\linewidth]{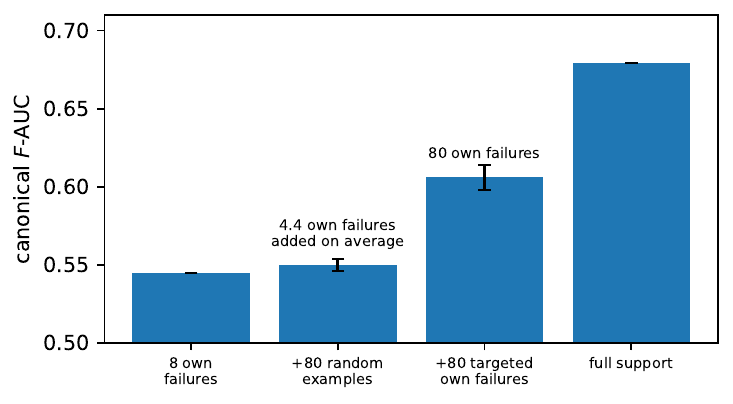}
\caption{\textbf{Failure-targeted replay is sample-efficient.} Starting from only eight own failures,
80 indiscriminate examples add few relevant failures and barely move canonical $F$-AUC; 80
explicit own failures recover substantially more component discrimination. Error bars show SD over
100 deterministic replay draws; the full-support bar uses all 303 development failures.}
\label{fig:goe_replay}
\end{figure}

The exact script, source blob SHAs and frozen v31/v32 JSON summaries are included in the supplement.
All GoEmotions analyses are post hoc; the natural-positive-control scan and replay comparison were
added after v31 and are labelled as such.

\subsection{Additional constructed semantic-language control: AG News}
\label{app:agnews}
This post-hoc stress test uses the 2,000-row AG News sample distributed in the OpenAI Cookbook
(blob SHA \texttt{f63d8be6...}) and the standard four topic labels of \citet{zhang2015character}.
We concatenate title and description and define $F=\mathbf 1\{\text{topic}\neq\text{World}\}$,
$M=\mathbf 1\{\text{topic}\neq\text{Sports}\}$, so $y=F\wedge M$ iff the topic is Business or
Sci/Tech. Thus World examples are the own-failure bin and Sports examples are the cross-failure
bin. The construction is deliberately simple: it supplies semantic natural-language components
with ground-truth labels while retaining the exact two-bin geometry used by the theory.

For each of 24 deterministic 65/35 hash splits, we train Laplace-smoothed multinomial Naive Bayes
heads over a fixed bag-of-words vocabulary (tokens occurring in at least two training documents;
maximum 7,000 terms). The $F$ specialist obtains $A_{\rm own}=0.951\pm0.010$,
$A_{\rm cross}=0.618\pm0.034$, and canonical $F$-AUC $0.949\pm0.011$. Holding the head fixed and changing only the negative-bin mixture
therefore moves the observed audit from $0.951$ at $q=0$ to $0.784$ at $q=.5$, $0.651$ at $q=.9$
and $0.625$ at $q=.98$. To instantiate Proposition~\ref{prop:inversion} as a controlled positive
control, we standardize the separately trained $F$ and $M$ scores on the training split and form
$s_{\rm leak}=0.5z(s_F)+z(s_M)$. This score is worse on both own-failure separability
($A_{\rm own}=0.886\pm0.016$) and canonical $F$-AUC ($0.607\pm0.025$), but nearly perfect on
the cross bin ($A_{\rm cross}=0.993\pm0.002$); the measured crossing is
$q^*=0.148\pm0.030$, and the aggregate audit prefers $s_{\rm leak}$ at $q=.9$ in every split.

For a support-only refresh test we keep the correct $F$ labels and vary only the number of World
failures retained from the training split, leaving all non-World examples available. The canonical
$F$-AUC follows the same dose-response as $A_{\rm own}$: full support gives
$0.949\pm0.011$, the 8-failure arm $0.898\pm0.016$, and 8+80 replay $0.938\pm0.012$. The retained
failure sets are nested within each split. Mean held-out $A_{\rm own}$ rises smoothly with own-failure
support: $0.888\pm0.015$ with 2 failures, $0.895\pm0.014$ with 4,
$0.904\pm0.014$ with 8, $0.917\pm0.014$ with 16, $0.927\pm0.011$ with 32,
$0.937\pm0.011$ with 64, $0.944\pm0.010$ with 128, and $0.949\pm0.010$ with 256,
versus $0.951\pm0.010$ under full support. The 8-failure arm is below full support in every split;
re-injecting 80 baseline World failures raises it to $0.941\pm0.011$ and improves on the collapsed
refresh in every split. These 24 deterministic splits are robustness checks, not independent
replications. Figure~\ref{fig:agnews_semantic} shows the audit inversion and the support-dose curve.
The exact script, source-file SHA and frozen summary JSON are included in the supplement; no result
in this subsection was preregistered.

\begin{figure}[h]
\centering
\includegraphics[width=0.92\linewidth]{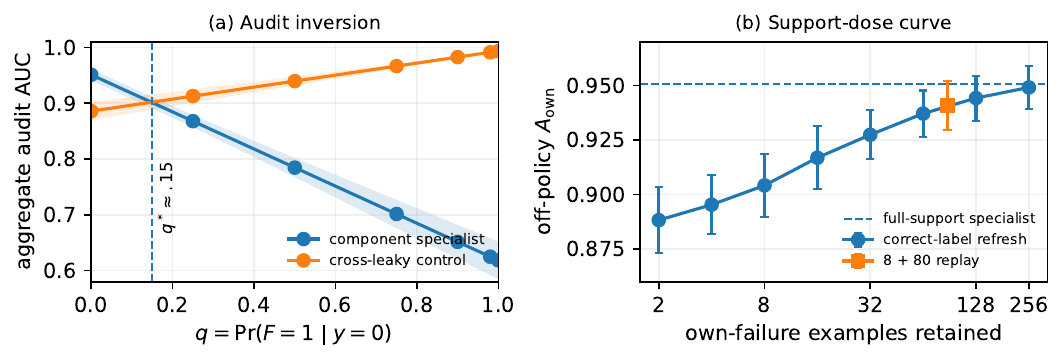}
\caption{\textbf{Controlled natural-language stress test.} Left: the same fixed component verifier
looks progressively worse as its own-failure bin vanishes, while a verifier contaminated with the
other component overtakes it near the predicted $q^*$. Right: with correct component labels fixed,
held-out $A_{\rm own}$ degrades smoothly as own-failure support is removed; the square marks
the 8-failure refresh plus 80 re-injected baseline failures. Bands/error bars show one standard
deviation across 24 deterministic splits.}
\label{fig:agnews_semantic}
\end{figure}

\section{Selection interface: audit equivalence does not imply equal influence}
\label{sec:select}

The audit in Section~\ref{sec:law} uses only the ordering \emph{within} one component
score, so it is invariant to every strictly increasing reparameterization
$s_F\mapsto g(s_F)$. A multi-component selector also consumes the scale of $s_F$
relative to the other heads. The next result characterizes exactly how much of the
selector's decision the audit leaves open.

\begin{proposition}[Selection ambiguity is the Pareto set]
\label{prop:interface_separation}
Fix a pool with component scores $(s_F,s_M)$ and distinct score pairs (duplicates
are merged; Appendix~\ref{app:theory}). As $g$ ranges over strictly
increasing maps---all of which leave every ROC/AUC of $s_F$ unchanged---the set of
top-1 choices of $g(s_F)+s_M$ is exactly the Pareto set $\operatorname{Par}$ of the
pool in $(s_F,s_M)$. Over positive rescalings $g(t)=ct$ it is the subset of
$\operatorname{Par}$ on the upper convex hull. Hence an audit statistic invariant to
monotone reparameterization determines the composite decision iff
$|\operatorname{Par}|=1$, and determines whether that decision is on target iff all
Pareto candidates share one oracle label.
\end{proposition}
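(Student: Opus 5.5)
The plan is to prove the two inclusions for monotone transforms separately, then handle linear rescalings, and finally read off the identification statements. Throughout, write the pool as $\{x_1,\ldots,x_n\}$ with distinct pairs $(f_i,m_i)=(s_F(x_i),s_M(x_i))$. Let $\operatorname{Par}$ be the set of candidates not weakly dominated by any other, where dominated means $f_j\ge f_i$, $m_j\ge m_i$, with one inequality strict.

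\emph{Only Pareto candidates can win.} If $x_j$ dominates $x_i$, then for every strictly increasing $g$ we have $g(f_j)\ge g(f_i)$ and $m_j\ge m_i$, with one inequality strict. Hence $g(f_j)+m_j>g(f_i)+m_i$, and $x_i$ is never top-1. This direction is one line.

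\emph{Every Pareto candidate can win.} This is the constructive direction and the main obstacle. Fix $p\in\operatorname{Par}$. Because the pairs are distinct and $p$ is undominated, every $i\ne p$ with $f_i\ge f_p$ has $m_i<m_p$. I would build a step-plus-small-slope transform: $g(t)=\varepsilon t$ for $t<f_p$ and $g(t)=K+\varepsilon t$ for $t\ge f_p$.
\begin{itemize}
\item The jump $K$ pushes every candidate with $f_i<f_p$ below $p$. It suffices to take $K$ larger than the range $\mu$ of $m$ plus $\varepsilon$ times the range $\rho$ of $f$.
\item The small slope $\varepsilon$ makes $s_M$ decide among candidates with $f_i\ge f_p$. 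Choose $\varepsilon\rho<\Delta:=\min\{m_p-m_i: f_i\ge f_p,\ i\ne p\}>0$.
\end{itemize}
Checking these two inequalities is routine. The delicate points are that $\Delta>0$ genuinely requires distinctness and undominance, and that the convention $\Delta=1$ applies if no other candidate has $f_i\ge f_p$.

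\emph{Linear rescalings.} For $g(t)=ct$ with $c>0$, the top-1 choice maximizes the linear functional $(c,1)\cdot(f,m)$ over a finite point set. The maximizers, over all directions in the open positive quadrant, are exactly the exposed points of the convex hull in those directions. By the first step these are Pareto points, namely those on the upper-right hull.

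\emph{Identification statements.} Any ROC or AUC statistic of $s_F$ is invariant under strictly increasing $g$, so it is constant on the class $\{g(s_F)\}$. By the characterization above, the composite decision ranges over all of $\operatorname{Par}$ as $g$ varies. The decision is therefore determined by the audit iff $|\operatorname{Par}|=1$. Whether it is on target is determined iff the oracle label is constant on $\operatorname{Par}$.
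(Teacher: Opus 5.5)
Your proposal is correct and matches the paper's proof step for step. It has the same one-line dominance argument and the same step-plus-small-slope transform $g$ with $K>\mu+\varepsilon\rho$, $\varepsilon\rho<\Delta$ and the $\Delta=1$ convention. It also shares the exposed-hull argument for $g(t)=ct$ and the observation that a monotone-invariant audit is constant on $\{g(s_F)\}$ while the decision ranges over all of $\operatorname{Par}$.
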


The proof is constructive (Appendix~\ref{app:theory}): a dominated candidate loses
under every increasing $g$, and each Pareto candidate wins under a $g$ that jumps at its
own $s_F$. This
is the classical correspondence between Pareto optimality and monotone scalarization
\citep{miettinen1999nonlinear}, read here as a statement about what an audit leaves open.
Its size is predictable.

\begin{proposition}[Ambiguity grows with pool size]
\label{prop:ambiguity_K}
If the $K$ candidates' score pairs are i.i.d.\ with continuous, independent $s_F$ and
$s_M$, then $|\operatorname{Par}|$ is a sum of independent Bernoulli$(1/i)$ variables,
$i=1,\ldots,K$. Hence $\mathbb E|\operatorname{Par}|=H_K=\ln K+0.577+o(1)$ and
$\Pr[|\operatorname{Par}|=1]=1/K$. If the heads are comonotone, $|\operatorname{Par}|=1$.
\end{proposition}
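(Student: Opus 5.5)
The plan is to reduce the Pareto count to a count of records in an i.i.d.\ sequence. Since $s_F$ is continuous and the pairs are i.i.d., the $s_F$ values are almost surely distinct. I will sort the $K$ candidates by decreasing $s_F$ and write $m_1,\dots,m_K$ for their $s_M$ values in that order. Candidate $i$ in this order can only be weakly dominated by a candidate with larger $s_F$, which is one of those at positions $1,\dots,i-1$. It is dominated exactly when one of those has $s_M$ at least $m_i$. Because $s_M$ is continuous, ties have probability zero, so candidate $i$ is Pareto iff $m_i>\max_{j<i}m_j$, that is, iff $m_i$ is an upper record of the sequence. The first candidate is always a record. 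Hence $|\operatorname{Par}|=\sum_{i=1}^K I_i$, where $I_i$ is the indicator that position $i$ is a record.

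The step I expect to need the most care is showing that the reordered sequence $(m_1,\dots,m_K)$ is itself i.i.d.\ with the marginal law of $s_M$. I would argue this by conditioning on the full vector of $s_F$ values. Under independence of $s_F$ and $s_M$ within each pair, and independence across pairs, the $s_M$ coordinates are i.i.d.\ and independent of all $s_F$ coordinates. The sorting permutation is a measurable function of the $s_F$ values alone, so it is independent of the $s_M$ vector. Permuting an i.i.d.\ vector by an independent permutation leaves its joint law unchanged.

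With that in hand, I will invoke R\'enyi's record theorem, citing \citet{bentley1978maxima}. For an i.i.d.\ continuous sequence, the relative ranks are uniformly distributed over all permutations. This gives $\Pr[I_i=1]=1/i$, and the indicators $I_1,\dots,I_K$ are mutually independent. For a self-contained version, I would note that the rank of $m_i$ among $m_1,\dots,m_i$ is uniform on $\{1,\dots,i\}$ and independent of the relative order of $m_1,\dots,m_{i-1}$. This follows from exchangeability and the uniformity of the random permutation.

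Linearity of expectation then gives $\mathbb E|\operatorname{Par}|=H_K=\ln K+\gamma+o(1)$, with $\gamma\approx0.577$ the Euler--Mascheroni constant. For the singleton probability, $|\operatorname{Par}|=1$ iff no $i\ge2$ is a record. This is the event that $m_1$ is the maximum of all $K$ values, which has probability $1/K$ by exchangeability; equivalently, it is $\prod_{i\ge2}(1-1/i)=1/K$.

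Finally, the comonotone case is direct. Suppose $s_M=\psi(s_F)$ with $\psi$ increasing, or more generally the two orderings coincide on the pool. Then the top-$s_F$ candidate also has the largest $s_M$, so it dominates every other candidate, and the distinct-pairs assumption makes this domination strict. Hence $\operatorname{Par}$ is that single candidate.
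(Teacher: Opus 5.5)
Your proposal is correct and follows essentially the same route as the paper: sort by decreasing $s_F$, identify Pareto candidates with upper records of the reordered $s_M$ sequence, note that independence makes that sequence i.i.d., and invoke R\'enyi's record theorem to get independent Bernoulli$(1/i)$ indicators. Your conditioning argument, that the sorting permutation is independent of the $s_M$ vector, and your self-contained relative-rank justification spell out steps the paper states in a single line.
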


Sorting candidates by decreasing $s_F$, a candidate is Pareto iff its $s_M$ is a running
maximum, which happens with probability $1/i$ independently across $i$
\citep{bentley1978maxima}. Under independent heads the audit therefore determines the
best-of-$K$ decision with probability only $1/K$, which vanishes as test-time compute
grows; in our pools positively dependent heads slow this growth without stopping it (Figure~\ref{fig:v23}a).

\begin{figure}[t]
\centering
\includegraphics[width=0.86\linewidth]{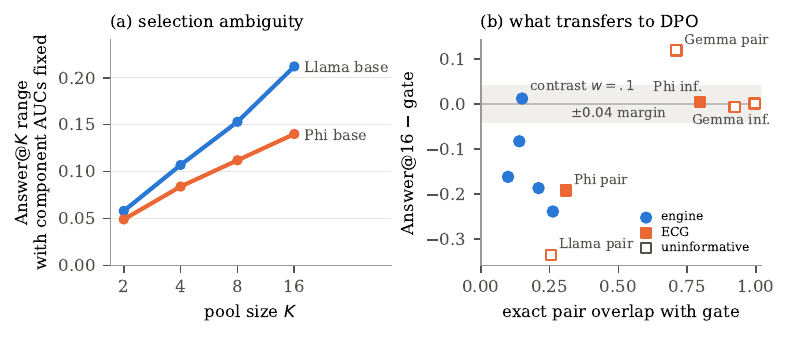}
\caption{\textbf{(a)} With every component AUC fixed, the range of Answer@$K$ reachable
by audit-equivalent composites on the ECG base test pools grows with $K$
(Proposition~\ref{prop:ambiguity_K}; $20$ subsamples per pool). \textbf{(b)} Endpoint
minus hard gate against exact pair overlap for all eleven trained scalar rules (engine
1.5B and ECG; three seeds each; common evaluator). Filled: families whose gate clears
the requirement-blind floor (engine, Phi); open: uninformative (Llama near the blind
pipeline, Gemma without supply); this post-hoc split uses only each family's gate arm,
never a rule's gap (Appendix~\ref{app:v24}). All three rules with overlap $\ge0.79$ match
the gate within $0.007$ (two in uninformative families); five of six informative
low-overlap rules miss by $\ge0.08$, the exception being the contrast-matched $w=0.1$.}
\label{fig:v23}
\end{figure}

\paragraph{How large is the ambiguity?}
On the ECG base test pools at $K=16$ the Pareto set has $2.3$ (Llama) and $2.2$ (Phi)
candidates, below the independence values $2.7$ and $2.5$ ($H_n$ over the $n$ distinct
score pairs) that a within-pool permutation of $s_M$ restores. It mixes on- and off-target candidates in $21\%$ and $14\%$
of pools, so with every component AUC fixed Answer@16 ranges over $[0.33,0.54]$ (Llama)
and $[0.46,0.60]$ (Phi). These are outer bounds because one $g$ must serve all pools, yet
a common rescaling $c\,s_F+s_M$ already moves Llama's Answer@16 from $0.500$ ($c=0.1$)
to $0.392$ ($c=10$) with supply unchanged. A single common weight chosen on
validation still lands within $0.02$ (Llama) and $0.004$ (Phi) of the best common weight
on test (Appendix~\ref{app:v24}): the ambiguity is resolved by validating the consumer,
not by auditing the component.

\paragraph{The engine shows the saturated extreme.}
On matched 1.5B DPO pools, rescaling the feasibility logit by $c\in\{0.01,\ldots,100\}$
leaves its AUC unchanged but moves raw-sum best-of-16 from $0.544$ to $0.160$
(Appendix~\ref{app:selection_audit}); the unit sum follows the saturated logit on
$82\%$ of pools ($\eta=0.10$). Rules that reduce its influence---a validation-tuned weight, a soft
log-probability rule, threshold-then-mass---reach $\eta=0.88$--$0.92$ ($0.94$ at 7B;
Appendix~\ref{app:moved}).

\paragraph{Takeaway.}
Component AUC fixes the order within a score; the selector's decision is fixed only up
to the Pareto set, whose size and label mix at the deployed $K$ should be reported
alongside any component audit. A weight tuned for this interface is a \emph{selection weight};
Section~\ref{sec:train} shows why that does not make it a training weight.

\section{Training interface: a good selector need not build good DPO data}
\label{sec:train}

Once a score constructs preference data, DPO sees only the chosen/rejected pairs,
so selection quality and training-pair quality are different objectives even for one
scalar ranking rule.

\begin{proposition}[What reaches DPO]
\label{prop:ranking_geometry}\label{prop:pair_stability}
Fix pools, policy, reference model, $\beta$, and a deterministic rank-based pair
builder. \textbf{(i)} The empirical DPO objective and gradient depend on the scoring
rule only through the induced pair multiset; for $r_w=ws_F+s_M$ this multiset is
piecewise constant in $w$, changing only at crossing weights
$w_{ij}=-(s_{M,i}-s_{M,j})/(s_{F,i}-s_{F,j})$. \textbf{(ii)} Neither component-order
nor top-$k$ equivalence implies pair equivalence when the builder uses ranks outside
the top $k$. \textbf{(iii)} For uniform pair multisets $P,Q$ of equal size,
$d_{\rm TV}(P,Q)=1-\mathrm{overlap}(P,Q)$; if the preference log-ratio satisfies
$|d_\theta|\le R$ and per-pair gradients are bounded by $G$, then
$|L_P-L_Q|\le\beta R\,(1-\mathrm{overlap})$ and
$\|\nabla L_P-\nabla L_Q\|\le2G\,(1-\mathrm{overlap})$. \textbf{(iv)} If both objectives
are convex and $L$-smooth, full-batch gradient descent with step $\eta\le2/L$ from a
common initialization satisfies $\|\theta^P_T-\theta^Q_T\|\le2\eta GT\,(1-\mathrm{overlap})$;
with $L$-smoothness alone the factor $\eta T$ becomes $((1+\eta L)^T-1)/L$.
\end{proposition}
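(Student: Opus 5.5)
We prove the four parts in order. Each later part uses the earlier ones, and most of the content is already in Appendix~\ref{app:weight_geometry}.

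\textbf{Part (i).} The empirical DPO objective is the average of $\ell_\theta(z)=\log(1+\exp[-\beta d_\theta(z)])$ over the pair multiset. Here $d_\theta(z)$ depends only on the chosen and rejected candidates, the prompt, the policy and the reference model. The scalar score therefore enters only through which pairs the builder emits, and the same holds for the gradient. For $r_w=ws_F+s_M$, candidates $i,j$ swap order only when $w(s_{F,i}-s_{F,j})+(s_{M,i}-s_{M,j})=0$, that is, at $w_{ij}$. Ties in $s_F$ with distinct $s_M$ never swap. Between consecutive crossing weights every within-pool ranking is fixed. A deterministic rank-based builder is a function of the ranking, so the multiset is piecewise constant.

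\textbf{Part (ii).} This needs only a counterexample. I would take a pool of four or more candidates and two rules with the same $s_F$-order, for example $s_F$ versus $g(s_F)$ composed with different $s_M$ weights, or simply two rules that agree on the top-$k$ set. The rules should disagree on which candidates fall in the bottom ranks, so the top-versus-bottom pairs differ. The two-prompt table in Appendix~\ref{app:weight_geometry} can be adapted, or a four-candidate pool can be written down directly.

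\textbf{Part (iii).} For uniform measures on multisets of equal size $n$, total variation is $\tfrac12\sum_z|P(z)-Q(z)|=1-\sum_z\min(P(z),Q(z))$, which equals one minus the overlap coefficient counted with multiplicity. The loss bound follows because $\ell_\theta\in[\log(1+e^{-\beta R}),\log(1+e^{\beta R})]$, an interval of width exactly $\beta R$. The variational form of total variation then gives $|\mathbb E_P\ell-\mathbb E_Q\ell|\le \beta R\,d_{\rm TV}$. For the gradient, project onto an arbitrary unit vector $u$. The projected per-pair gradient lies in $[-G,G]$, a range of width $2G$, so the same argument gives $2G\,d_{\rm TV}$. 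Taking the supremum over $u$ finishes the bound.

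\textbf{Part (iv), the only part with real content.} Let $\delta_t=\|\theta^P_t-\theta^Q_t\|$ and $\Delta=\sup_\theta\|\nabla L_P-\nabla L_Q\|\le 2G(1-\mathrm{overlap})$. Adding and subtracting $\eta\nabla L_P(\theta^Q_t)$ gives
\[
\delta_{t+1}\le\|T_P(\theta^P_t)-T_P(\theta^Q_t)\|+\eta\Delta,
\]
where $T_P(\theta)=\theta-\eta\nabla L_P(\theta)$. For convex $L$-smooth objectives with $\eta\le2/L$, $T_P$ is non-expansive. This follows from co-coercivity of the gradient, as in Hardt et al. Hence $\delta_{t+1}\le\delta_t+\eta\Delta$, and from $\delta_0=0$ we get $\delta_T\le \eta T\Delta$. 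Under smoothness alone $T_P$ is $(1+\eta L)$-Lipschitz, and unrolling gives the geometric sum $\Delta((1+\eta L)^T-1)/L$.

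The main obstacle is justifying non-expansiveness and making sure the gradient bound from (iii) holds uniformly in $\theta$ along both trajectories. I would state it as a sup over $\theta$, which holds because the hypotheses $|d_\theta|\le R$ and $\|\nabla\ell\|\le G$ are assumed for all $\theta$.
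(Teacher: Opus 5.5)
Your proposal is correct and follows the paper's proof essentially step for step: the crossing-weight argument for (i); the width-$\beta R$ range of $\ell_\theta$, the variational form of total variation and a unit-vector projection for (iii); and, for (iv), adding and subtracting $\eta\nabla L_P(\theta^Q_t)$, then unrolling from $\delta_0=0$ using non-expansiveness of the convex $L$-smooth step or, under smoothness alone, its $(1+\eta L)$-Lipschitz bound. For (ii), which the paper also only asserts by example, write down explicit instances: a four-candidate pool ranked $a>b>c>d$ versus $a>b>d>c$ under a top-1-versus-bottom-1 builder, and two candidates $(s_F,s_M)=(1,0),(0,\tfrac12)$ whose composite order flips between $s_F+s_M$ and $\tfrac14 s_F+s_M$.
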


Together with Proposition~\ref{prop:interface_separation}, this gives a strict hierarchy:
composite order determines the pair multiset, which determines the DPO objective;
component order determines neither, and a good top-$k$ selector need not fix the pairs
(Appendix~\ref{app:weight_geometry} gives a two-candidate example). Exact pair overlap
therefore certifies closeness of the objective, of its gradient and, under (iv), of
short training trajectories; because the trajectory bound grows with the horizon
\citep{hardt2016train}, overlap certifies objectives, and endpoints only when it is near one. Mean property contrasts are only a projection of the pair distribution
with an unmeasured residual (Proposition~\ref{prop:contrast_gradient}).

\paragraph{Matched DPO exposes the mismatch.}
With the same base model, critic, train pool, $4\,000$ pairs and recipe, the hard-gate
policy reaches $0.311$ target/draw and $0.758$ Answer@16; unit-sum pairs reach $0.134$ and
$0.596$ (Table~\ref{tab:train}). A validation pool selects $w=0.005$ for top-1 inference,
yet training on its pairs reaches only $0.571$; registered predictions that it and a soft
log-probability rule would match the gate within $0.04$ failed. In pair space $w=0.005$
contrasts mass but barely feasibility, and the unit sum the opposite.

A pre-specified $w=0.1$ passes near the gate in this two-dimensional contrast space
(Figure~\ref{fig:pair_contrast}, Appendix~\ref{app:weight_geometry})
and reaches $0.312$ target/draw, $0.821$ supply, and $0.770$ Answer@16, versus
$0.311/0.807/0.758$ for the gate. It shares only $15\%$ of pairs with the gate, so the match is not near-duplicate data;
across the five engine rules a bottleneck contrast relative to the gate orders the
endpoints (Spearman with the endpoint $0.90$, versus $-0.60$ for overlap: at low overlap,
overlap does not order endpoints; descriptive), and a pre-specified
critic-only version selects $w=0.075$ without oracle labels (Appendices~\ref{app:weight_geometry},~\ref{app:pair_audit}).

\begin{table}[t]
\centering
\caption{Matched DPO endpoints under a common fixed gate evaluator. $\Delta F$ is
chosen-minus-rejected feasibility and $\Delta M$ is rejected-minus-chosen absolute
oracle log-mass error. ``Overlap'' is exact full-pair overlap with the gate. 7B arms
are in Appendix~\ref{app:new_campaign}.}
\label{tab:train}
\scriptsize
\setlength{\tabcolsep}{2.25pt}
\renewcommand{\arraystretch}{0.94}
\begin{adjustbox}{max width=\linewidth}
\begin{tabular}{lcccccc}
\toprule
pairs ranked by & $\Delta F$ & $\Delta M$ & overlap & target/draw & supply@16 & Answer@16 \\
\midrule
1.5B unit sum & .64 & .11 & .099 & $.134\pm.005$ & $.636\pm.037$ & $.596\pm.033$ \\
1.5B hard gate & .48 & .63 & 1.000 & $\mathbf{.311\pm.013}$ & $\mathbf{.807\pm.007}$ & $\mathbf{.758\pm.005}$ \\
1.5B tuned, $w=.005$ & .13 & .82 & .210 & $.258\pm.028$ & $.649\pm.023$ & $.571\pm.030$ \\
1.5B contrast, $w=.1$ & .48 & .52 & .150 & $\mathbf{.312\pm.021}$ & $\mathbf{.821\pm.017}$ & $\mathbf{.770\pm.023}$ \\
\bottomrule
\end{tabular}
\end{adjustbox}
\end{table}

\paragraph{Boundary case.}
In GSM8K the format-score spread collapses after saturation, sum and gate pairs carry
nearly identical target signal, and the trained policies agree ($0.647$ vs $0.642$
target/draw; Appendix~\ref{app:gsm}): the separation vanishes where the interfaces coincide.

\paragraph{Preregistered external stress test (ECG).}
Because Proposition~\ref{prop:pair_stability} constrains only full pair distributions,
we froze an external test before training: per family, validation selects $w_{\rm sel}$
and the gate-free variant of our engine criterion (normalize each critic contrast by
its grid maximum, maximize the smaller) selects $w_{\rm pair}$; each rule builds
$4\,000$ pairs from one frozen train pool; three seeds; one common evaluator; margin
$\pm0.04$. No seed was rerun or removed.

\begin{table}[t]
\centering
\caption{Preregistered ECG training stress test (Answer@16 under the common
evaluator; mean $\pm$ training-seed SD). Overlap is exact pair overlap with the gate's
$4\,000$ pairs. Last columns: the frozen primary contrast with its paired
patient-cluster bootstrap CI (conditional on the seeds) and a post-hoc CI that also
resamples seeds. The single requirement-blind pipeline scores $0.512$.}
\label{tab:ecg_train}
\scriptsize
\setlength{\tabcolsep}{2.4pt}
\begin{adjustbox}{max width=\linewidth}
\begin{tabular}{lccccccccc}
\toprule
family & $w_{\rm sel}$ & $w_{\rm pair}$ & overlap sel & overlap pair & inference & gate & pair & pair $-$ gate [frozen CI] & [seed+subject] \\
\midrule
Llama & .003 & 1.0 & .923 & .256 & $.500\pm.026$ & $.507\pm.017$ & $.171\pm.296$ & $-.336\;[-.380,-.288]$ & $[-.557,-.005]$ \\
Gemma & .001 & .1 & .995 & .711 & $.005\pm.004$ & $.004\pm.003$ & $.123\pm.122$ & $+.119\;[+.056,+.184]$ & $[-.002,+.266]$ \\
Phi & .03 & .3 & .796 & .310 & $.719\pm.005$ & $.715\pm.014$ & $.523\pm.018$ & $-.192\;[-.249,-.145]$ & $[-.254,-.141]$ \\
\bottomrule
\end{tabular}
\end{adjustbox}
\end{table}

\paragraph{Frozen outcome.}
The preregistered primary outcome is negative for our criterion: pair weighting is
gate-equivalent in no family (Table~\ref{tab:ecg_train}). In Gemma it even exceeds the gate
($+0.119$), but no Gemma arm approaches the requirement-blind pipeline (best seed $0.244$
vs $0.512$), so this is not evidence for the criterion. Gate and
inference weight differ by at most $0.007$ with $79.6$--$99.5\%$ shared pairs. With
training seeds resampled together with subjects, only Phi's pair deficit stays clear of
zero (Appendices~\ref{app:ecg},~\ref{app:v23}). Llama's $w_{\rm pair}=1$ pairs are
identical to those of an added unit-sum arm, and both fail to parse on two seeds, a shared collapse rather than a test of
Proposition~\ref{prop:ranking_geometry}.

\paragraph{What the endpoints mean (post-hoc).}
Llama arms that parse mostly emit the requirement-blind pipeline ($0.512$), and Gemma
has near-zero supply. In Phi, gate and inference policies keep 79--113 distinct
pipelines, whereas all pair-weight seeds collapse onto the blind program ($\ge97\%$ of
parsed outputs). The gate-free criterion fails because in ECG the gate sits at the
$M$-heavy end of the sweep, so normalizing by sweep maxima pushes toward $F$; a
gate-referenced criterion would select $w=0.003$--$0.01$ with $87$--$99\%$ gate overlap.

\paragraph{A trained check on CPU.}
We trained a log-linear DPO reranker on the frozen Llama and Phi pools, where DPO is
convex and Proposition~\ref{prop:pair_stability}(iv) applies exactly: gate and nine grid
weights, ten seeds each, with design and four predictions hashed before any run
(Appendix~\ref{app:v23}). All four held: the gate-referenced weight matches the gate
($+0.004$ $[-0.005,0.015]$; $-0.002$ $[-0.007,0.002]$); Phi's gate-free weight is not shown
equivalent and, like the LLM policies, shifts top-1 choices onto the blind program
($67\%$ vs $25\%$); across the grid $|$gap$|$ shrinks as overlap grows (Spearman of $|$gap$|$ on overlap
$-0.60$, $-0.93$); and parameter
distances respect the bound. The reranker has little headroom above the blind
program, so it corroborates direction and ordering, not LLM effect sizes. Where gate pairs can be built, training on
them is simpler than any gate-referenced scalar.

\paragraph{Takeaway.}
Inference scalarization is a decision rule; pre-DPO scalarization is a
\emph{data-construction operator}. The evidence is one-directional and thin
(Figure~\ref{fig:v23}b, Appendix~\ref{app:v24}): all three rules with $\ge79\%$ gate
overlap reached the gate's endpoint, consistent with Proposition~\ref{prop:pair_stability},
but only Phi's comes from a family whose gate is informative; low overlap left the
endpoint undetermined. A weak bottleneck contrast accompanied all four
informative misses, but Phi's pair weight also missed with balanced contrasts, a residual
that two contrast coordinates cannot see. In practice: when a tuned selection weight trains poorly, check
its pair overlap with the rule it should emulate; if overlap is low, neither overlap
nor mean contrasts certify it, and the reference rule's own pairs are the safe default.

\end{document}